\title{
  \Large
  \textbf{Online Multi-Agent Control with Adversarial Disturbances}
}
\author{Anas Barakat$^{1}$ \quad John Lazarsfeld$^{1}$ \quad Georgios Piliouras$^{1}$ \quad Antonios Varvitsiotis$^{1,2,3}$}
\date{$^{1}$Singapore University of Technology and Design\footnote{Engineering Systems and Design Pillar, Singapore University of Technology and Design.\\
Contact: \{anas\_barakat, john\_lazarsfeld,  georgios, antonios\}@sutd.edu.sg}\\
$^{2}$Centre for Quantum Technologies, National University of Singapore\\
$^{3}$Archimedes/Athena RC, Greece 
}
\begin{document}
\maketitle

\begin{abstract}
Online multi-agent control problems, where many agents pursue competing and time-varying objectives, are widespread in domains such as autonomous robotics, economics, and energy systems.
In these settings, robustness to adversarial disturbances is critical. 
In this paper, we study online control in multi-agent linear dynamical systems 
subject to such disturbances. 
In contrast to most prior work in multi-agent control, which typically assumes noiseless or stochastically perturbed dynamics, we consider an online setting where disturbances can be adversarial, and where each agent 
seeks to minimize its own sequence of convex losses. 
Under two feedback models, we analyze online gradient-based controllers with local policy updates. 
We prove per-agent regret bounds that are sublinear and near-optimal in the time horizon
and that highlight different scalings with the number of agents. 
When agents' objectives are aligned, we further show that the multi-agent control problem induces a time-varying potential game for which we derive equilibrium tracking guarantees. 
Together, our results take a first step in bridging online control with
online learning in games, establishing robust individual and collective performance guarantees
in dynamic continuous-state environments.
\end{abstract}

\section{Introduction}
\label{sec:intro}

From energy grids and financial markets to autonomous driving fleets and online platforms, modern systems increasingly rely on many agents making independent decisions. 
These systems often operate in dynamic and uncertain environments that are vulnerable to \textit{adversarial disturbances}. For instance, autonomous robots may suffer sensor failures or sudden disruptions from traffic and weather; financial markets may face adversarial price movements or shocks; and energy systems can be prone to demand spikes or strategic manipulation.
In such settings, interacting agents pursue competing, time-varying objectives that may shift adversarially over time.
Ensuring robustness in these environments requires online algorithms that adapt locally without relying on central coordination. Such algorithms are essential to ensure the safety, efficiency, and stability of large-scale multi-agent systems.

In this paper, we study online control in multi-agent linear dynamical systems subject to such adversarial disturbances.
Specifically, we consider systems evolving as
\begin{equation}
x_{t+1} = A x_t + B_1 u^1_t + \dots + B_N u^N_t + w_t \;,
\tag{LDS}
\label{eq:state-lds}
\end{equation}
where the global state  $x_t$ depends simultaneously on the controls~$(u_t^i)_{i \in \{1, \cdots, N\}}$ independently selected by $N$ agents, 
$A$ and $(B_i)_{i \in \{1, \cdots, N\}}$ are time-invariant transition matrices, 
and $w_t$ is an \textit{adversarial} perturbation. At each time step $t$, every agent~$i \in \{1, \cdots, N\}$ observes the state $x_t$, selects a control input~$u_t^i$ according to a policy~$\pi^i$ mapping states to controls, and subsequently incurs an individual time-varying cost~$c_t^i(x_t, u_t^i)$. 
In the \textit{absence of adversarial disturbances}, multi-agent control with \textit{quadratic} costs 
(linear quadratic games) is well-studied~\citep{basar-olsder98book,mazumdar-et-al20,hambly-et-al23}. 
Applications span diverse domains including energy markets, formation control \citep{aghajani-doustmohammadi15formation-control,han-et-al19,hosseinirad-et-al23lq-potential-games} and bioresource management \citep{mazalov-et-al17lq-potential-games}, and we expand on these examples in Appendix~\ref{app:examples}. However, most existing work on multi-agent control focuses on noiseless settings, or assumes Gaussian i.i.d. disturbances. 
Such assumptions are inadequate for modeling the \textit{adversarial} 
disturbances that are increasingly present in modern multi-agent systems and 
which motivate our work.

In this adversarial and nonstationary setting, 
the natural performance measure is \textit{individual regret}, which measures an agent's performance against a powerful class of counter-factual policies that have full knowledge of the future in hindsight. 
Formally, we define the individual regret of agent $i$ by
\begin{equation}
\reg^{T}_i(\mathcal{A}_i, \{u_t^{-i}\}, \Pi_i) 
= 
\sum_{t=0}^T c_t^i(x_t, u_t^i) 
- \underset{\pi^i \in \Pi_i}{\min} 
\sum_{t=0}^T c_t^i(x_t^{\pi^i}, u_t^{\pi^i})\,, 
\end{equation}
where $\mathcal{A}_i$ is the learning algorithm 
used by the $i$'th agent to select its control~$u_t^i$,
and $(x_t^{\pi^i}, u_t^{\pi^i})$ is the \textit{counterfactual} state-control pair had policy~$\pi^i$ been chosen by the agent
starting from time $t=0$, 
and where $\{u_t^{-i}\}$ are the fixed control
inputs of other agents.

Achieving sublinear regret is the cornerstone of online learning, as it guarantees that an agent can adapt effectively to adversarial costs and disturbances.
However, in a multi-agent system, this individual guarantee is only half the story. Because agents’ costs are coupled through the shared state dynamics, the collective pursuit of low regret creates a complex decentralized dynamic. A fundamental insight from online game theory is that when all players achieve no-regret, their joint behavior can stabilize toward a collective equilibrium~\citep{cesa2006prediction,Nisan2007}. Extending this powerful connection—from individual rationality to collective stability—to stateful, dynamical control systems is a major open challenge. This motivates our central question:
\begin{quote}
    \textit{
    Can we design decentralized online control algorithms for \eqref{eq:state-lds} with adversarial disturbances that guarantee both
    uniform sublinear regret for each agent  and stable equilibrium-tracking behavior for the system as a whole? 
}
\end{quote}

This question introduces significant challenges not present in single-agent online control: 
\begin{itemize}[
 leftmargin=*,
 leftmargin=*,
 rightmargin=1em,
 ]
\item \textbf{Decentralization:} Agents act locally without access to others’ policies, so robust controllers cannot be computed centrally and broadcasted.

\item \textbf{Scaling with number of agents:} 
The state coupling across all $N$ agents raises a key question: how do individual regret guarantees scale with the number of agents?  Is sublinear regret even achievable?

\item \textbf{Equilibrium behavior:}  When agents have aligned objectives, it is unclear whether the dynamics driven by decentralized regret minimization can lead the system to track a global equilibrium.
\end{itemize}

\subsection{Our Contributions} 
We provide an affirmative answer to our central question, establishing the first performance guarantees for online multi-agent control under adversarial disturbances. 
Our key results are: 

\noindent\textbf{Individual Regret with Limited Information.}
 In an independent learning setting, where agents only observe the state, we prove a per-agent regret bound of $\widetilde \calO(N^2 \sqrt{T})$ using an online gradient-based controller (Algorithm~\ref{algo:MAGPC}).
This result demonstrates robustness even with minimal feedback, while the quadratic dependence on 
$N$ quantifies a "price of decentralization" (Theorem~\ref{thm:local-gpc-regret-indep}). We also prove a matching lower bound of $\Omega(\sqrt{T})$, showing our time dependence is optimal (Theorem~\ref{thm:reg-lb}).\\

\noindent\textbf{Improved Regret with More Information.}
 In an aggregated control learning setting, where agents also observe the combined effect of others' actions, we improve the regret to $\widetilde \calO(N \sqrt{T})$~(Theorem~\ref{thm:local-gpc-regret-cp+}). With an additional Lipschitz assumption on the costs, we eliminate the dependence on 
$N$ entirely, achieving a near-optimal $\widetilde \calO(\sqrt{T})$ regret~(Theorem~\ref{thm:local-gpc-regret-cp+-lip}).\\

\noindent\textbf{Equilibrium Tracking.}
In a common interest setting (a time-varying potential game), we prove that our no-regret dynamics successfully tracks the game’s evolving Nash equilibria. The tracking error is bounded by the rate of change in the cost functions and disturbances, formally linking individual performance to collective stability (Theorem~\ref{thm:time-var-potential}).\\

Together, these results bridge online non-stochastic control and learning in games, laying a foundation for robust and stable learning in dynamic, multi-agent environments and opening many avenues for future work and cross-fertilization between these two communities.

\subsection{Related Work}

We give a brief discussion of related works and defer
more details to Appendix~\ref{app:extended-related-works}.\\

\noindent\textbf{Online non-stochastic control.} 
Our work builds on a recent and growing line of research focusing on the use of online learning techniques to address control problems with adversarially perturbed dynamical systems \citep{hardt2018gradient,abbasi2011regret,agarwal2019online,hazan2020nonstochastic,foster2020logarithmic,simchowitz2020improper,simchowitz2020making,gradu-et-al20onsc-bandit-feedback,ghai-et-al23ons-modelfree-rl,cai-et-al24performative-control,tsiamis-et-al24icml,golowich-et-al24oc-population}. 
On the one hand, when the dynamical system \eqref{eq:state-lds} involves only a single agent~(i.e., $N=1$), our setting collapses to (single-agent) \textit{online non-stochastic control}. This problem has been thoroughly studied over the past years, see e.g. \citet{hazan-singh22introduction} and the references therein. 
On the other hand, most of the works in this line of research are devoted to the control of linear dynamical systems influenced by a \textit{single} controller. We discuss a few exceptions in the next section.\\

\noindent\textbf{Multi-agent control.} There is extensive research at the interface of control and game theory, see e.g. \citet{marden-shamma18,chen-ren19survey-multi-agent-control} for surveys. 
An important body of this literature has focused on linear-quadratic games \citep{basar-olsder98book,mazalov-et-al17lq-potential-games,hosseinirad-et-al23lq-potential-games,zhang-et-al19zslq,bu-et-al19zslq,zhang-et-al21derivative-free,wu-et-al23zslq-cdc,uz-zaman-et-al24,mazumdar-et-al20,hambly-et-al23}.
Some of these works typically consider the same~\eqref{eq:state-lds} and assume quadratic costs for systems which are either deterministic ($w_t = 0$) or perturbed by a noise sequence~$\{w_t\}$ which is i.i.d. Gaussian. 
Classical approaches to design robust controllers in optimal control rely either on using probabilistic models for disturbances 
or adopting a (worst-case) ‘minimax’ perspective \citep{basar-bernhard08f-infinity}. 

A few recent works adopt an online learning approach for \textit{distributed} control: 
\citet{chang-shahrampour23distributed-online-control,chang-shahrampour2023distributed-online-LQR} study a distributed online control problem over a multi-agent network of $m$ identical linear systems,
where each agent seeks to compete with the best centralized control policy in hindsight.
This is fundamentally different from our setting, 
where we consider \textit{selfish strategic} agents influencing a
\textit{single} linear dynamical system, and where each 
agent attempts to minimize their own individual cost. 
\citet{ghai-et-al22regret-multi-agent-control} propose a reduction from any standard regret minimizing control method to a distributed algorithm implemented by several controllers,
which is distinct from our setting of multiple, strategically competing agents. 
Recently, \citet{golowich-et-al24oc-population} proposed an online control approach for population dynamics where states are distributions in the simplex. 
We rather focus on the case of a finite and discrete large number of agents 
and discuss the influence of the total number of agents on individual regret.\\ 

\noindent\textbf{Online convex optimization and online learning in time-varying games.} Our regret analysis uses tools from online learning with memory \citep{anava-hazan-mannor15online-learning-memory,kumar-dean-kleinberg23oco-unbounded-memory}.  
Some of our results relate to the active research area of online learning in time-varying games \citep{cardoso-et-al19,duvocelle-et-al23time-varying-games,mertikopoulos-staudigl21eq-tracking,fiez-et-al21periodic-zero-sum,zhang-et-al22time-var-zero-sum,anagnostides-et-al23no-regret-time-varying,feng-et-al23last-it-tv-zs,yan-et-al23time-var-strongly-monotone,meng-et-al24ppp-online,taha-et-al24,fujimoto-et-al24time-var-auctions,fujimoto-et-al25,crippa-et-al25}. 
However, these works do not address our multi-agent online control setting where time-varying costs depend on an underlying \eqref{eq:state-lds} with coupled state dynamics subject to adversarial disturbances.

\section{Problem Formulation: Multi-Agent Online Control}

In this section, we formally introduce the
multi-agent control setting over a finite time horizon~$T$.
The state process evolves as a linear dynamical system
\begin{equation}
x_{t+1} = A x_t + \sum\nolimits_{i=1}^N B_i u_t^i + w_t, \quad t= 0, \cdots, T-1\,, 
\tag{LDS}
\label{eq:state-lds}
\end{equation}
 where $x_t \in \mathbb{R}^d$ is the state of the system initialized at a given (possibly random) state $x_0$, $u_t^i \in \mathbb{R}^{k_i}$ is the control of agent $i \in [N] := \{1, \cdots, N\}$, $w_t \in \R^d$ is an arbitrary system disturbance and~$A \in \mathbb{R}^{d \times d}, B_i \in \mathbb{R}^{d \times k_i}$ are the system transition matrices defining the linear dynamical system. 

\subsection{Online setting and feedback models}

We consider the following online setting: at each time step $t$, all $N$ agents observe the state~$x_t$ of the system. 
Then, each agent $i \in [N]$ selects a control input
$u_t^i \in \R^{k_i}$ and incurs a loss $c_t^i(x_t, u_t^i)$, 
where $c^i_t : \R^d \times \R^{k_i} \to \R$ 
is an adversarially chosen cost function. 
Finally, the system transitions to the next state according to the dynamics~\eqref{eq:state-lds}. The goal of each agent~$i$ is to minimize their own cumulative cost over $T$ rounds. 

We assume that each agent~$i \in [N]$ knows the dynamics $(A, B_i)$. For each $i \in [N]$, the cost function~$c_t^i$ is only locally accessible to agent~$i$. The perturbation sequence~$\{w_t\}$ is a priori unknown to agents. 
Moreover, we distinguish between the following two \textit{information settings}:

\begin{protocol}[\textbf{Independent Learning}]
\label{cp0}
At each time step~$t$, agent $i \in [N]$ observes only the state $x_t$ (fully observable setting) and their own induced cost. 
In particular, agent $i$ has no access to the control inputs of other agents~$j \neq i\,.$ 
\end{protocol}

In the literature on multi-agent reinforcement learning,
Information Setting~\ref{cp0} is commonly 
referred to as the \textit{independent learning}
setting (see, e.g., \citet{daskalakis-et-al20,ozdaglar-sayin-zhang21,ding-et-al22,alatur-barakat-he24cdc}). 
We also consider a second setting where agents have access to more information about the other interacting agents in the system.
This additional information revealed to every agent at each time step is naturally motivated by~\eqref{eq:state-lds}. Formally:
\begin{protocol}[\textbf{Aggregated Control Learning}] 
\label{cp+}
At each time step~$t$, agent $i$ observes the state $x_t$ and their own induced cost,
as well as the aggregated feedback~$\sum_{j \neq i} B_j u_t^j$ that encodes 
information about other agents' control inputs.  
Each agent~$i$ knows the total number of agents $N$. 
\end{protocol}
This stronger information setting 
is analogous to the standard setting of full-information feedback (hindsight observability) in the literature of online learning in games. This setting allows a player to evaluate their loss for any counterfactual action. Similarly, in our setting, observing the state and aggregated control lets each agent reconstruct the disturbance and thus compute their counterfactual loss for any alternative control they could have individually chosen, given others’ actions. 

\subsection{Regret framework for multi-agent online control} 
\label{sec:regret-framework-multiagent-oc}

In this section, we give a more formal 
definition of our performance metric for multi-agent online control,
inspired from both single-agent online control and online learning in games.\\ 

\noindent\textbf{Individual policy regret.}
Since the system dynamics depend on unknown costs and possibly adversarial perturbations, determining an optimal controller a priori is not possible in general. Therefore, in contrast to classical and robust optimal control, we consider \textit{regret} as a performance measure, following the recent line of works on (single-agent) online non-stochastic control \citep{hazan2020nonstochastic}.  
For each agent $i \in [N]$, consider a benchmark policy class~$\Pi_i \subset \{ \pi^i: \mathcal{X} \to \mathcal{U}^i\}\,.$ 
Each agent~$i$ runs their online control algorithm~$\mathcal{A}_i$ to determine their control input~$u_t^i = \mathcal{A}_i(x_t)$, 
where~$x_t$ is the state of the system described by \eqref{eq:state-lds}.  
For any $T \geq H \geq 1$, 
we define the regret of agent~$i$ w.r.t. policy class~$\Pi_i$ 
when agent~$i$ runs algorithm~$\mathcal{A}_i$ and other agents use controls~$\{u_t^{-i}\}$  as follows: 
\begin{equation}
\label{eq:reg-def}
\reg^{H:T}_i(\mathcal{A}_i, \{u_t^{-i}\}, \Pi_i) = \underset{w_{1:T}: \|w_t\| \leq W}{\max} \left( \sum\nolimits_{t=H}^T c_t^i(x_t, u_t^i) - \underset{\pi^i \in \Pi_i}{\min} \sum\nolimits_{t=H}^T c_t^i(x_t^{\pi^i}, u_t^{\pi^i}) \right)\,, 
\end{equation}
where $W > 0$ and $x_t^{\pi^i}, u_t^{\pi^i}$ are the \textit{counterfactual state and controls} under the policy~$\pi^i$ for agent~$i$:
\begin{align}
    u_t^{\pi^i} = \pi^i(x_t^{\pi^i}),\quad 
    x_{t+1}^{\pi^i} = A x_t^{\pi^i} + B_i u_t^{\pi^i} + \sum\nolimits_{j \neq i} B_j u_t^j + w_t\,.
\end{align} 

The counterfactual state sequence corresponds to the state sequence that would be observed if agent $i$ were to unilaterally deviate to using policy $\pi^i$, instead of their online control algorithm $\calA_i$ (and where all other agents stick to their online control input sequence).  
Note that when $N=1$, expression~\eqref{eq:reg-def} recovers the regret definition 
for single-agent online control.\\

In this work, we consider two natural policy comparator classes, which we introduce as follows:\\

\noindent\textbf{Comparator policy class~1: Strongly stable linear controllers ($\Pi_i^{\text{lin}}$).} 
For agent $i$, a \textit{linear controller} is defined by a matrix $K_i \in \mathbb{R}^{k_i \times d}$ s.t. $u_t^i = - K_i x_t$. 
We say that a linear policy~$K_i$ is \textit{stable} 
if~$\rho(A - BK_i) < 1$ (where~$\rho(\cdot)$ denotes the spectral radius),
in which case the closed-loop state-feedback linear dynamical system is 
globally asymptotically stable. 
\textit{Strong stability} of a controller is a quantitative version of stability which allows 
for deriving non-asymptotic guarantees.

\begin{definition}[Strong stability, e.g. \citet{cohen-et-al18online-lq-control}]
\label{def:kappa-gamma-st-stab}
A linear policy $K$ is $(\kappa,\gamma)$-strongly stable (for $\kappa > 0$ and $0 < \gamma < 1$) for a linear dynamical system specified by~$(A,B)$ if $\|K\| \leq \kappa$ and if there exists matrices~$L, Q$ s.t. $A - BK = Q L Q^{-1}$  with $\|L\| \leq 1-\gamma,$ and $\|Q\| \cdot \|Q^{-1}\| \leq \kappa\,.$ 
\end{definition}

Note that strong stability implies stability, and any stable policy is strongly-stable for some~$(\kappa, \gamma)$.
A natural policy comparator class is that of \textit{strongly stable linear controllers}
$\Pi_i^{\text{lin}}$, parameterized by: 
\begin{equation}
\mathcal{K}_i := \left\{  K_i \in \mathbb{R}^{k_i \times d} :  K_i \,\, \text{is}\, (\kappa_i,\gamma_i)\text{-strongly stable for some}\, \kappa_i > 0, \gamma_i \in (0,1)  \right\}\,.
\end{equation}

\noindent\textbf{Comparator policy class~2: Disturbance Action Controller (DAC) policies ($\Pi_i^{\text{DAC}}$).} The state sequence induced by a linear controller is not a linear function of its parameters. As a consequence, the induced cost is non-convex in the control parameters in general, even if the cost function is convex in both the state and the control input (see, e.g., \citet{fazel-et-al18pg-lqr}). Following prior work in single-agent control, we consider 
\textit{Disturbance Action Controller} (DAC) policies.
The system state induced by such policies is \textit{linear} in the policy parameters and 
one can invoke tools from online convex optimization 
when the cost functions are convex in the state and control input. 
For a sequence of perturbations~$\{w_t\}$, a DAC policy~$\pi^i(M_i,K_i)$ for agent $i \in [N]$
is then specified by learnable matrix parameters $M_i = [M_i^{[0]}, M_i^{[1]}, \cdots, M_i^{[H-1]}]$ 
for a memory length~$H \geq 1$, with a fixed given stabilizing controller~${K_i}$. 
The policy~$\pi^i(M_i,K_i)$ selects action~$u_t^{i}$ at a state~$x_t$ as: 
\begin{equation}
u_t^i = - K_i x_t + \sum\nolimits_{p=1}^H M_i^{[p-1]} w_{t-p}\,.
\tag{DAC-$i$}
\label{eq:dac-i}
\end{equation}
Note that for $p < 0$ we let $w_p = 0$, 
and moreover, the perturbations~${w_t}$ are not observed by the learners
but rather computed online using the structure of \eqref{eq:state-lds} 
and the state observations (we discuss these points later). 
The policy can thus be implemented in an online fashion by agent~$i$, 
and we henceforth use the notation $M_{i,t} = [M_{i,t}^{[p]}]_{0 \leq p \leq H-1}$ 
to reference the parameters of player~$i$ at time $t$\,. 
For a fixed~$H$ and stabilizing controller $K_i$, 
let $\mathcal{M}_i
= \big\{ M_i = \{M_i^{[0]}, \cdots, M_i^{[H-1]}\}: \|M_i^{[p-1]}\| \leq 2 \kappa^2 (1-\gamma)^p, p = 1, \cdots, H \big\}
$
denote the set of all DAC policy parameters for agent $i$, 
where~$(\kappa, \gamma)$ are strong stability parameters of~$K_i$ (with $(\kappa, \gamma)= (\kappa_i, \gamma_i)$ under Assumption~\ref{as:strong-stab} in information setting~\ref{cp0} and $(\kappa, \gamma)= (\bar{\kappa}, \bar{\gamma})$ under Assumption~\ref{as:strong-stab-global} in information setting~\ref{cp+}).

\section{Individual Regret Guarantees}
\label{sec:regret}

In this section, we present our results on individual regret guarantees.
We analyze an \textit{Online Gradient Perturbation Controller} algorithm, where each agent independently updates its DAC policy parameters via online gradient descent (Algorithm~\ref{algo:MAGPC}). In the single-agent setting ($N=1$), this algorithm was introduced and analyzed by~\citet{agarwal2019online}. 
In our decentralized multi-agent setting,
the coupling of state dynamics across all agents induces new obstacles to implementing and analyzing this gradient-based approach. 
We elaborate first on the computational challenge:\\

\noindent\textbf{Memory.} The cost~$c_t^i(x_t, u_t^i)$ incurred by agent~$i \in [N]$ at time step~$t$ depends on the state~$x_t$ of the system, which itself depends on all past states and control inputs from $t= 0\,.$ 
However, to run the online gradient descent subroutine of 
Algorithm~\ref{algo:MAGPC}, agent $i$ must be able to evaluate 
its cost function~$c^t_i$ on counterfactual state-action pairs. Unlike the single-agent case, counterfactual evaluation here depends not only on the agent’s own past controls but also on the entire joint sequence of other agents’ controls. 
This dependence breaks the straightforward counterfactual construction of the single-agent setting and requires a new memory-based approximation tailored to the multi-agent coupling. 

Focusing on agent $i$'s perspective, suppose all other 
players use a given sequence of control inputs~$\{u_t^{-i}\}\,.$ 
Let~$x_t^{K_i}(M_i, u_t^{-i})$ denote the (counterfactual) state reached by the system if agent~$i$ were to  execute a \ref{eq:dac-i} policy~$\pi_i(M_i,K_i)$ with parameters~$M_i$ and fixed matrix~$K_i$ for all time steps from time zero. Evaluating the induced cost would require computations that scale linearly with time. Thus, for computational efficiency we endow agent~$i$ with a memory of 
length~$H$ that scales polylogarithmically with the time horizon~$T$ and that will be carefully tuned to obtain our results. 
We denote by~$y_{t}^{K_i}(M_i)$ the ideal state of the system that would have been reached if agent $i$ played the \ref{eq:dac-i} policy~$\pi^i(M_i,K_i)$ from time $t-H$ to~$t$, assuming that the state at time~$t-H$ is zero, and while other agents use the control sequence~$\{u_{t-H:t}^{-i}\}\,.$ The idealized action to be executed at time~$t$ at the state~$y_{t}^{K_i}(M_i)$ observed at time~$t$ is denoted by~$v_{t}^{i,K_i}(M_i) = - K_i y_{t}^{K_i}(M_i) + \sum_{p=1}^H M_i^{[p-1]} w_{t-p}\,.$ 
Let~$\ell_t^i(M_i) = c_t^i(y_t^{K_i}(M_i),v_{t}^{i,K_i}(M_i))$ be agent $i$'s idealized cost function evaluated at the idealized state and action pair. 
The latter constitutes the counterfactual convex loss sequence for 
agent $i$ that can be evaluated efficiently, as in Algorithm~\ref{algo:MAGPC}.\\ 

\noindent\textbf{Algorithm variants.} 
Depending on the information setting (Settings~\ref{cp0} and~\ref{cp+}), we define two variants of Algorithm~\ref{algo:MAGPC}, each described from the perspective of a fixed agent $i \in [N]$. These variants capture different levels of feedback and are essential for obtaining our regret guarantees. 

\begin{algorithm}[htb!]
\begin{algorithmic}[1] 
\STATE {Input:}  memory $H$, step size $\eta$, initialization $M_{i,1}^{[0:H-1]}\,.$
\STATE Compute a stabilizing linear controller $K_i$ knowing $(A,B_i)$.
\FOR{$t$ = $1 \ldots T$}
        \STATE Observe state $x_t$\,. 
        \STATE 
        \begin{centering}
        \begin{minipage}[t]{0.45\textwidth}
            {\color{blue}\texttt{/Update under Info.~Setting}~\ref{cp0}:}
            
            Compute 
            $\widetilde w_{t-1} = x_{t} - A x_{t-1} - B_i u_{t-1}^i\,.$ 
            
            Set 
            $u_{t}^i = - {K}_i x_{t} + \sum_{p=1}^{H} {M}_{i,t}^{[p]} \widetilde w_{t-p}\,.$
        \end{minipage}
        \vline 
        \hspace*{1em}
        \begin{minipage}[t]{0.49\textwidth}
            {\color{blue}\texttt{/Update under Info.~Setting}~\ref{cp+}:} \\        
            Observe $\sum_{j\neq i} B_j u_{t-1}^{j}\,.$ 

            Compute 
            $w_{t-1} = x_{t} - A x_{t-1} - \sum_{k=1}^N B_k u_{t-1}^k$\,.

            Set
            $u_t^i = - {K}_i x_t + \sum_{p=1}^{H} {M}_{i,t}^{[p]} w_{t-p}\,.$
        \end{minipage}
        \end{centering}

        \STATE 
        Record instantaneous cost $c_t^i(x_t, u_t^i)$
        
        \STATE
        Construct loss 
        $\ell^i_t(M_i) =
        c^i_t(y^{K_i}_t(M_i), v^{i, K_i}_t(M_i))$.          
       \STATE Update 
       ${M}_{i,t+1} = \Pi_{\calM_i}\Big[{M}_{i,t} -\eta \nabla \ell_t^i(M_{i,t})\Big].$
\ENDFOR
 \caption{
Online Gradient Perturbation Controller 
 Algorithm
 (for agent $i \in [N]$)}
 \label{algo:MAGPC}
\end{algorithmic}
\end{algorithm}

\noindent\textbf{Standing Assumptions.} 
Finally, before introducing our regret guarantees, we present our standing assumptions,
all standard in the recent literature on online non-stochastic control:

\begin{assumption}[Cost functions]
\label{as:convexity}
The following assumptions hold for every $i \in [N]$:
\begin{enumerate}[
    label=(\roman*), 
    leftmargin=*, 
    topsep=0em, 
    partopsep=0pt, 
    itemsep=0em]
\item The cost function $c_t^i: \mathcal{X} \times \mathcal{U}_i \to \mathbb{R}$ is convex w.r.t. both its arguments. 
\item \label{as:grad-bound} There exists $\beta, G>0$ s.t. for any~$D >0$ and every $(x,u^i) \in  \mathcal{X} \times \mathcal{U}_i$ s.t. $\|x\| \leq D, \|u^i\| \leq D,$ we have $|c_t^i(x,u)| \leq \beta D^2$ and $\|\nabla_x c_t^i(x,u^i)\|, \| \nabla_u c_t^i(x,u^i)\| \leq G D\,.$  
\end{enumerate} 
\end{assumption}

\begin{lemma}
\label{lem:convexity-loss-i}
Under Assumption~\ref{as:convexity}, the loss function~$\ell_t^i$ is convex w.r.t. $M_{i}$ for all $i \in [N].$
\end{lemma}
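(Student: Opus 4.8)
The plan is to exploit the defining structural property of \ref{eq:dac-i} policies: the idealized state $y_t^{K_i}(M_i)$ and idealized action $v_t^{i,K_i}(M_i)$ are both \emph{affine} in the parameter $M_i$. Granting this, the lemma follows at once, because $\ell_t^i(M_i) = c_t^i(y_t^{K_i}(M_i), v_t^{i,K_i}(M_i))$ is then the precomposition of the jointly convex cost $c_t^i$ (Assumption~\ref{as:convexity}(i)) with an affine map of $M_i$, and convexity is preserved under such precomposition.

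To establish the affine dependence, I would unroll the idealized dynamics. By definition, $y_t^{K_i}(M_i)$ is generated by running \eqref{eq:state-lds} from the zero state at time $t-H$, with agent~$i$ executing the \ref{eq:dac-i} control $u_s^i = -K_i y_s + \sum_{p=1}^H M_i^{[p-1]} w_{s-p}$ and with the other agents' controls $\{u_s^{-i}\}_{s=t-H}^{t}$ held fixed. Writing $\widetilde A_i := A - B_i K_i$, the closed-loop recursion reads
\begin{equation}
y_{s+1} = \widetilde A_i\, y_s + B_i \sum_{p=1}^H M_i^{[p-1]} w_{s-p} + \Big( \sum_{j \neq i} B_j u_s^j + w_s \Big)\,,
\end{equation}
where the bracketed exogenous term is \emph{constant} with respect to $M_i$. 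Here it is essential that the counterfactual convention freezes the realized control sequence of all other agents, so that $\{u_s^{-i}\}$ carries no dependence on $M_i$. Unrolling from $y_{t-H}=0$ expresses $y_t^{K_i}(M_i)$ as a sum of terms, each of which is either linear in $M_i$ (those carrying a factor $B_i M_i^{[p-1]} w_{s-p}$) or independent of $M_i$; hence $M_i \mapsto y_t^{K_i}(M_i)$ is affine.

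It then follows directly that $v_t^{i,K_i}(M_i) = -K_i y_t^{K_i}(M_i) + \sum_{p=1}^H M_i^{[p-1]} w_{t-p}$ is affine in $M_i$, being the sum of a linear image of the affine map $y_t^{K_i}(M_i)$ and a term linear in $M_i$. Thus the joint map $M_i \mapsto (y_t^{K_i}(M_i), v_t^{i,K_i}(M_i))$ is affine, and composing with the convex $c_t^i$ yields convexity of $\ell_t^i$. The only substantive step is the bookkeeping of the unrolled recursion; the multi-agent aspect enters solely through the fixed exogenous term $\sum_{j\neq i} B_j u_s^j$, which, being $M_i$-independent, leaves the affine structure intact. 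I expect this verification to be the main (and essentially only) obstacle.
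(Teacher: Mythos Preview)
Your proposal is correct and follows essentially the same approach as the paper: both argue that $y_t^{K_i}(M_i)$ and $v_t^{i,K_i}(M_i)$ are affine in $M_i$ (the paper by pointing to the transfer-matrix expressions, you by unrolling the closed-loop recursion explicitly), and then conclude via the preservation of convexity under affine precomposition. Your emphasis that the other agents' controls enter only as an $M_i$-independent exogenous term is exactly the right observation, and the paper's proof likewise notes the argument goes through whether those controls are arbitrary or themselves DAC-parameterized.
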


\begin{assumption}[Bounded disturbances]
\label{as:bounded-disturb}
There exists $W > 0$ s.t. for all $t \geq 0$, $\|w_t\| \leq W\,.$ 
\end{assumption}

\subsection{Information setting~\ref{cp0}: Independent learning}
\label{sec:agent-wise-regret-analysis}

Under Information Setting~\ref{cp0}, agents do not have access to other agents' control inputs. 
However, from the viewpoint of a given agent~$i$, we observe that \eqref{eq:state-lds} can be re-expressed as follows: 
\begin{equation}
x_{t+1} = A x_t + B_i u_t^i + \widetilde w_t\,, \quad \widetilde w_t = \sum\nolimits_{j \neq i} B_j u_t^j + w_t\,. 
\label{eq:lds-cp0}
\end{equation}
In this view, in Algorithm~\ref{algo:MAGPC}, we naturally propose 
that agent $i$ executes a \eqref{eq:dac-i} policy with 
disturbance sequence~$\widetilde w_t$. 
Given expression~\eqref{eq:lds-cp0}, note that $\widetilde w_t$ (unlike $w_t$)
can be calculated by agent $i$ at each time step since 
$\widetilde w_t = x_{t+1} - A x_t - B_i u_t^i$,
and this computation only involves information observed
under the information setting
(state observations and the agent's own control input).
Under this strategy, agent~$i$ thus faces a 
linear dynamical system \eqref{eq:lds-cp0} controlled by its own, single control inputs,
and for this we make a standard strong stability assumption adapted to the multi-agent setting:
\begin{assumption}[Agent-wise strong stability]
\label{as:strong-stab}
Each learner~$i \in [N]$ knows a linear controller~$K_i$ that is $(\kappa_i,\gamma_i)$-strongly stable for the linear dynamical system specified by~$(A,B_i)$.
\end{assumption}

Under this assumption, we present our first individual regret guarantees. 

\begin{restatable}[\textbf{Individual Regret in Setting~\ref{cp0}, Independent Learning}]{theorem}{thmlocal}
\label{thm:local-gpc-regret-indep}
Let Assumptions~\ref{as:convexity}, \ref{as:bounded-disturb} and~\ref{as:strong-stab} hold. 
Suppose there exists~$U > 0$ s.t. for all $t \geq 0, j \in [N], \|u_t^j\| \leq U$. 
If agent~$i \in [N]$ runs Algorithm~\ref{algo:MAGPC} under Setting~\ref{cp0} with \eqref{eq:dac-i} policy on perturbation sequence~$\{\widetilde w_t\}$ and step size $\eta = \Theta(1/(G \widetilde W \sqrt{T}))$, 
where $\widetilde W = W + (N-1) U(\max_{j} \|B_j\|)$, 
and with $H \ge \log(\kappa_i T)/\gamma_i$, 
then for any~$T \geq H+1$, 
we have $\reg^{H+1:T}_i(\mathcal{A}_i, \{u_t^{-i}\}, \Pi_i^{\text{lin}}) = \widetilde \calO(U^2N^2 \sqrt{T})$\footnote{
    For readability, here and throughout, we use~$\widetilde \calO$ to 
    hide polynomial factors
    in natural problem parameters 
    and (poly)logarithmic factors in $T$ and $N$.
    We state the exact dependencies in the proofs of each result.
}. 
\end{restatable}   

The proof of Theorem~\ref{thm:local-gpc-regret-indep}
consists of applying the single-agent regret guarantee for 
gradient perturbation controllers~\citep[Theorem 5.1]{agarwal2019online} for each agent~$i$ on the new perturbation sequence~$\{\widetilde w_t\}$ in \eqref{eq:lds-cp0},
and we give the full details in Section~\ref{sec:proof-thm-local-gpc-regret-indep}.
While the theorem highlights the robustness of gradient perturbation controllers to adversarial disturbances in this setting,
the regret bound grows quadratically with both the number of agents~$N$ and the magnitude~$U$ of the control inputs. In the multi-agent setting,
this scaling reflects the price of decentralization and 
indicates how performance can degrade when the number of 
agents in the system grows large.\\

\noindent\textbf{Regret lower bound.}
In light of the regret guarantee of 
Theorem~\ref{thm:local-gpc-regret-indep},
it is also natural to ask whether the $\sqrt{T}$ dependence on the time horizon 
can be improved. 
In general, we prove that the answer is no. In particular,
for any agent $i \in [N]$, we establish the 
following $\Omega(\sqrt{T})$ lower bound against the class 
of linear controllers that holds independently 
of the agent's algorithm $\calA_i$: 

\begin{restatable}{theorem}{thmreglb}
    \label{thm:reg-lb}
    For any agent $i \in [N]$, 
    there exists an instance of 
    ~\eqref{eq:state-lds} and 
    cost functions $\{c^i_t\}$ such that, 
    for any algorithm $\calA_i$, sequence $\{u^{-i}_t\}$, 
    and $T \ge 1$:
    $
       \reg_i^T(\calA_i, \{u^{-i}_t\}, \Pi_i^{\textnormal{lin}}) = \Omega(\sqrt{T}) 
    $.
\end{restatable}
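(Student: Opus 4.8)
The plan is to reduce the statement to the classical $\Omega(\sqrt{T})$ lower bound for online linear optimization (OLO) over a bounded interval. Since the matrices $(B_j)_{j\neq i}$ and the sequence $\{u_t^{-i}\}$ are at our disposal when constructing the instance, I would first set $B_j = 0$ and $u_t^j = 0$ for all $j \neq i$, so that the other agents have no effect on the dynamics and the claim collapses to exhibiting a single-agent online control instance of~\eqref{eq:state-lds} with $\Omega(\sqrt{T})$ regret against $\Pi_i^{\textnormal{lin}}$. It then suffices to take the simplest scalar system $d = k_i = 1$, $A = 0$, $B_i = 1$, for which $x_{t+1} = u_t^i + w_t$ and every gain $K_i$ with $|K_i| < 1$ is $(\kappa_i,\gamma_i)$-strongly stable for appropriate parameters, so that $\Pi_i^{\textnormal{lin}}$ is a genuine one-parameter comparator family.

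The central idea is that in this instance, committing to $u_t^i$ fixes the next state $x_{t+1}$ up to the adversarial disturbance $w_t$, so agent $i$ is effectively playing an online game in which it selects an action, the adversary perturbs it, and a convex loss is incurred. To obtain the $\sqrt{T}$ rate rather than the $\log T$ rate that strongly convex (e.g.\ quadratic) costs would force, I would use convex but \emph{non}-strongly-convex cost functions $\{c_t^i\}$ that are linear, hence Lipschitz, on the range of states reachable under bounded controls. Against such costs the induced cost is an affine function of the control sequence whose per-round slope is not known to agent $i$ at the moment it commits $u_t^i$; this is precisely an OLO instance over the interval of controls realizable by the comparator class.

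I would then invoke the standard probabilistic-method lower bound for OLO: letting the per-round slopes be random signs, any causal rule has expected cost $0$, whereas the best fixed comparator in hindsight gains $-\Theta(\sqrt{T})$ from the sign fluctuation $\mathbb{E}\bigl|\sum_{t}\pm 1\bigr| = \Theta(\sqrt{T})$. Because the regret in~\eqref{eq:reg-def} maximizes over $\{w_t\}$ with $\|w_t\|\le W$, the disturbances are exactly the degree of freedom through which this adversarial randomness can be injected: I would route the random signs through $w_t$ into the state, so that for every algorithm $\calA_i$ (reducing to deterministic ones by conditioning on internal randomness) the worst-case disturbance realizes the hard OLO sequence and $\max_{w_{1:T}}[\cdots]\ge \mathbb{E}_w[\cdots] = \Omega(\sqrt{T})$, yielding $\reg_i^T(\calA_i, \{u^{-i}_t\}, \Pi_i^{\textnormal{lin}}) = \Omega(\sqrt{T})$.

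The main obstacle I anticipate is aligning the state-feedback comparator class with the fixed-action benchmark of OLO. A linear controller does not play a fixed control but rather $u_t^{K_i} = -K_i x_t^{K_i}$ along its own counterfactual trajectory $x_{t+1}^{K_i} = (A - B_iK_i)x_t^{K_i} + w_t$, which depends on $K_i$ nonlinearly. I therefore need to arrange the disturbances and costs so that (i) the controls realized by the strongly stable gains sweep a fixed interval containing the OLO comparator set (for instance, so that after a short $O(\log T)$-length transient the feedback controls are essentially constant), and (ii) the best such gain captures exactly the $\Theta(\sqrt{T})$ hindsight advantage while remaining bounded. A second subtlety is the sign of the bound: since the online algorithm may use arbitrary, not merely linear-feedback, controls, I must verify that ignorance of the next slope costs it strictly more than the feedback restriction saves the comparator, so that the regret is genuinely $\Omega(\sqrt{T})$ rather than nonpositive. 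I expect these points, together with keeping the cost in the admissible convex class, to be where the construction has to be tuned most carefully.
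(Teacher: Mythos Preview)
Your overall strategy---zero out the other agents via $B_j=0$, take a scalar system with $A=0$, use linear (non-strongly-convex) costs, and invoke an OLO-style random-sign lower bound via the probabilistic method---matches the paper's. The paper takes $A=0$, $B_i=\tfrac12$, $B_j=0$ for $j\neq i$, and costs $c_t^i(x,u)=u(b_t-\tfrac12)+\tfrac12$ with $b_t$ i.i.d.\ Bernoulli$(1/2)$; crucially it sets $w_t\equiv 0$.

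The substantive divergence is \emph{where} you inject the adversarial randomness: you propose to route the random signs through the disturbances $w_t$, whereas the paper routes them through the cost functions $c_t^i$. This is a genuine gap in your plan. In the protocol the agent observes $x_t$, \emph{then} commits $u_t^i$, \emph{then} $c_t^i$ is revealed, \emph{then} $w_t$ acts to produce $x_{t+1}$. Hence at the moment $u_t^i$ is chosen, all past disturbances $w_0,\dots,w_{t-1}$ are already visible through $x_t$, and $w_t$ does not enter the round-$t$ cost $c_t^i(x_t,u_t^i)$ at all. If you try to make $w_{t-1}$ play the role of an unknown slope for the round-$t$ decision---e.g.\ via $c(x,u)=x\cdot u$ or $c(x,u)=f(x)$---the agent sees $x_t$ before committing and can react optimally, so the OLO information structure collapses; a linear time-invariant state cost just adds the same $\sum_t w_t$ to both the algorithm and every comparator and contributes nothing to regret. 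The only quantity guaranteed to be hidden when $u_t^i$ is committed is $c_t^i$ itself, which is precisely what the paper exploits by placing the Bernoulli sign inside the cost.

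The obstacles you flag (aligning the linear-feedback comparator with the fixed-action OLO benchmark, and the sign of the bound) are real and the paper does have to address them, but the more basic issue is the timing one above: with $w_t$ as your only source of randomness and fixed linear costs, you do not obtain an unknown-slope-per-round structure, and the $\Omega(\sqrt{T})$ argument will not go through as sketched. The fix is exactly the paper's: keep $w_t=0$ and put the random sign in $c_t^i$.
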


To prove the theorem, we construct a scalar-valued instance of
\eqref{eq:state-lds} and a hard sequence of cost functions $\{c^i_t\}$
inspired by lower bounds for (single-agent)
online linear optimization (see, e.g.,~\citet{arora2012multiplicative}). 
Importantly, we note that such $\Omega(\sqrt{T})$ lower bounds
from online learning cannot be directly applied, 
as the incurred cost of the agent
and the incurred cost of a comparator policy 
depend on different state evolution sequences. 
However, Theorem~\ref{thm:reg-lb} implies that,
due to the (possibly adversarially) time-varying nature of 
the cost sequence $\{c^i_t\}$, the individual regret of an 
agent in the present setting must in general have the same
dependence on $T$ as in adversarial online learning. 
The proof is developed in Section~\ref{app:lower-bound}.

\subsection{Information setting~\ref{cp+}: Aggregated control learning} 

While the lower bound of Theorem~\ref{thm:reg-lb} implies
that a $\sqrt{T}$ dependence can not, in general, be improved upon, 
the regret in Theorem~\ref{thm:local-gpc-regret-indep} under Setting~\ref{cp0} 
scales \textit{quadratically} with the number of agents.
In this section, we consider Information Setting~\ref{cp+} and analyze the case in which all agents run DAC policies. Under a global assumption on the resulting dynamical system, we prove that we can guarantee an
individual regret bound with an improved dependence on the total number of  agents~$N$. We first make our global assumption which shall replace Assumption~\ref{as:strong-stab} in  this section. 

\begin{assumption}[Global strong stability]
\label{as:strong-stab-global} 
Each learner~$i \in [N]$ knows a linear controller~$K_i$ such that $(K_1, \cdots, K_N)^T$ is $(\bar{\kappa},\bar{\gamma})$-strongly stable for the LDS~$(A, [B_1, \cdots, B_N])$.
\end{assumption}

Assumption~\ref{as:strong-stab-global} is a natural global assumption which is relevant when each agent~$i$ executes a \eqref{eq:dac-i} policy (with  matrix $K_i$). Indeed, observe that the system state evolution of~\eqref{eq:state-lds} in the absence of disturbances, and when all players use their linear controllers, can be written as $x_{t+1} = A x_t - [B_1, \cdots, B_N](K_1, \cdots, K_N)^T x_t$.  Each agent~$i$ has access to the global parameters~$\bar{\kappa}, \bar{\gamma}$ which can be centrally precomputed before each agent runs their Algorithm~\ref{algo:MAGPC} independently. Recall that the matrices~$K_i$ are not learning parameters and need to be precomputed even in the independent learning setting.
Only the matrix parameters~$M_i$ of \eqref{eq:dac-i} policies are learned by the agents. 

Under Setting~\ref{cp+}, all agents can compute the original disturbance~$w_t$ at each time step (instead of~$(\tilde{w}_t)$ as in Theorem~\ref{thm:local-gpc-regret-indep}). 
However, note that at every timestep $t$, each agent updates their own policy parameters independently and locally in an uncoupled fashion, without access to other agent’s policy parameters at that round. After acting, each agent first incurs the loss according to their individual cost function, and \textit{then} observes the aggregated feedback. This feedback is used to inform their next policy parameter update at round $t+1$. 

Our next result shows that when agent~$i$ runs Algorithm~\ref{algo:MAGPC} with (a) a conservative stepsize scaled by $N$ and (b) a larger memory which depends logarithmically on~$N$ (compared to Theorem~\ref{thm:local-gpc-regret-indep}), they guarantee a regret 
w.r.t. the DAC policy class scaling only \textit{linearly} in $N$ 
(not quadratically). 
This result is also robust to other agents' strategies (as they can execute arbitrary \eqref{eq:dac-i} policies).

\begin{restatable}[\textbf{Individual Regret in Setting~\ref{cp+}}]{theorem}{thmglobal}
\label{thm:local-gpc-regret-cp+}
Let Assumptions~\ref{as:convexity}, \ref{as:bounded-disturb}, \ref{as:strong-stab-global} hold. Then if agent~$i \in [N]$ runs Algorithm~\ref{algo:MAGPC} under Setting~\ref{cp+} with a \eqref{eq:dac-i} policy on perturbation sequence~$\{w_t\}$, step size $\eta = \Theta(1/N\sqrt{T})$,
and with~$H \geq \log(2\bar{\kappa}N^2\sqrt{T})/\bar{\gamma}$, and when \textbf{all} other agents use a \eqref{eq:dac-i} policy with perturbation sequence~$(w_t)$, then for any~$T \geq H+1$: 
$\reg^{H+1:T}_i(\mathcal{A}_i, \{u_t^{-i}\}, \Pi_i^{\text{DAC}}) 
= \widetilde \calO(N\sqrt{T})$. 
\end{restatable}

\paragraph{Proof Overview.}
To prove the theorem, our analysis relies on a regret decomposition with two main terms: a counterfactual state-control error due to the use of a loss with limited memory~$H$, and a regret term induced by  the online gradient descent 
component of Algorithm~\ref{algo:MAGPC}.
In summary, the main technical challenges we overcome are two-fold: first,
states may grow unbounded with an undesirable scaling in $N$, and thus
we control their magnitude by studying the state evolution 
when all agents use DAC policies (using Assumption~\ref{as:strong-stab-global}),
and while tracking the dependence on~$N$. 
Second, we control both terms of the regret decomposition by carefully selecting the memory~$H$,
and with an adequate step size~$\eta$ (optimal in terms of~$N$).
We present the full proof details in Appendix~\ref{sec:proof-thm-local-gpc-regret-cp+}.

We also remark that the linear dependence on $N$ in the regret bound is enabled by global stability (Assumption~\ref{as:strong-stab-global}). By contrast, if only individual stability (Assumption~\ref{as:strong-stab}) is assumed, even when agents can access aggregated control information, the dependence on $N$ deteriorates (see Appendix~\ref{appx:as-strong-stab-global} for a discussion). 
Moreover, under a stronger assumption on the cost functions (compared to Assumption~\ref{as:convexity}-\ref{as:grad-bound}), we further prove a sublinear regret for agent~$i$ 
that scales only \textit{polylogarithmically} in $N$. 

\begin{assumption}[Lipschitz costs]
\label{as:lip-costs}
There exists~$\bar{L} > 0$ s.t. for any agent $i \in [N]$ and for all state-control pairs $(x,u^i), (\tilde{x},\tilde{u}^i) \in  \mathcal{X} \times \mathcal{U}_i, |c_t^i(x,u^i) - c_t^i(\tilde{x},\tilde{u}^i)| \leq \bar{L} (\|x - \tilde{x}\| + \|u^i - \tilde{u}^i\|)$\,. 
\end{assumption}
Note here that the Lipschitz constant does not scale with the state and control input magnitude. 
Under this assumption, we further obtain the following improved regret guarantee
(proven in Appendix~\ref{sec:proof-thm-local-gpc-regret-cp+}):

\begin{restatable}{theorem}{thmgloballip}
\label{thm:local-gpc-regret-cp+-lip}
Under the setting of Theorem~\ref{thm:local-gpc-regret-cp+}, replace gradient boundedness in Assumption~\ref{as:convexity}~-\ref{as:grad-bound} by Assumption~\ref{as:lip-costs}. 
Set instead $\eta = \Theta(1/\sqrt{T})$ and~$H \geq \log(2\bar{\kappa}N\sqrt{T})/\bar{\gamma}$. Then we have for any~$T \geq H+1$: $\reg^{H+1:T}_i(\mathcal{A}_i, \{u_t^{-i}\}, \Pi_i^{\text{DAC}}) 
= \widetilde{\mathcal{O}}(\sqrt{T})$.
\end{restatable}

Note that using Assumption~\ref{as:lip-costs} in Theorem~\ref{thm:local-gpc-regret-indep} does not result in the same improved dependence on~$N$ as the regret will still scale with the magnitude of the modified disturbance $\widetilde{w}_t$, which is of order $N$. 
Finally, in Appendix~\ref{app:lower-bound:dac} we also 
show that the regret lower bound
of Theorem~\ref{thm:reg-lb} can be extended to hold
against the DAC comparator class when the linear controller
component is chosen adversarially. 
We state and prove this result formally in Theorem~\ref{thm:reg-lb-DAC}
in Section~\ref{app:lower-bound:dac}.

\section{Equilibrium Tracking in the Common Interest Setting}
\label{sec:time-var-pot-game}

In the previous section, we developed individual regret guarantees
when other agents execute linear or DAC control policies with possibly misaligned
or adversarially-chosen cost functions. 
In this section, we focus on the \textit{common interest setting},
where the objectives of the agents are aligned and all cost functions 
are identical (i.e., $c_t^i = c_t^j := c_t$ for any $i,j \in [N]$ for every $t$). 
Our goal is to establish global equilibrium guarantees when 
\textit{all} agents simultaneously and independently run Algorithm~\ref{algo:MAGPC}. 

Since the cost functions are time-varying (not only via the strategies of the different players), our multi-agent control problem can be seen as a \textit{time-varying game}. 
There have been considerable efforts endeavoring to extend the scope of traditional
game-theoretic results to the time-varying setting and this is an active research area (see the
related work in Section~\ref{sec:intro}). 
In particular, our results in this section are inspired from recent developments for time-varying, normal-form, finite potential games in \citet{anagnostides-et-al23no-regret-time-varying}. In such games, agents participate in a potential game at each time step. 
We observe that the common interest multi-agent control problem can be seen as a stateful, time-varying potential \textit{continuous} convex game where costs are functions of states driven by an underlying~\eqref{eq:state-lds} influenced by multiple controllers. At each time step, the utility of each player is given by their cost function, and their strategy is defined by their DAC policy parameters. 

Since our setting involves adversarial, time-varying costs depending on state dynamics influenced by adversarial (time-varying) disturbances, convergence to (static) Nash equilibria is irrelevant in general. Nevertheless, we establish equilibrium gap \textit{tracking} guarantees for our dynamic setting.
To state our result, we introduce notations for time-varying best responses and equilibrium gaps:
\begin{equation}
\text{BR}_i^{(t)}(M_{-i,t}) := \underset{M_i \in \mathcal{M}_i}{\max} \ell_t(M_t) - \ell_t(M_i, M_{-i,t}); \quad  \text{EQGAP}^{(t)}(M_t) := \underset{i \in [N]}{\max}\, \text{BR}_i^{(t)}(M_{-i,t})\,,
\end{equation}
where, as previously defined, $\ell_t^i(M_t) = \ell_t^i(M_{t-1-H:t}) = c_t^i(y_t^{K}(M_{t-1-H:t-1}),v_{t}^{i,K}(M_{t-1-H:t}))$  and $K = (K_1, \cdots, K_N)\,.$ 
Note that the equilibrium gap explicitly depends on time (as indicated by its superscript$^{(t)}$) due to the time dependence of the cost function and the disturbance sequence.
We now make regularity assumptions on the common cost function~$c_t$ 
which are standard in the analysis of gradient methods in both optimization and learning in games. 

\begin{assumption}[Uniform cost lower bound] 
\label{as:boundedness}
The cost function $c_t: \mathcal{X} \times \mathcal{U} \to \mathbb{R}$ is uniformly lower-bounded, i.e. there exists~$c_{\text{inf}} > 0$ s.t. for all $x \in \mathcal{X}, u \in \mathcal{U}, t \geq 1, c_t(x,u) \geq c_{\text{inf}} > -\infty\,.$ 
\end{assumption}

\begin{assumption}[Smoothness]
\label{as:smoothness}
There exists~$\zeta > 0$ s.t. the cost function $c_t: \mathcal{X} \times \mathcal{U} \to \mathbb{R}$ satisfies for every $t \geq 0$ and any $x, x' \in \mathcal{X}, u, u' \in \mathcal{U}$, 
\begin{equation}
\|\nabla_x c_t(x, u) - \nabla_x c_t(x', u')\| + \|\nabla_u c_t(x, u) - \nabla_u c_t(x', u')\| \leq \zeta  (\|x-x'\| + \|u - u'\|)\,.
\end{equation}
\end{assumption}

 Under these assumptions, when all agents run Algorithm~\ref{algo:MAGPC}, 
we bound the average equilibrium gap by the variation of both cost functions and disturbances. 

\begin{restatable}{theorem}{thmtimevarpotential}
\label{thm:time-var-potential}
Let Assumptions~\ref{as:convexity}, \ref{as:bounded-disturb}, \ref{as:strong-stab-global}, \ref{as:boundedness} and \ref{as:smoothness} hold. Then if each agent $i \in [N]$ runs Algorithm~\ref{algo:MAGPC} for $T$ steps with constant stepsize $\eta = 1/L$
(where $L$ is the smoothness constant in Lemma~\ref{lem:smoothness-l_t}), then
\begin{equation}
\frac 1T \sum_{t=1}^T \Big(\textnormal{EQGAP}^{(t)}(M_t)\Big)^2 = \mathcal{O}\left(\frac{\ell_1(M_1) - c_{\inf}}{T }  +  \frac 1T \sum_{t=1}^T \Delta_{c_t} + \frac 1T \sum_{t=1}^T \|w_{t+1} - w_{t}\|\right)\,, 
\label{eq:eqgap}
\end{equation}
where $\Delta_{c_t} := \max_{\|x\|, \|u\| \leq D} \{c_{t+1}(x,u) - c_t(x,u)\}$ for every $t$, the~$\mathcal{O}(\cdot)$ notation only hides polynomial dependence in the problem parameters~$N, H, W, \bar{\kappa}, \bar{\gamma}^{-1},\max_i \|B_i\|$ and $D$ depends polynomially on the same constants. All the constants are made explicit in the appendix.
\end{restatable}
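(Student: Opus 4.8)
The plan is to exploit the common-interest structure to reduce the analysis to tracking a single time-varying smooth potential under projected gradient descent. Since all agents share the same cost, they share the same idealized loss $\ell_t$, and the defining property of a potential game is that each agent's partial gradient $\nabla_{M_i}\ell_t^i$ coincides with the partial gradient $\nabla_{M_i}\ell_t$ of the common potential. Consequently, the simultaneous projected gradient steps of Algorithm~\ref{algo:MAGPC}, run independently by all $N$ agents over their own blocks $\mathcal{M}_i$, form exactly one step of (block) projected gradient descent applied to the single function $\ell_t$ over the product domain $\mathcal{M} = \prod_{i\in[N]} \mathcal{M}_i$. The first step is therefore to make this reduction precise (up to the with-memory discrepancy discussed below), so that the per-round update reads $M_{t+1} = \Pi_{\mathcal{M}}[M_t - \eta\nabla\ell_t(M_t)]$ with joint gradient mapping $\mathcal{G}_t := \tfrac1\eta (M_t - M_{t+1})$ and $i$-th block $g_{i,t}$.

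Next I would establish the one-step descent. Invoking the smoothness of $\ell_t$ in the joint parameter (Lemma~\ref{lem:smoothness-l_t}, with constant $L$ polynomial in $N, H, W, \bar\kappa, \bar\gamma^{-1}, \max_i\|B_i\|$) and the choice $\eta = 1/L$, the standard descent inequality for projected gradient descent gives $\ell_t(M_{t+1}) \leq \ell_t(M_t) - \tfrac\eta2\|\mathcal{G}_t\|^2$. I would then telescope this across $t=1,\dots,T$ after inserting the time-varying potential: writing $\ell_t(M_t) - \ell_t(M_{t+1}) = [\ell_t(M_t) - \ell_{t+1}(M_{t+1})] + [\ell_{t+1}(M_{t+1}) - \ell_t(M_{t+1})]$, the first bracket telescopes to $\ell_1(M_1) - \ell_{T+1}(M_{T+1}) \leq \ell_1(M_1) - c_{\inf}$ by Assumption~\ref{as:boundedness}, while the second bracket is the \emph{drift} of the potential, which must be bounded uniformly in $M$.

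Bounding the drift is the main obstacle. The difference $\ell_{t+1}(M) - \ell_t(M)$ arises from two sources: the change of the cost map $c_t \mapsto c_{t+1}$, and the one-step shift of the disturbance window from $w_{t-H:t-1}$ to $w_{t-H+1:t}$ that enters the idealized state–control pair $(y_t^{K}(M), v_t^{i,K}(M))$. I would first certify that these idealized pairs stay in a ball of radius $D$ (polynomial in the problem constants) by analyzing the closed-loop state evolution when all agents use DAC policies under global strong stability (Assumption~\ref{as:strong-stab-global}), tracking the dependence on $N$ exactly as in the proof of Theorem~\ref{thm:local-gpc-regret-cp+}. On this bounded domain, the first source is at most $\Delta_{c_t}$ by definition; for the second, I would use the explicit linear dependence of $y_t^K(M), v_t^{i,K}(M)$ on the disturbance window, whose coefficient matrices are geometrically bounded under Assumption~\ref{as:strong-stab-global}, together with the Lipschitz continuity of $c_t$ on $\{\|x\|,\|u\|\leq D\}$ (following from Assumption~\ref{as:smoothness} and the gradient bound), to bound the induced change by $C\sum_{p}\|w_{t-p}-w_{t-1-p}\|$. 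The delicate point is to extract precisely the first-order total variation: summing these windowed consecutive differences over $t$ collapses each term $\|w_{s+1}-w_s\|$ with multiplicity $O(H)$, giving $\sum_{t}\big(\ell_{t+1}(M_{t+1})-\ell_t(M_{t+1})\big) = \mathcal{O}\big(\sum_t\Delta_{c_t} + \sum_t\|w_{t+1}-w_t\|\big)$, with $H$ and $N$ absorbed into the hidden constant.

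Finally I would relate the equilibrium gap to the gradient mapping and conclude. For each agent $i$, convexity of $\ell_t$ in $M_i$ (Lemma~\ref{lem:convexity-loss-i}) gives $\ell_t(M_t) - \ell_t(M_i, M_{-i,t}) \leq \langle \nabla_{M_i}\ell_t(M_t), M_{i,t}-M_i\rangle$; combining the first-order optimality condition of the projection defining $M_{i,t+1}$ with the bounded diameter of $\mathcal{M}_i$ and the gradient bound of Assumption~\ref{as:convexity} yields $\text{BR}_i^{(t)}(M_{-i,t}) \leq C'\|g_{i,t}\| \leq C'\|\mathcal{G}_t\|$, and hence $\big(\text{EQGAP}^{(t)}(M_t)\big)^2 \leq (C')^2\|\mathcal{G}_t\|^2$. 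Summing the descent inequality, substituting the drift bound, dividing by $T$, and absorbing the constant $L$ (and $\eta = 1/L$) into the $\mathcal{O}(\cdot)$ then gives the claimed bound~\eqref{eq:eqgap}. Besides the drift bound, the remaining subtlety is the with-memory gap between the trajectory loss actually optimized by Algorithm~\ref{algo:MAGPC} and the snapshot potential $\ell_t$ used above; since consecutive iterates satisfy $\|M_{t+1}-M_t\| = \eta\|\mathcal{G}_t\|$, I would control this discrepancy through the smoothness of $\ell_t$ exactly as in the online-learning-with-memory arguments underlying Theorem~\ref{thm:local-gpc-regret-cp+}.
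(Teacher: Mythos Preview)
Your proposal is correct and follows essentially the same three-step architecture as the paper: (i) bound $\textnormal{EQGAP}^{(t)}$ by the block gradient-mapping norm via convexity and the projection first-order condition (the paper's Proposition~\ref{prop:eqgap-to-2ndpath}/\ref{prop:approx-nash-gap}); (ii) use $L$-smoothness with $\eta=1/L$ to obtain the descent inequality $\sum_t\|M_{t+1}-M_t\|^2\lesssim\eta\sum_t(\ell_t(M_t)-\ell_t(M_{t+1}))$ (Proposition~\ref{prop:second-order-path-length}); and (iii) telescope and bound the drift $\ell_{t+1}(M_{t+1})-\ell_t(M_{t+1})$ by splitting into the cost-change $\Delta_{c_t}$ and the disturbance-window shift, the latter handled via the transfer-matrix representation of $y_t^K,v_t^{i,K}$ (Proposition~\ref{prop:sum-delta-loss-bound}). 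One remark: your closing worry about a ``with-memory gap'' is unnecessary here, since Algorithm~\ref{algo:MAGPC} already takes gradients of the \emph{snapshot} loss $\ell_t^i(M_i)$ (all $H$ memory slots set to the same $M_i$), so the joint update is exactly $M_{t+1}=\Pi_{\mathcal{M}}[M_t-\eta\nabla\ell_t(M_t)]$ with no extra memory correction to account for.
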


In a static setting, with time-independent costs in the absence of disturbances ($w_t = 0$ or constant), Theorem~\ref{thm:time-var-potential}
translates into the existence of a time step~$t \leq T$ s.t. the joint DAC policy~$M_t$ is an $\epsilon$-approximate Nash equilibrium of the game induced by the loss functions~$\ell^i, i \in [N]$ after $T$ iterations (typically $T = \mathcal{O}(1/\epsilon^2)$ for a $\mathcal{O}(1/T)$ rate).  In this static case, the cumulative equilibrium gap is bounded by the 
initial cost optimality gap.
If both the cost variability term and the cumulative
variation in perturbations $\sum_{t=1}^T\|w_{t+1}-w_{t}\|$ are uniformly bounded by a constant, then the theorem results in a $\mathcal{O}(1/T)$ rate in terms of the average equilibrium gap squared. For example, this is clearly the case when the noise sequence $w_t$ converges towards a (not-necessarily vanishing) constant. If we only have $\sum_{t=1}^T \|w_{t+1}-w_{t}\| = o(T),$ then we still obtain a vanishing average equilibrium gap.

\paragraph{Proof Overview.}
To prove the theorem, we extend the approach of \citet{anagnostides-et-al23no-regret-time-varying}
(who considered time-varying \textit{(finite) normal-form} potential games) to
(a) cover \textit{(continuous) convex games} and (b) account for state dynamics and adversarial disturbances in addition to the time-varying costs in our multi-agent control setting. 
We give an overview and details of the full proof in Appendix~\ref{sec:proof-thm-time-var}.

\section{Conclusion and Future Work}
\label{sec:discussion}

This work initiates and makes progress on online multi-agent control in strategic environments subject to adversarial disturbances, taking a first step toward bridging  online 
control with  learning in games. 
In particular, we proved the first individual regret and
global equilibrium tracking guarantees in the online multi-agent control 
setting with adversarial disturbances and time-varying costs. 

Our results also open several directions for future research:
on the technical side, it is interesting to investigate whether tighter regret bounds can be obtained with respect to the number of agents or under structural assumptions such as time-invariant costs.
On the modeling side, important challenges include extending our analysis to settings with unknown or time-varying dynamics \citep{hazan2020nonstochastic,minasyan-et-al21unknowm-time-var,gradu-et-al23time-varying}
and to feedback-limited regimes \citep{yan-et-al23jmlr-onsc-partial-feedback},
where learners can only access partially observed states and partially informed bandit costs. 
A broader challenge is to design decentralized multi-agent controllers that remain robust under adversarial disturbances beyond \textit{linear} state dynamics. 
In conclusion, we view our work as a first step toward further advances at the interface of online control and learning in games in dynamical strategic environments.

\section*{Acknowledgements}
This work is supported by the MOE Tier 2 Grant (MOE-T2EP20223-0018),  the CQT++ Core Research Funding Grant (SUTD) (RS-NRCQT-00002), the National Research Foundation Singapore and DSO National Laboratories under the AI Singapore Programme (Award Number: AISG2-RP-2020-016), and partially by Project MIS 5154714 of the National Recovery and Resilience Plan, Greece 2.0, funded by the European Union under the NextGenerationEU Program. 

\bibliographystyle{plainnat}
\bibliography{references}

\newpage
\appendix

\tableofcontents

\section{Extended Related Work Discussion}
\label{app:extended-related-works}

\noindent\textbf{Online non-stochastic control.} 
Our work builds on a recent and growing line of research focusing on the use of online learning techniques to address control problems with adversarially perturbed dynamical systems \citep{hardt2018gradient,abbasi2011regret,agarwal2019online,hazan2020nonstochastic,foster2020logarithmic,simchowitz2020improper,simchowitz2020making,gradu-et-al20onsc-bandit-feedback,ghai-et-al23ons-modelfree-rl,martin-et-al24regret-oc}.
We refer the reader to a nice introduction to the topic in the recent monograph of \citet{hazan-singh22introduction} and the references therein for a survey. 
Recent follow-up works include studies on dynamic regret for online tracking \citep{tsiamis-et-al24icml}, performative control \citep{cai-et-al24performative-control}, online control in population dynamics \citep{golowich-et-al24oc-population,lu-et-al25pop-control-partial-obs}, simultaneous system identification and MPC with regret guarantees \citep{zhou-tzoumas24system-id-mpc}, online RL \citep{muehlebach-et-al25online-rl,ghai-et-al23ons-modelfree-rl},  partial feedback settings \citep{yan-et-al23jmlr-onsc-partial-feedback} and bandit settings \citep{sun-et-al24bandit-control} to name a few. 
Most of the works in this line of research are devoted to the control of linear dynamical systems influenced by a \textit{single} controller. We discuss a few exceptions in the next section.\\

\noindent\textbf{Multi-agent control.} The interface between game theory and control has given rise to a large body of work over the last decades to study settings involving multiple interacting controllers, see e.g. \citet{marden-shamma15gt-distrib-control,marden-shamma18,chen-ren19survey-multi-agent-control} for relevant surveys. Within the game-theoretic control literature, linear-quadratic (LQ) games is one of the canonical benchmark problems which has been studied in a variety of settings including LQ differential games \citep[Chap.~6]{basar-olsder98book}, LQ potential games \citep{mazalov-et-al17lq-potential-games,hosseinirad-et-al23lq-potential-games}, zero-sum LQ games \citep{zhang-et-al19zslq,bu-et-al19zslq,zhang-et-al21derivative-free,wu-et-al23zslq-cdc}, static two-player quadratic games \citep{calderone-oishi24} and general-sum LQ games \citep{uz-zaman-et-al24,mazumdar-et-al20,hambly-et-al23,chiu-et-al24,guan-et-al24policy-iter-lq-games}. Some of these works typically consider the same~\eqref{eq:state-lds} and assume quadratic costs for systems which are either deterministic ($w_t = 0$) or perturbed by a noise sequence~$\{w_t\}$ which is i.i.d. Gaussian. In particular they do not adopt the online learning perspective and do not address the case of arbitrary disturbances. Classical approaches to design robust controllers in the optimal control literature rely either on using statistical and probabilistic models for disturbances such as for linear quadratic Gaussian design, or adopting a (worst-case) game theoretic perspective via designing ‘minimax’ controllers like in $H_{\infty}$ control \citep{basar-bernhard08f-infinity}. 

Only few recent works adopt an online learning perspective for \textit{distributed} control \citep{ghai-et-al22regret-multi-agent-control,chang-shahrampour23distributed-online-control,chang-shahrampour2023distributed-online-LQR,martinelli-et-al24}. 
\citet{chang-shahrampour23distributed-online-control,chang-shahrampour2023distributed-online-LQR} studied a distributed online control problem over a multi-agent network of $m$ identical linear time-invariant systems in the presence of adversarial perturbations. Each agent seeks to generate a control sequence that can compete with the best centralized control policy in hindsight.  
In contrast, we address a multi-agent setting involving \textit{strategic} agents influencing a \textit{single} linear dynamical system. Our state dynamics are not separable and are influenced by all the agents. The cost of each agent in our model is influenced by the (shared) observed state which is governed by all the agents' control inputs and the goal of each agent is to maximize their own individual cost.\\

\noindent\textbf{Markov Games.} Regret bounds have been previously established for discrete finite Markov games. Our multi-agent linear control setting can be seen as a continuous analog to Markov games. However, note that our linear dynamical system is fundamentally different from the usual Markov game (or stochastic game) setting involving an unknown state transition kernel outputting the next state probability distribution as a function of the current state and the (joint) actions of all players. 
When considering multi-agent potential games, there are three important distinctions with existing works on Markov potential games (e.g. \citet{leonardos-et-al22mpg,zhang-et-al24tac,ding-et-al22,zhang-et-al22softmax-mpg,sun-et-al23mpg}): 
\begin{itemize}[leftmargin=*]
\item  In our work, the state and action spaces are continuous and are not mixed extensions of finite sets of states and actions. Most of the bounds scale with the cardinality of the action spaces of the players and are therefore vacuous in our continuous action space setting. In addition, our results use a suitable control policy for the linear dynamical system setting. The softmax policy used in e.g. \citet{zhang-et-al22softmax-mpg,sun-et-al23mpg} is not immediately suitable for the continuous case, unless one puts a parametric probability distribution assumption on the disturbance sequence, which we want to avoid in order to consider adversarial disturbances.
\item  Our results consider adversarial disturbances, and hence the state transitions of the underlying system may not even be Markovian or stochastic, the disturbances can be chosen adversarially depending on the far past.
\item Our work considers cost functions that are time-varying, which is in contrast with the standard fixed reward setting in the mentioned Markov potential games works. We also do not consider discounted rewards, and the potential assumption we use is with respect to the cost function itself, and not on the aggregate cost over a time horizon. 
\end{itemize}

\section{Examples}
\label{app:examples}

\subsection{Description}
\label{subsec:description-examples}

We provide a few concrete examples to illustrate our multi-agent control setting.\\ 

\noindent\textbf{(a) Smart grid markets.} In modern power grids, electricity is generated and distributed by a mix of independent energy producers such as traditional plants and renewable energy providers. These actors act selfishly and adapt to market conditions while they also jointly influence the grid. Let $x_t$ be the grid state defined by characteristics such as line loads and aggregate reserves, let $u_t^i$ be generator~$i$'s power output decision (i.e. their control input) and let the sequence $w_t$ capture the demand fluctuation, the system noise and/or renewable energy shocks. Then, the system dynamics may evolve according to~\eqref{eq:state-lds} (e.g. by linearization around an operating point). Each generator~$i$ has their local cost function which accounts for the cost of production including e.g. fuel and a penalty for deviating from a target grid state.\\

\noindent\textbf{(b) Formation control.} Consider a multi-agent system consisting of $N$ vehicles or robots. The state (position, velocity) and control input of each agent~$i$ at each time step~$t$ are respectively given by~$x_t^i$ and~$u_t^i\,.$ Suppose the (joint) state of the multi-agent system evolves according to~\eqref{eq:state-lds}. The formation of the multi-agent system is defined by specifying a desired distance to be maintained over time between the states of agents that are adjacent. The goal of each agent is to minimize their own formation error and energy consumption. 
A similar formation control problem has been studied in the control literature \textit{in the absence of adversarial perturbations} ($w_t = 0$) using differential games (see e.g. \citet{aghajani-doustmohammadi15formation-control,han-et-al19}) and discrete linear quadratic games \citep{hosseinirad-et-al23lq-potential-games}.\\ 

\noindent\textbf{(c) Bioresource management.} A set of firms (or countries) exploit a set of renewable resources (e.g. a fish population) whose evolution is driven by~\eqref{eq:state-lds} where $x_t \in \mathbb{R}^d$ denotes the vector of quantities of $d$ distinct resources, the matrix $A$ encodes their natural growth rate, the control $u_t^i$ models the exploitation rate of the firm~$i$ and $w_t$ refers to perturbations due to exogenous factors such as weather conditions. Each firm~$i$ has the goal to maximize their profit while minimizing their exploitation cost. See e.g. \citet{mazalov-et-al17lq-potential-games} in the noiseless setting ($w_t = 0$). 

\subsection{About Adversarial Disturbances}

In multi-agent systems, considering adversarial disturbances allows us to model a wide range of realistic, worst-case, or strategically motivated perturbations ranging from strategic behavior in energy markets to adversarial environments in robotics and ecological shocks in resource management, ensuring system robustness even under hostile or extreme scenarios. 

We provide below examples of adversarial disturbances in each of the examples described in section~\ref{subsec:description-examples} above and comment on their importance:

\begin{itemize}[leftmargin=*]

\item \textbf{Smart grid markets:} An adversarial disturbance could model sudden demand spikes, strategic demand manipulation by large consumers (i.e. major electricity buyers who have significant influence over the overall demand on the grid), malicious data injection attacks that falsify renewable generation forecasts or misreporting. For instance, an actor might manipulate demand predictions to influence market prices or grid loads in their favor.

\item \textbf{Formation control:} Adversarial disturbances capture environmental disturbances with structured worst-case behavior, such as wind gusts or magnetic interference that affect formations in potentially harmful ways. It can also capture adversarial agents or spoofed sensor data to destabilize the formation. In hostile or uncertain environments (e.g., surveillance drones in contested airspace), agents must maintain formation despite external influences that could intentionally disrupt coordination.

\item \textbf{Bioresource management:} Adversarial disturbances may reflect deliberate misinformation about resource levels, illegal over-harvesting by untracked actors, or policy shocks (e.g., sudden trade bans) that drastically affect the resource dynamics in a harmful way. Robust resource management must consider these disturbances to avoid collapse or irreversible damage.

\end{itemize}

\section{Further Discussion of Assumptions}

\subsection{Assumption~\ref{as:bounded-disturb}}

To the best of our knowledge, all prior works in the online control literature assume bounded adversarial disturbances. It would be interesting to relax this assumption further to model other scenarios involving catastrophic failures or highly irrational agents. As for the boundedness of the control inputs, note that this property is automatically satisfied using the gradient-based controllers considered via the projection of policy parameters. 

\subsection{Assumption~\ref{as:strong-stab}}

As is standard in prior work on single-agent online control, we assume that agents have initial access to a stabilizing controller. Note that such controllers can be obtained offline using an SDP relaxation (e.g., using the method of \citet{cohen-et-al18online-lq-control}). Our main focus is on the challenging task of learning DAC policy parameters under adversarial disturbances. 

\subsection{Assumption~\ref{as:strong-stab-global}}
\label{appx:as-strong-stab-global}

Global stability is a key property enabling the linear dependence on N in the regret bound. There are two explanations for this depending on whether or not all agents in the population play DAC policies. 
\begin{itemize}[leftmargin=*]

\item First, without assuming the specific policies of other agents in the population, assume agent-wise strong stability holds (Assumption~\ref{as:strong-stab}) in the Aggregated Control learning setting. Then, agent $i$ can locally compute the true disturbances and run their DAC policy w.r.t. this true disturbance sequence. However, bounding the individual regret of agent $i$ requires controlling the magnitude of the norm of the global state, and without any assumptions on the control policy of other agents, their “contributions” to the state evolution can only crudely be treated as an “error” term. With $(N-1)$ other agents in the population, the norm of the state will still scale linearly in $N$ in the worst case resulting in an $N^2$ dependence in the regret bound for agent $i$. 

\item On the other hand, suppose we assume all agents in the population play DAC policies. While it is possible to show that agent-wise strong stability (Assumption~\ref{as:strong-stab}) implies global strong stability (Assumption~\ref{as:strong-stab-global}), the resulting parameters for global strong stability will depend on the number of agents $N$ (note that it is natural that local strong stability does not imply global strong stability with the same constant parameter values, independently of $N$). Therefore, when applying the machinery of the proof of Theorem~\ref{thm:local-gpc-regret-cp+} using the resulting global strong stability parameters (which depend on $N$), the final regret bound will still have at least an $N^2$ dependence. 
\end{itemize}

\section{Preparatory Results for the Main Proofs}

\subsection{Notation: counterfactual and idealized states and actions}
\label{subsec:notation-counterfa-state-action}

We introduce a few useful notations in view of our regret analysis. We focus on agent $i$'s viewpoint and we suppose that other players are using a given sequence of control inputs~$\{u_t^{-i}\}\,.$ We will not highlight this dependence in the notation below to avoid overloaded notations as it will be clear from the context. 
\begin{itemize}[leftmargin=*]
\item \textit{Counterfactual state and action:} We use the notation~$x_t^{K_i}(M_{i,0:t-1})$ for the state reached by the system by execution of the non-stationary policy~$\pi_i(M_{i,0:t-1},K_i)$, and~$u_t^{i,K_i}(M_{i,0:t-1})$ is the action executed at time~$t$. If the same (stationary) policy~$M_i$ is used by agent~$i$ in all time steps, we use the more compact notation~$x_t^{K_i}(M_i), u_t^{i,K_i}(M_i)$. We use the notation~$x_t^{K_i}(0), u_t^{i,K_i}(0)$ for the linear control policy~$K_i\,.$ 

\item \textit{Ideal state and action:} We denote by~$y_{t+1}^{K_i}(M_{i,t-H:t})$ the ideal state of the system that would have been reached if agent $i$ played the non-stationary policy~$M_{i,t-H:t}$ from time step $t-H$ to $t$ assuming that the state at time~$t-H$ is zero while other agents use the control sequence~$\{u_{t-H:t}^{-i}\}\,.$ The ideal action to be executed at time~$t+1$ if the state observed at time~$t+1$ is~$y_{t+1}^{i,K_i}(M_{i,t-h:t})$ will be denoted by~$v_{t+1}^{i,K_i}(M_{i,t-H:t+1}) = - K_i y_{t+1}^{i,K_i}(M_{i,t-H:t}) + \sum_{p=1}^H M_{i,t+1}^{[p-1]} w_{t+1-p}\,.$ We use the compact notations~$y_{t+1}^{i,K_i}(M_i),v_{t+1}^{i,K_i}(M_i)$ when $M_i$ is constant across time steps~$t-H$ to~$t$. 

\item \textit{Ideal cost:} Let~$\ell_t^i(M_{i,t-1-H:t}) = c_t^i(y_t^{i,K_i}(M_{i,t-1-H:t-1}),v_{t}^{i,K_i}(M_{i,t-1-H:t}))$ be agent $i$'s cost function evaluated at the
idealized state and action pair. Again we use the notation $\ell_t^i(M_i)$ when $M_i$ is constant across time steps $t-H$ to $t$\,. Importantly, for every agent $i \in [N]$, the function $\ell_t^i$ is a convex function of~$M_{i,t-H-1:t}$ under assumption~\ref{as:convexity}: This is because the cost function of agent~$i$ is supposed to be convex w.r.t. both its arguments and both ideal state and action are linear transformations of~$M_{i,t-H-1:t}$ (see Lemma~\ref{lem:convexity-loss-i} and its proof). Introducing and using this idealized cost which only involves the past $H$ controllers brings us to online convex optimization with memory~\citep{anava-hazan-mannor15online-learning-memory}. 
\end{itemize}

\subsection{State evolution}
\label{subsec:proof-prop-state-evol}

In view of our analysis, we describe first the state evolution under \eqref{eq:state-lds}. We introduce first some useful notations for any $i \in [N], t, h \leq t, l \leq H+h$: 
\begin{align} 
    &\tilde{A}_{K_i} := A - B_i K_i\,, \quad 
    \Psi_{t,l}^{i,h}(M_{i,t-h:t}) := \tilde{A}_{K_i}^l \mathbf{1}_{l\leq h} + \sum_{k=0}^{h} \tilde{A}_{K_i}^k B_i M_{i,t-k}^{[l-k-1]} \mathbf{1}_{l-k \in [1,H]}\,, \label{eq:notations-psi}\\
    &\bar{A}_{K} := A - \sum_{i=1}^N B_i K_i\,, \quad 
    \bar{\Psi}_{t,l}^{h}(M_{t-h:t}) := \bar{A}_{K}^l \mathbf{1}_{l\leq h} + \sum_{k=0}^{h} \bar{A}_{K}^k \sum_{i=1}^N B_i M_{i,t-k}^{[l-k-1]} \mathbf{1}_{l-k \in [1,H]} \label{eq:notations-psi-bar}\,.
\end{align}
Here, when player~$i$ plays a DAC policy~\eqref{eq:dac-i} and other players' control inputs are given by $\{u_t^{-i}\}$, the matrix~$\tilde{A}_{K_i}$ describes the evolution of the state when agent $i$ executes the linear controller~$K_i$ in the absence of disturbances and other players, and $\Psi_{t,l}^{i,h}(M_{i,t-h:t})$ is the disturbance-state transfer matrix for agent~$i$ which will describe the influence of the perturbation term~$w_{t-l}$ on the next state~$x_{t+1}$ at time $t+1$. 
When all agents execute a DAC policy~\eqref{eq:dac-i}, the evolution of the state is driven by the matrix $\bar{A}_{K}$ and the influence of the perturbation term~$w_{t-l}$ on the next state~$x_{t+1}$ is captured by the disturbance-state transfer matrix $\bar{\Psi}_{t,l}^{h}(M_{t-h:t})$.
Using these notations we have the following result describing the evolution of the states under~\eqref{eq:state-lds} extending the single-agent result of \citet{agarwal2019online} (Lemma 4.3). 

\begin{proposition}{\textbf{(State evolution)}}
\label{prop:state-evol}
 Suppose all agents but~$i  \in [N]$ select their actions according to the sequence of control inputs~$\{u_t^{-i}\}$ then for every time $t$ and every $h \geq 0$, if agent~$i \in [N]$ executes a non-stationary DAC policy~$\pi_i(M_{i,0:T},K_i)$, the state of the system \eqref{eq:state-lds} is as follows:
\begin{enumerate}[label=(\roman*),leftmargin=*]
\item \label{eq:state-evol-cp0} Under~Setting~\ref{cp0}, i.e. with perturbation sequence~$\tilde{w}_t := w_t + \sum_{j \neq i} B_j u_{t-k}^j$, 
\begin{equation}
x_{t+1} = \tilde{A}_{K_i}^{h+1} x_{t-h} + \sum_{l=0}^{H+h} \Psi_{t,l}^{i,h}(M_{i,t-h:t}) \tilde{w}_{t-l}\,.
\end{equation}
\item \label{eq:state-evol-cp+all-dac} Under Setting~\ref{cp+}, if in addition \textbf{all} the agents execute a DAC policy using the sequence $\{w_t\}$,
\begin{equation}
\label{eq:state-evol-under-dac-w-all}
x_{t+1} = \bar{A}_{K}^{h+1} x_{t-h} + \sum_{l=0}^{H+h} \bar{\Psi}_{t,l}^{h}(M_{t-h:t}) w_{t-l}\,.
\end{equation}
\end{enumerate}
\end{proposition}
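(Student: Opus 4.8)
The plan is to prove both identities by induction on the lookback parameter $h$, unrolling the recursion~\eqref{eq:state-lds} one step at a time and carefully tracking how the disturbance-state transfer matrices $\Psi_{t,l}^{i,h}$ and $\bar\Psi_{t,l}^{h}$ accumulate contributions. Since part~\ref{eq:state-evol-cp0} and part~\ref{eq:state-evol-cp+all-dac} have identical structure — the only difference being that under Setting~\ref{cp0} agent~$i$ alone applies its DAC policy against the effective disturbance $\tilde w_t = w_t + \sum_{j\neq i} B_j u_{t-k}^j$ so that the closed-loop matrix is $\tilde A_{K_i} = A - B_i K_i$, whereas under Setting~\ref{cp+} \emph{all} agents apply DAC policies so the closed-loop matrix is $\bar A_K = A - \sum_i B_i K_i$ acting on the genuine disturbance $w_t$ — I would prove one case in full and note that the other follows verbatim after the substitutions $\tilde A_{K_i} \to \bar A_K$, $B_i M_{i,t-k}^{[\cdot]} \to \sum_{j} B_j M_{j,t-k}^{[\cdot]}$, and $\tilde w \to w$.

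First I would establish the base case $h=0$. Here one simply substitutes the DAC action~\eqref{eq:dac-i}, namely $u_t^i = -K_i x_t + \sum_{p=1}^H M_{i,t}^{[p-1]} \tilde w_{t-p}$, directly into~\eqref{eq:state-lds}, which gives $x_{t+1} = \tilde A_{K_i} x_t + \sum_{p=1}^H B_i M_{i,t}^{[p-1]} \tilde w_{t-p} + \tilde w_t$. I would then check that this matches $\tilde A_{K_i} x_t + \sum_{l=0}^{H} \Psi_{t,l}^{i,0}(M_{i,t}) \tilde w_{t-l}$ by reading off the definition~\eqref{eq:notations-psi} at $h=0$: the only surviving term in the sum $\sum_{k=0}^h$ is $k=0$, giving $\Psi_{t,l}^{i,0} = \tilde A_{K_i}^l \mathbf{1}_{l=0} + B_i M_{i,t}^{[l-1]} \mathbf{1}_{l\in[1,H]}$, which contributes the identity at $l=0$ (capturing $\tilde w_t$) and $B_i M_{i,t}^{[l-1]}$ for $l \in [1,H]$ (capturing the memory terms), exactly as required.

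For the inductive step, I would assume the claim at level $h$ and expand $x_{t-h}$ using the dynamics~\eqref{eq:state-lds} at time $t-h-1$, writing $x_{t-h} = \tilde A_{K_i} x_{t-h-1} + \sum_{p=1}^H B_i M_{i,t-h-1}^{[p-1]} \tilde w_{t-h-1-p} + \tilde w_{t-h-1}$. Substituting this into the level-$h$ expression, the leading term becomes $\tilde A_{K_i}^{h+2} x_{t-h-1}$, which is the desired $\tilde A_{K_i}^{(h+1)+1} x_{t-(h+1)}$ prefactor. The remaining work is to verify that the freshly introduced disturbance terms at indices $t-h-1$ and $t-h-1-p$, after being premultiplied by $\tilde A_{K_i}^{h+1}$, combine with the inherited sum to reproduce $\sum_{l=0}^{H+h+1}\Psi_{t,l}^{i,h+1} \tilde w_{t-l}$; this amounts to a reindexing $l \mapsto l$ where the new contributions land precisely in the $k = h+1$ slice of the definition~\eqref{eq:notations-psi}.

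The main obstacle will be the bookkeeping in this inductive step: one must confirm that the indicator constraints $\mathbf{1}_{l\le h}$ and $\mathbf{1}_{l-k\in[1,H]}$ update correctly to $\mathbf{1}_{l\le h+1}$ and still $\mathbf{1}_{l-k\in[1,H]}$, so that no spurious cross terms appear and no legitimate term is dropped at the boundary indices $l = h+1$ and $l = H+h+1$. Concretely, the new $\tilde A_{K_i}^{h+1} B_i M_{i,t-h-1}^{[\cdot]}$ terms must match exactly the $k=h+1$ summand, and the new $\tilde A_{K_i}^{h+1}\tilde w_{t-h-1}$ term must match the extension of the pure-power part $\tilde A_{K_i}^l \mathbf{1}_{l\le h+1}$ at $l = h+1$. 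Once the indicators are matched termwise, equality of the two expressions is immediate, completing the induction.
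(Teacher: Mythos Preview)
Your proposal is correct and follows essentially the same approach as the paper: both arguments unroll the recursion~\eqref{eq:state-lds} after substituting the DAC action, then match terms against the definitions~\eqref{eq:notations-psi}--\eqref{eq:notations-psi-bar}. The paper presents this as a direct $h$-step unrolling followed by a single index change $l=k+p$, whereas you organise the same computation as a formal induction on $h$; the bookkeeping you outline for the inductive step (the $k=h+1$ slice and the extension of the indicator to $l=h+1$) is exactly what is needed and is equivalent to the paper's reindexing.
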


This result follows from unrolling the state dynamics for $h$ steps, injecting the DAC policy for agent~$i$ (or all agents depending on the setting) and rewriting the state evolution to highlight the linear dependence of the state on the previous disturbances. We defer a complete constructive proof to Appendix~\ref{subsec:proof-prop-state-evol}. Importantly, notice that $\Psi_{t,l}^{i,h}$ and $\bar{\Psi}_{t,l}^{h}$ are linear in the $h+1$ DAC policy parameters~$M_{i,t-h:t}, i \in [N]\,.$

\begin{proof}
We prove the two claims of the Proposition separately:\\

\noindent\textbf{Proof of Claim~\ref{eq:state-evol-cp0}.}
The proof of the first part of the statement under Setting~\ref{cp0} is a direct application of the known single-agent result \citep[Lemma 4.3]{agarwal2019online} with the new disturbance sequence~$\{\tilde{w}_t\}$ rather than the original disturbance sequence $\{w_t\}$ defining~\eqref{eq:state-lds}.\\ 

\noindent\textbf{Proof of Claim~\ref{eq:state-evol-cp+all-dac}.} 
We provide a full constructive proof which clarifies how we obtain our final state evolution expression. Observe first that
\begin{align}
 x_{t+1} &= A x_t + \sum_{i=1}^N B_i u_t^i + w_t  & \makebox[0pt][r]{\text{(using \eqref{eq:state-lds})}} \nonumber\\
 &= A x_t + \sum_{i=1}^N B_i \left(- K_i x_t + \sum_{p=1}^H M_{i,t}^{[p-1]} w_{t-p} \right) + w_t  & \makebox[4cm][r]{\text{(using non-stat.\eqref{eq:dac-i})}} \nonumber\\
 &= \left(A- \sum_{i=1}^N B_i K_i \right) x_t + \sum_{i=1}^N \left( B_i \sum_{p=1}^H M_{i,t}^{[p-1]} w_{t-p}\right) + w_t\,,\nonumber\\
 &= \bar{A}_{K} x_t + \tilde{\varphi}_{t,i}^{0} \,,
 \label{eq:state-evol-unfold1bis}
\end{align}
where we define: $\tilde{\varphi}_{t}^{0} :=  \sum_{i=1}^N  \left(B_i \sum_{p=1}^H M_{i,t}^{[p-1]} w_{t-p} \right) + w_t\,.$ 
Expanding again the state $x_t$ yields: 
\begin{align}
x_{t+1} &= \bar{A}_{K} x_t + \tilde{\varphi}_{t}^{0}  & \makebox[0pt][r]{\text{(see \eqref{eq:state-evol-unfold1bis})}} \nonumber\\
&= \bar{A}_{K} \left( \bar{A}_{K} x_{t-1} + \sum_{i=1}^N \left(B_i \sum_{p=1}^H M_{i,t-1}^{[p-1]} w_{t-1-p}\right) + w_{t-1} \right)  + \tilde{\varphi}_{t}^{0}& \makebox[3.4cm][r]{\text{(same steps as in \eqref{eq:state-evol-unfold1bis})}} \nonumber\\
&= \bar{A}_{K}^2 x_{t-1} + \tilde{\varphi}_{t-1}^1 + \tilde{\varphi}_{t}^0\,,
\label{eq:recursion-state-evol-to-unroll-bis}
\end{align}
where we define for every $k = 0, \cdots, h$: 
\begin{equation}
\label{eq:var-phi-def}
\tilde{\varphi}_{t-k}^{k} := \bar{A}_{K}^k \sum_{i=1}^N  \left(B_i \sum_{p=1}^H M_{i,t-k}^{[p-1]} w_{t-k-p}\right) + \bar{A}_{K}^k  w_{t-k}\,, 
\end{equation}
where we note for precision that the last term is not in the sum over $i$. 
Unrolling the recursion~\eqref{eq:recursion-state-evol-to-unroll-bis} for $h$ steps yields 
\begin{equation}
x_{t+1} = \bar{A}_{K}^{h+1} x_{t-h} + \sum_{k=0}^h \tilde{\varphi}_{t-k}^k\,.  
\end{equation}
It now remains to rewrite the second term in the above expression: 
\begin{align}
\sum_{k=0}^h \tilde{\varphi}_{t-k}^k &= \sum_{k=0}^h \bar{A}_{K}^k \left(\sum_{i=1}^N B_i\sum_{p=1}^H M_{i,t-k}^{[p-1]}\right) w_{t-k-p}  + \bar{A}_{K}^k  w_{t-k}    & \makebox[3cm][r]{\text{(using definition \eqref{eq:var-phi-def})}} \nonumber\\
&= \sum_{l=1}^{H+h} \left( \sum_{k=0}^{h} \bar{A}_{K}^k \left(\sum_{i=1}^N B_i M_{i,t-k}^{[l-k-1]}\right) \mathbf{1}_{l-k \in [1,H]} w_{t-l} + \bar{A}_{K}^k w_{t-k}   \right)  \nonumber\\&& \makebox[3cm][r]{\text{(index change $l=k+p, 0 \leq k \leq h, 1 \leq p \leq H$)}} \nonumber\\
&= \sum_{l=0}^{H+h} \left(\bar{A}_{K}^l  \mathbf{1}_{l \leq h} + \sum_{k=0}^{h} \bar{A}_{K}^k \sum_{i=1}^N B_i M_{i,t-k}^{[l-k-1]} \mathbf{1}_{l-k \in [1,H]}  \right) w_{t-l}& \makebox[3cm][r]{\text{(simplifying 1st term)}} \nonumber\\
&= \sum_{l=0}^{H+h} \bar{\Psi}_{t,l}^{h}(M_{t-h:t}) w_{t-l}\,. & \makebox[3cm][r]{\text{(using definition of $\bar{\Psi}_{t,l}^{h}$ in \eqref{eq:notations-psi-bar})}}\;.
\end{align}
\end{proof}

\subsection{Transfer matrix bound}

In view of our regret analysis, it will be useful to bound the norm of the states and actions. Given the expression of the state evolution shown in Proposition~\ref{prop:state-evol}-\ref{eq:state-evol-cp+all-dac}, we will need to bound the norm of the state transfer matrix. This is the purpose of the next lemma which is similar to \citet[Lemma 5.4]{agarwal2019online}.\footnote{Note here that our powers of $\kappa$ are slightly different because we stick to the definition of $(\kappa,\gamma)$-strong stability introduced in \citet{cohen-et-al18online-lq-control} rather than the one later used in \citet{agarwal2019online} which is slightly different, this is without any loss of generality.} 
However, our transfer matrix which is induced by all agents playing \ref{eq:dac-i} policies is different from their single-agent counterpart.    
\begin{lemma} 
\label{lem:psi-bound} 
Let the global strong stability assumption~\ref{as:strong-stab-global} hold, i.e. suppose that $K = (K_1, \cdots, K_N)^T$ is $(\bar{\kappa},\bar{\gamma})$-strongly stable for $(A, [B_1, \cdots, B_N])\,.$ Let $M_{i,t}$ be a sequence s.t. for all $t, p \in \{0, \cdots, H-1\}, \|M_{i,t}^{[p]}\| \leq \tau (1- \bar{\gamma})^p$ where $\tau$ is some positive constant. Then for all $t \geq 1, h \leq t$ and $l \leq H + h,$ we have
\begin{equation}
\|\bar{\Psi}_{t,l}^{h}(M_{t-h:t})\| \leq \bar{\kappa} (1- \bar{\gamma})^l \cdot \mathbf{1}_{l \leq H} + H  \bar{\kappa} \tau  \left(\sum_{i=1}^N \|B_i\| \right) (1- \bar{\gamma})^{l-1}\,.
\end{equation}
\end{lemma}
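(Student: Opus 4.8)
The plan is to bound the norm of the transfer matrix $\bar{\Psi}_{t,l}^{h}(M_{t-h:t})$ directly from its definition in \eqref{eq:notations-psi-bar}, namely
\begin{equation*}
\bar{\Psi}_{t,l}^{h}(M_{t-h:t}) = \bar{A}_{K}^l \mathbf{1}_{l\leq h} + \sum_{k=0}^{h} \bar{A}_{K}^k \sum_{i=1}^N B_i M_{i,t-k}^{[l-k-1]} \mathbf{1}_{l-k \in [1,H]}\,,
\end{equation*}
by applying the triangle inequality to split it into the ``linear-controller'' term $\bar{A}_{K}^l \mathbf{1}_{l\leq h}$ and the ``disturbance-action'' term coming from the double sum. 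The one essential ingredient is a clean bound on the powers of $\bar{A}_K$. First I would extract from the global strong stability Assumption~\ref{as:strong-stab-global} the factorization $\bar{A}_K = A - \sum_i B_i K_i = Q L Q^{-1}$ with $\|L\| \le 1 - \bar{\gamma}$ and $\|Q\|\,\|Q^{-1}\| \le \bar{\kappa}$, which yields the decay estimate $\|\bar{A}_K^{\,m}\| = \|Q L^m Q^{-1}\| \le \bar{\kappa}(1-\bar{\gamma})^m$ for every $m \ge 0$.

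For the first term, I would simply note that $\|\bar{A}_K^l\| \mathbf{1}_{l \le h} \le \bar{\kappa}(1-\bar{\gamma})^l \mathbf{1}_{l \le h} \le \bar{\kappa}(1-\bar{\gamma})^l \mathbf{1}_{l \le H}$, which produces the first summand in the claimed bound (the indicator weakens from $l \le h$ to $l \le H$, consistent with how the lemma is stated). For the second term, I would bound the norm of the double sum by
\begin{equation*}
\sum_{k=0}^{h} \|\bar{A}_K^k\| \sum_{i=1}^N \|B_i\|\,\|M_{i,t-k}^{[l-k-1]}\|\,\mathbf{1}_{l-k\in[1,H]}\,,
\end{equation*}
then insert the power bound $\|\bar{A}_K^k\| \le \bar{\kappa}(1-\bar{\gamma})^k$ and the hypothesis $\|M_{i,t-k}^{[l-k-1]}\| \le \tau(1-\bar{\gamma})^{\,l-k-1}$ (reading off exponent $p = l-k-1$, valid since the indicator forces $l-k \in [1,H]$, so $p \in \{0,\dots,H-1\}$). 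Multiplying the factors gives $\bar{\kappa}\tau (1-\bar{\gamma})^{k}(1-\bar{\gamma})^{l-k-1} = \bar{\kappa}\tau (1-\bar{\gamma})^{l-1}$, in which the $k$-dependence cancels exactly; this cancellation is the crux of the geometric argument. Summing the constant over $k = 0, \dots, h$ contributes a factor of at most $H$ (since $h \le H+h$ but the nonzero terms satisfy $l-k \in [1,H]$, so at most $H$ values of $k$ survive), yielding $H\,\bar{\kappa}\,\tau\,(\sum_i \|B_i\|)(1-\bar{\gamma})^{l-1}$, the second summand.

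Combining the two bounds via the triangle inequality gives exactly the claimed estimate. The main obstacle I anticipate is the bookkeeping of indices and indicator functions rather than any deep inequality: specifically, confirming that the surviving terms in the inner sum correspond to valid matrix-block indices $p = l-k-1 \in \{0,\dots,H-1\}$ (so that the hypothesis on $\|M_{i,t-k}^{[p]}\|$ applies), and verifying that the number of nonzero terms in the outer sum is bounded by $H$ so the prefactor is correct. The cancellation of $k$ in the exponent is what makes the geometric sum collapse into the clean $(1-\bar{\gamma})^{l-1}$ factor, and tracking the $\sum_i \|B_i\|$ dependence (rather than $N \max_j \|B_j\|$) is what later feeds the improved $N$-dependence in Theorem~\ref{thm:local-gpc-regret-cp+}.
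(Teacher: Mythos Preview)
Your proposal is correct and follows essentially the same approach as the paper's proof: triangle inequality on the definition of $\bar{\Psi}_{t,l}^{h}$, the strong-stability power bound $\|\bar{A}_K^m\| \le \bar{\kappa}(1-\bar{\gamma})^m$, the hypothesis on $\|M_{i,t-k}^{[l-k-1]}\|$, and the observation that the $k$-dependence cancels in the exponent leaving at most $H$ nonzero terms. The paper's writeup is nearly line-for-line what you outline, including the weakening of $\mathbf{1}_{l\le h}$ to $\mathbf{1}_{l\le H}$.
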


\begin{proof}
Recall the definition of $\bar{\Psi}_{t,l}^{h}$ from \eqref{eq:notations-psi-bar}:
\begin{equation}
\label{eq:psi-bar-state-transfer-mat}
\bar{\Psi}_{t,l}^{h}(M_{t-h:t}) := \bar{A}_{K}^l \mathbf{1}_{l\leq h} + \sum_{k=0}^{h} \bar{A}_{K}^k \sum_{i=1}^N B_i M_{i,t-k}^{[l-k-1]} \mathbf{1}_{l-k \in [1,H]}\,.
\end{equation}
Using strong stability of $K$ (see definition~\ref{def:kappa-gamma-st-stab}), there exists matrices~$L, Q$ s.t. $\bar{A}_{K} = A - \sum_{i=1}^N B_i K_{i} = Q L Q^{-1}$  with $\|L\| \leq 1- \bar{\gamma},$ and $\|Q\| \cdot \|Q^{-1}\| \leq \bar{\kappa}\,.$ Therefore using the sub-multiplicativity of the norm we obtain for every $l = 0, \cdots, t$, 
\begin{equation}
\label{eq:bound-bar-A-k-strong-stab}
\|\bar{A}_{K}^l\| = \|(QLQ^{-1})^l\| = \|Q L^l Q^{-1}\| \leq \|Q\| \cdot \|Q^{-1}\| \cdot \|L\|^l \leq \bar{\kappa} (1- \bar{\gamma})^l\,.
\end{equation}

Therefore, we can bound the norm of the state transfer matrix in~\eqref{eq:psi-bar-state-transfer-mat} as follows: 
\begin{align}
\|\bar{\Psi}_{t,l}^{h}(M_{t-h:t})\| &\leq \|\bar{A}_{K}^l\| \mathbf{1}_{l\leq h} + \sum_{k=0}^{h} \|\bar{A}_{K}^k\| \cdot \sum_{i=1}^N \|B_i\| \cdot \|M_{i,t-k}^{[l-k-1]}\| \cdot \mathbf{1}_{l-k \in [1,H]} \nonumber\\ 
&\leq \bar{\kappa} (1- \bar{\gamma})^l \cdot \mathbf{1}_{l\leq h} + \bar{\kappa} \tau \sum_{i=1}^N \|B_i\| \sum_{k=0}^{h} (1-\bar{\gamma})^k (1-\bar{\gamma})^{l-k-1} \mathbf{1}_{l-k \in [1,H]} \nonumber\\
&\leq \bar{\kappa} (1-\bar{\gamma})^l \cdot \mathbf{1}_{l \leq H} + H  \bar{\kappa} \tau  \left(\sum_{i=1}^N\|B_i\|\right) (1-\bar{\gamma})^{l-1}\,,
\end{align}
where the second inequality stems from using strong stability (see \eqref{eq:bound-bar-A-k-strong-stab}) and the assumed bound~$\|M_{i,t}^{[p]}\| \leq \tau (1- \bar{\gamma})^p$ for $p \in \{0, \cdots, H-1\}$. As for the last inequality, observe after simplification that the summand does not depend on the index $k$ of the sum apart from the indicator function and there are at most $H$ terms in the sum (since $l-H \leq k \leq l-1$ as $l-k \in \{1, \cdots, H\}$)\,. 
\end{proof}

\subsection{State, action and difference of state and action bounds} 

The goal of the next proposition is to control the norms of states, actions and differences of states and actions. 
Note that we pay particular attention to the problem constants involved to elucidate the dependence of our bounds on the number of agents~$N$ and the magnitude of the control inputs of all the agents. The result is a more refined version of \citet[Lemma 5.5]{agarwal2019online} which is adapted to our multi-agent control setting when each agent executes a \eqref{eq:dac-i} policy.

\begin{proposition}
\label{prop:bounds-states-actions}
Let Assumption~\ref{as:strong-stab-global} hold. 
Let the perturbation sequence~$\{w_t\}$ in \eqref{eq:state-lds} satisfy Assumption~\ref{as:bounded-disturb}. Let~$M_{i,t}$ be a sequence s.t. for any time step $t$, for $p \in \{0, \cdots, H-1\},$ $\|M_{i,t}^{[p]}\| \leq \tau (1- \bar{\gamma})^p$ for some $\tau > 0\,.$ Let~$K = (K_1, \cdots, K_N), K = (K_1^*, \cdots, K_N^*)$ be s.t. $K$ and $K^*$ are two $(\bar{\kappa}, \bar{\gamma})$-strongly stable matrices. Then the following holds: 
\begin{enumerate}[label=(\roman*),leftmargin=*]
\item \label{lem-item-i} \textbf{State under \eqref{eq:dac-i}}: For every $t \geq H+1$,
\begin{equation}
\label{eq:state-bound-dac-i}
\|x_t^{K}(M_{0:t-1})\| \leq \frac{\bar{\kappa}}{\bar{\gamma}}\cdot\frac{W (1 + \tau H \sum_{i=1}^N \|B_i\|)}{1- \bar{\kappa} (1-\bar{\gamma})^{H+1}}\,.
\end{equation}

\item \label{lem-item-ii} \textbf{Ideal state under \eqref{eq:dac-i}}: For every $t \geq H+1$,
\begin{equation}
\|y_t^{K}(M_{t-1-H:t-1})\| \leq \frac{\bar{\kappa}}{\bar{\gamma}} W \left(1 + \tau H \sum_{i=1}^N\|B_i\| \right)\,.
\end{equation}

\item \label{lem-item-iii} \textbf{Linear controller state}: For every $t \geq 0$, $\|x_t^{K^*}(0)\| \leq \frac{\bar{\kappa}}{\bar{\gamma}} W\,.$

\item \label{lem-item-iv} \textbf{Action under \eqref{eq:dac-i}}: For every $t \geq H+1$,
\begin{equation}
\|u_t^{i,K}(M_{0:t})\| \leq \frac{\bar{\kappa}^2}{\bar{\gamma}}\cdot\frac{W (1 + \tau H \sum_{i=1}^N\|B_i\|)}{1- \bar{\kappa} (1- \bar{\gamma})^{H+1}} + \frac{\tau}{\bar{\gamma}} W\, .
\end{equation}

\item \label{lem-item-v} \textbf{Ideal action under \eqref{eq:dac-i}}: For every $t \geq H+1$,
\begin{equation}
\|v_t^{i,K}(M_{t-1-H:t})\| \leq \frac{\bar{\kappa}^2}{\bar{\gamma}} W \left(1 + \tau H \sum_{i=1}^N \|B_i\| \right)  + \frac{\tau}{\bar{\gamma}} W\,.
\end{equation}

\item \label{lem-item-vi} \textbf{State vs. ideal state comparison}: For every $t \geq H+1$,
\begin{equation}
\|x_t^{K}(M_{0:t-1}) - y_t^{K}(M_{t-1-H:t-1})\| \leq (1- \bar{\gamma})^H \frac{\bar{\kappa}^2}{\bar{\gamma}}\cdot\frac{W (1 + \tau H \sum_{i=1}^N \|B_i\|)}{1- \bar{\kappa} (1-\bar{\gamma})^{H+1}}\,.
\end{equation}

\item \label{lem-item-vii} \textbf{Action vs ideal action comparison}: For every $t \geq H+1$,
\begin{equation}
\|u_t^{i,K}(M_{0:t}) - v_t^{i,K}(M_{t-1-H:t})\| \leq (1- \bar{\gamma})^H \frac{\bar{\kappa}^3}{\bar{\gamma}}\cdot\frac{W (1 + \tau H \sum_{i=1}^N \|B_i\|)}{1- \bar{\kappa} (1-\bar{\gamma})^{H+1}}\,.
\end{equation}

\item \label{lem-item-viii} Moreover, given all the above bounds, if $H+1 \geq \frac{\ln(2 \bar{\kappa})}{\bar{\gamma}}$ (where $\bar{\kappa} \geq 1$ without loss of generality), then we have the following simultaneous bounds: 
\end{enumerate}
\begin{align}
\underset{t \geq H+1}{\max}\left\{ \|x_t^{K}(M_{0:t-1})\|, \|y_t^{K}(M_{t-1-H:t-1})\|, \|x_t^{K^*}(0)\| \right\} &\leq D\,, \\ 
\underset{t \geq H+1}{\max}\left\{ \|u_t^{i,K}(M_{0:t})\|, \|v_t^{i,K}(M_{t-1-H:t})\|  \right\} &\leq D\,, \\ 
\underset{t \geq H+1}{\max}\left\{ \|x_t^{K}(M_{0:t-1}) - y_t^{K}(M_{t-1-H:t-1})\|, \|u_t^{i,K}(M_{0:t}) - v_t^{i,K}(M_{t-1-H:t})\|  \right\} &\leq (1-\bar{\gamma})^H D\,, \label{eq:uniform-diff-state-bound}
\end{align}
where the constant $D$ is defined as follows as a function of the problem parameters:
\begin{equation}
\label{eq:D-def}
D :=  \frac{6 \bar{\kappa}^3}{\bar{\gamma}} W \left(1 + \bar{\kappa}^2 H \sum_{i=1}^N \|B_i\| \right)\,.
\end{equation}
Note in particular that $D = \mathcal{O}(N)$ where the notation $\mathcal{O}(\cdot)$ here hides all other constants which are independent of the number~$N$ of agents\,.
\end{proposition}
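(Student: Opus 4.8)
The plan is to prove the seven bounds (i)--(vii) in order, since each builds on the previous one, and then to collect them into the simultaneous bound (viii). The only ingredients I need are the state evolution formula of Proposition~\ref{prop:state-evol}-\ref{eq:state-evol-cp+all-dac}, the transfer-matrix estimate of Lemma~\ref{lem:psi-bound}, the spectral bound $\|\bar A_K^l\|\le\bar\kappa(1-\bar\gamma)^l$ from \eqref{eq:bound-bar-A-k-strong-stab}, and the disturbance bound $\|w_t\|\le W$ (Assumption~\ref{as:bounded-disturb}); everything else is careful bookkeeping of the dependence on $N$ through $\sum_i\|B_i\|$.

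First I would establish the state bound (i). Unrolling \eqref{eq:state-evol-under-dac-w-all} for exactly $h=H$ steps gives $x_{t+1}=\bar A_K^{H+1}x_{t-H}+\sum_{l=0}^{2H}\bar\Psi_{t,l}^{H}(M_{t-H:t})w_{t-l}$. Bounding the first term by $\bar\kappa(1-\bar\gamma)^{H+1}\|x_{t-H}\|$ and the disturbance sum by Lemma~\ref{lem:psi-bound} together with $\|w_{t-l}\|\le W$ yields the self-referential inequality $\|x_{t+1}\|\le\bar\kappa(1-\bar\gamma)^{H+1}\|x_{t-H}\|+C$; solving it (formally by a short induction on $t$, legitimate since the horizon $T$ is finite) produces exactly the denominator $1-\bar\kappa(1-\bar\gamma)^{H+1}$. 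The numerator $C=\frac{\bar\kappa}{\bar\gamma}W(1+\tau H\sum_i\|B_i\|)$ arises by summing the two pieces of Lemma~\ref{lem:psi-bound}: the first piece gives $\sum_{l=0}^{H}\bar\kappa(1-\bar\gamma)^l\le\bar\kappa/\bar\gamma$, while the second piece is supported on $l\ge1$ because of the indicator $\mathbf 1_{l-k\in[1,H]}$, so that $\sum_{l\ge1}(1-\bar\gamma)^{l-1}\le1/\bar\gamma$ with no spurious $(1-\bar\gamma)^{-1}$ factor. Bound (ii) is the identical computation for the ideal state, which starts from $0$ at time $t-1-H$: the $\bar A_K^{H+1}$ term then vanishes, so one reads off the numerator $C$ directly with no recursion and hence no denominator. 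Bound (iii) is the special case $M\equiv0$, where only the $\bar A_{K^*}^l$ terms of $\bar\Psi$ survive and the geometric sum gives $\frac{\bar\kappa}{\bar\gamma}W$.

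The action bounds (iv)--(v) follow by inserting (i) and (ii) into the \eqref{eq:dac-i} rule $u_t^i=-K_ix_t+\sum_{p=1}^{H}M_{i,t}^{[p-1]}w_{t-p}$: I would use $\|K_i\|\le\|K\|\le\bar\kappa$ (Definition~\ref{def:kappa-gamma-st-stab} applied to the stacked controller $K$) and $\sum_{p=1}^{H}\tau(1-\bar\gamma)^{p-1}\le\tau/\bar\gamma$ for the perturbation term, which accounts for the extra factor $\bar\kappa$ multiplying the state bound and the additive $\frac{\tau}{\bar\gamma}W$. For the difference bounds (vi)--(vii), the key structural observation is that the true state $x_t^K$ and the ideal state $y_t^K$ admit identical transfer-matrix expansions over the length-$H$ window (same DAC parameters, same disturbances), so their difference collapses to exactly $x_t^K-y_t^K=\bar A_K^{H+1}x_{t-1-H}$; bounding $\|\bar A_K^{H+1}\|\le\bar\kappa(1-\bar\gamma)^{H+1}\le\bar\kappa(1-\bar\gamma)^{H}$ and substituting (i) gives (vi). Since the perturbation-action terms $\sum_pM_{i,t}^{[p-1]}w_{t-p}$ coincide in $u_t^i$ and $v_t^i$, they cancel and leave $u_t^i-v_t^i=-K_i(x_t^K-y_t^K)$, so (vii) is simply (vi) multiplied by $\|K_i\|\le\bar\kappa$.

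Finally, (viii) follows from a contraction of the denominator: since $\bar\kappa\ge1$ and $1-\bar\gamma\le e^{-\bar\gamma}$, the hypothesis $H+1\ge\ln(2\bar\kappa)/\bar\gamma$ forces $\bar\kappa(1-\bar\gamma)^{H+1}\le\tfrac{1}{2}$, hence $1/(1-\bar\kappa(1-\bar\gamma)^{H+1})\le2$. Substituting the value $\tau=2\bar\kappa^2$ relevant to the DAC parameter set $\mathcal M_i$ and collecting constants (using $\bar\kappa\ge1$ to absorb the lower powers of $\bar\kappa$ and the additive $\frac{\tau}{\bar\gamma}W$ terms), each of (i)--(v) is dominated by the single constant $D$ of \eqref{eq:D-def}, while each of (vi)--(vii) is dominated by $(1-\bar\gamma)^HD$; the claim $D=\mathcal{O}(N)$ is then immediate from $\sum_i\|B_i\|=\mathcal{O}(N)$. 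I expect the only genuinely delicate step to be the summation producing the numerator $C$ in (i): one must respect the indicator-induced support $l\ge1$ of the second term of Lemma~\ref{lem:psi-bound} to avoid an extra $(1-\bar\gamma)^{-1}$ factor, and one must keep $\sum_i\|B_i\|$ intact rather than crudely bounding it by $N\max_i\|B_i\|$ in order to obtain the stated linear-in-$N$ scaling. The remaining parts are mechanical consequences of (i)--(iii).
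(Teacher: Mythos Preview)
Your proposal is correct and follows essentially the same approach as the paper's proof: the same unrolling with $h=H$ via Proposition~\ref{prop:state-evol}, the same use of Lemma~\ref{lem:psi-bound} to bound the transfer-matrix sum and obtain the recursion for (i), the same observation that $x_t^K-y_t^K=\bar A_K^{H+1}x_{t-1-H}$ for (vi), and the same contraction of the denominator via $H+1\ge\ln(2\bar\kappa)/\bar\gamma$ together with $\tau=2\bar\kappa^2$ for (viii). Your identification of the delicate point---respecting the indicator support $l\ge1$ in the second piece of Lemma~\ref{lem:psi-bound} so as not to pick up an extra $(1-\bar\gamma)^{-1}$---matches exactly the care the paper takes in that summation.
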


\begin{proof}
We prove each one of the statements of the proposition separately.

\noindent\textbf{Proof of Claim \ref{lem-item-i}.} 
Using Proposition~\ref{prop:state-evol}-\ref{eq:state-evol-cp+all-dac} at time step~$t-1$ with $h=H$, we have
\begin{equation}
\label{eq:state-evol-t-1-H}
    x_t^{K}(M_{0:t-1}) = \bar{A}_{K}^{H+1} x_{t-1-H}(M_{0:t-2-H}) + \sum_{l=0}^{2H} \bar{\Psi}_{t-1,l}^{H}(M_{t-1-h:t-1}) w_{t-1-l}\,.
\end{equation}
It follows from using the boundedness of the perturbation sequence~$\{w_t\}$ by~$W$, the $(\bar{\kappa}, \bar{\gamma})$-strong stability of the matrix~$K$ (see Eq.~\eqref{eq:bound-bar-A-k-strong-stab}) that 
\begin{equation}
\|x_t^{K}(M_{0:t-1})\| \leq \bar{\kappa} (1- \bar{\gamma})^{H+1} \|x_{t-1-H}(M_{0:t-2-H})\| + W \sum_{l=0}^{2H} \|\bar{\Psi}_{t-1,l}^{H}(M_{t-1-h:t-1})\|\,.
\end{equation}
Now invoking Lemma~\ref{lem:psi-bound} at time $t-1$ with $h=H$ yields for every $l \leq 2H, t \geq 1$: 
\begin{equation}
\|\bar{\Psi}_{t-1,l}^{h}(M_{t-1-h:t-1})\| \leq \bar{\kappa} (1- \bar{\gamma})^l \cdot \mathbf{1}_{l \leq H} + \bar{\kappa} \tau H  \left(\sum_{i=1}^N \|B_i\| \right) (1- \bar{\gamma})^{l-1}\,.
\end{equation}
As a consequence, we have by summing these bounds over $l = 0, \cdots, 2H$, 
\begin{equation*}
\sum_{l=0}^{2H} \|\bar{\Psi}_{t-1,l}^{h}(M_{t-1-h:t-1})\| 
\leq \bar{\kappa} \sum_{l=0}^{H} (1- \bar{\gamma})^l +  \bar{\kappa} \tau H  \sum_{i=1}^N \|B_i\| \sum_{l=1}^{2H} (1-\bar{\gamma})^{l-1} 
\leq \frac{\bar{\kappa}}{\bar{\gamma}} \left(1 + \tau H \sum_{i=1}^N \|B_i\| \right)\,.
\end{equation*}
Therefore we obtain 
\begin{equation}
\|x_t^{K}(M_{0:t-1})\| \leq \bar{\kappa} (1- \bar{\gamma})^{H+1} \|x_{t-1-H}(M_{0:t-2-H})\| + \frac{\bar{\kappa}}{\bar{\gamma}} W \left(1 + \tau H \sum_{i=1}^N \|B_i\| \right)\,.
\end{equation}
Unrolling the recursion results in the desired state norm bound: 
\begin{equation}
\|x_t^{K}(M_{0:t-1})\| \leq \frac{\bar{\kappa}}{\bar{\gamma}}\cdot\frac{W (1 + \tau H \sum_{i=1}^N \|B_i\|)}{1- \bar{\kappa} (1-\bar{\gamma})^{H+1}}\,.
\end{equation}

\noindent\textbf{Proof of Claim~\ref{lem-item-ii}.} 
Recall that $y_t^{K}(M_{t-1-H:t-1})$ is the ideal system state that would have been reached if each agent~$i$ played the non-stationary policy~$M_{i,t-1-H:t-1}$ from time step $t-1-H$ to $t-1$ assuming that the state at time~$t-1-H$ is zero. Therefore, similarly to \eqref{eq:state-evol-t-1-H} it follows that 
\begin{equation}
\label{eq:state-evol-y}
y_t^{K}(M_{t-1-H:t-1}) = \sum_{l=0}^{2H} \bar{\Psi}_{t-1,l}^{H}(M_{t-1-h:t-1}) w_{t-1-l}\,.
\end{equation}
Using similar steps as for the proof of \ref{lem-item-i} results in the following desired bound: 
\begin{equation}
\|y_t^{K}(M_{t-1-H:t-1})\| \leq \frac{\bar{\kappa}}{\bar{\gamma}} W \left(1 + \tau H \sum_{i=1}^N\|B_i\| \right)\,.
\end{equation}

\noindent\textbf{Proof of Claim~\ref{lem-item-iii}.} Observe that for any time step $t \geq 1$, the state induced by linear controllers $K^* = (K_1^{*}, \cdots, K_N^*)$ is given by 
\begin{equation}
x_{t}^{K^*}(0) = \sum_{l=0}^{t-1} \bar{A}_{K^*}^l w_{t-1-l}\,. 
\end{equation}
As a consequence, using~$(\bar{\kappa},\bar{\gamma})$-strongly stability of $K^*$ together with boundedness of the perturbation sequence~$\{w_t\}$ and the sum of the geometric series by~$1/\bar{\gamma}$, we have for every time step~$t \geq 1$: 
\begin{equation}
\|x_t^{K^*}(0)\| \leq \frac{\bar{\kappa}}{\bar{\gamma}} W\,,
\end{equation}
and this concludes the proof.

\noindent\textbf{Proof of Claim~\ref{lem-item-iv}.} Note first that action~$u_t^{i,K_i}(M_{i,0:t})$ is computed using \eqref{eq:dac-i} policy as follows: 
\begin{equation}
\label{eq:action-utiKi}
u_t^{i,K}(M_{0:t}) = - K_i x_t^{K}(M_{0:t-1}) + \sum_{p=1}^H M_{i,t}^{[p-1]} w_{t-p}\,.
\end{equation}
Using the $(\bar{\kappa},\bar{\gamma})$-strong stability of $K$ (and without loss of generality~$\|K_i\| \leq \bar{\kappa}$) and the bound assumption on~$M_{i,t}$ together with the state bound already established in item~\ref{lem-item-i}, we obtain 
\begin{align}
\|u_t^{i,K}(M_{0:t})\| \leq \bar{\kappa} \|x_t^{K}(M_{0:t-1})\| + W \frac{\tau}{\bar{\gamma}}
\leq \frac{\bar{\kappa}^2}{\bar{\gamma}}\cdot\frac{W (1 + \tau H \sum_{i=1}^N \|B_i\|)}{1- \bar{\kappa} (1-\bar{\gamma})^{H+1}} 
+ W \frac{\tau}{\bar{\gamma}}\,.
\end{align}

\noindent\textbf{Proof of Claim~\ref{lem-item-v}.} 
By definition of the ideal action~$v_t^{i,K}(M_{t-1-H:t})$ given the ideal state $y_t^{K}(M_{t-1-H:t-1})$, we have: 
\begin{equation}
\label{eq:action-vtiKi}
v_t^{i,K}(M_{t-1-H:t}) = - K_i y_t^{K}(M_{t-1-H:t-1}) + \sum_{p=1}^H M_{i,t}^{[p-1]} w_{t-p}\,.
\end{equation}
Therefore we can bound the ideal action as follows similarly to the proof of item~\ref{lem-item-iv} using the ideal state bound already established in item~\ref{lem-item-ii} to obtain 
\begin{align}
\|v_t^{i,K}(M_{t-1-H:t})\| \leq \bar{\kappa} \|y_t^{K}(M_{t-1-H:t-1})\| + W \frac{\tau}{\bar{\gamma}}
\leq \frac{\bar{\kappa}^2}{\bar{\gamma}} W \left(1 + \tau H \sum_{i=1}^N\|B_i\| \right) + W \frac{\tau}{\bar{\gamma}}\,.
\end{align}

\noindent\textbf{Proof of Claim~\ref{lem-item-vi}.} 
It follows from combining the state evolution expressions \eqref{eq:state-evol-t-1-H} and \eqref{eq:state-evol-y} that 
\begin{align}
\|x_t^{K}(M_{0:t-1}) - y_t^{K}(M_{t-1-H:t-1})\| 
&= \|\bar{A}_{K}^{H+1} x_{t-1-H}(M_{0:t-2-H})\|\\
&\leq \bar{\kappa} (1- \bar{\gamma})^H \|x_{t-1-H}(M_{0:t-2-H})\|\,. 
\end{align}
Plugging in again the state bound (item \ref{lem-item-i}-\eqref{eq:state-bound-dac-i}) in the above inequality yields the desired inequality: 
\begin{equation}
\label{eq:diff-state-ideal-state-bound}
\|x_t^{K}(M_{0:t-1}) - y_t^{K}(M_{t-1-H:t-1})\| \leq (1- \bar{\gamma})^H \frac{\bar{\kappa}^2}{\bar{\gamma}}\cdot\frac{W (1 + \tau H \sum_{i=1}^N \|B_i\|)}{1- \bar{\kappa} (1-\bar{\gamma})^{H+1}}\,.
\end{equation}

\noindent\textbf{Proof of Claim~\ref{lem-item-vii}.} Using the definitions of the actions~$u_t^{i,K}(M_{0:t})$ and $v_t^{i,K}(M_{t-1-H:t})$ in \eqref{eq:action-utiKi}-\eqref{eq:action-vtiKi}, we immediately have: 
\begin{align}
\|u_t^{i,K}(M_{0:t}) - v_t^{i,K}(M_{t-1-H:t})\| 
&= \|K_i (y_t^{K}(M_{t-1-H:t-1})- x_t^{K}(M_{0:t-1}))\| \nonumber\\
&\leq \bar{\kappa} \|y_t^{K}(M_{t-1-H:t-1})- x_t^{K}(M_{0:t-1})\| \nonumber\\
&\leq (1- \bar{\gamma})^H \frac{\bar{\kappa}^3}{\bar{\gamma}}\cdot\frac{W (1 + \tau H \sum_{i=1}^N \|B_i\|)}{1- \bar{\kappa} (1-\bar{\gamma})^{H+1}}\,,
\end{align}
where the last inequality stems from using the inequality established in item~\ref{lem-item-vii}-\eqref{eq:diff-state-ideal-state-bound}.

\noindent\textbf{Proof of Claim~\ref{lem-item-viii}.} Set $\tau = 2 \kappa^2$. If $H+1 \geq \frac{\ln(2 \bar{\kappa})}{\bar{\gamma}}$, then $\bar{\kappa} (1- \bar{\gamma})^{H+1} \leq \frac{1}{2}$. Using this bound and the fact that~$\bar{\kappa} \geq 1$ without loss of generality (replace~$\bar{\kappa}$ by $\max\{1, \bar{\kappa}\}$ otherwise), it is easy to see that we obtain the desired bounds with the same constant~$D$ by taking the maximum of all the bounds appearing in the inequalities of Proposition~\ref{prop:bounds-states-actions} .
\end{proof}

\section{Proof of Theorem~\ref{thm:local-gpc-regret-indep}}
\label{sec:proof-thm-local-gpc-regret-indep}

Here, we give the proof of Theorem~\ref{thm:local-gpc-regret-cp+-lip},
which we restate here:

\thmlocal*

\begin{remark}
The notation~$\widetilde \calO$ in Theorem~\ref{thm:local-gpc-regret-indep} hides polynomial factors in $\gamma^{-1}_i, \kappa_i, \|B_i\|, G, d$ and logarithmic factors in~$T$\,. 
\end{remark}

\begin{proof}
Under Assumptions~\ref{as:convexity}, \ref{as:bounded-disturb} and~\ref{as:strong-stab}, we apply \citet[Theorem 5.1]{agarwal2019online} for each agent~$i \in [N]$. 
It remains to ensure that the considered perturbation
sequence~$\{\widetilde w_t\}$ in \eqref{eq:lds-cp0} also satisfies the boundedness condition of Assumption~\ref{as:bounded-disturb} using the boundedness of control inputs by $U$ as follows: 
\begin{equation}
\|\widetilde w_t\| = \left\|\sum\nolimits_{j \neq i} B_j u_t^j + w_t \right\| \leq \|w_t\| + \sum\nolimits_{j \neq i} \|B_j\| \cdot \|u_t^j\| \leq W + (N-1) U(\max_{j} \|B_j\|)\,,
\end{equation}
where the last inequality follows from using boundedness of the control inputs of all the agents together with the bounded disturbances assumption (Assumption~\ref{as:bounded-disturb}). 

Selecting a step size $\eta = \Theta(1/(G \widetilde W \sqrt{T}))$, where $\widetilde W = W + (N-1) U(\max_{j} \|B_j\|)$, and a (per-agent) memory length $H \ge \log(\kappa_i T)/\gamma_i$, we obtain the desired regret for any~$T \geq H+1,$ 
\begin{equation}
\reg^{H+1:T}_i(\mathcal{A}_i, \{u_t^{-i}\}, \Pi_i^{\text{lin}}) = \widetilde \calO(U^2N^2 \sqrt{T})\,. 
\end{equation}
This concludes the proof. 
\end{proof}

\section{Proof of Theorem~\ref{thm:local-gpc-regret-cp+}}
\label{sec:proof-thm-local-gpc-regret-cp+}

This section is devoted to developing the proof of
Theorem~\ref{thm:local-gpc-regret-cp+}, which we restate here:

\thmglobal*

\begin{remark}
The notation~$\widetilde\calO(\cdot)$ in Theorem~\ref{thm:local-gpc-regret-cp+} hides polynomial factors in $W, \bar{\gamma}^{-1}, \bar{\kappa}, \max_j\|B_j\|, G, d$, and only polylogarithmic factors in~$T$ and~$N$.
\end{remark}

The proof of the result is based on the regret
decomposition that we outline in 
Section~\ref{subsec:regret-decomp-proof-thm}.
We start by making the following remark regarding 
the ``burn-in'' regret:

\begin{remark} Under Assumption~\ref{as:convexity}-\ref{as:grad-bound}, the `burn-in' regret~$\reg^{1:H}_i(\mathcal{A}_i, \{u_t^{-i}\}, \Pi_i^{\text{DAC}})$ can be bounded by~$2 H \beta D^2$ which only scales polylogarithmically in~$T$ and can scale with~$N^2$ in the worst case. 
This worst-case dependence can be offset by considering a sufficiently large~$T$\,. If the cost function is uniformly bounded by a constant~$C$, then the bound becomes $2 H C$, independently of~$N\,.$ 
\end{remark}

We now proceed to develop the main overview of the proof:

\subsection{Regret decomposition and proof overview} 
\label{subsec:regret-decomp-proof-thm}

Define the regret from time step~$H$ to~$T$ as follows: 
\begin{equation}
\label{eq:reg-H-T}
\reg^{H:T}_i(\mathcal{A}_i, \mathcal{A}_{-i}, \Pi_i^{\text{DAC}}) := \sum_{t=H}^T c^i_t(x_t, u^i_t) - \min_{M_{i,\star} \in \calM_i}
    \sum_{t=H}^T c^i_t(x_t^{K_i}(M_{i,\star},M_{-i,t}), u^{i, K_i}_t(M_{i,\star},M_{-i,t}))\,. 
\end{equation}
In the rest of this proof we use the shorthand notation $\reg^{H:T}_i$ for $\reg^{H:T}_i(\mathcal{A}_i, \mathcal{A}_{-i}, \Pi^{i,\text{DAC}})\,.$ First, it follows from Lemma~\ref{lem:time-regret-decomp} that:
\begin{equation}
\reg^T_i \leq \reg^{0:H}_i + \reg^{H+1:T}_i\,. 
\end{equation}
Then we decompose the regret from time step~$H+1$ to $T$ as follows:
\begin{align}
\label{eq:regret-decomp}
\reg^{H+1:T}_i 
&= \sum_{t=H+1}^T c^i_t(x_t, u^i_t) - \min_{M_{i,\star} \in \calM_i}
    \sum_{t=H+1}^T c^i_t(x_t^{K_i}(M_{i,\star},M_{-i,t}), u^{i, K_i}_t(M_{i,\star},M_{-i,t})) \\
&= \underbrace{\sum_{t=H+1}^T (c^i_t(x_t, u^i_t)-l^i_t(M_{i,t-H-1:t}))}_{\text{Counterfactual state and action deviation error}}\\
&+ \underbrace{\sum_{t=H+1}^T l^i_t(M_{i,t-H-1:t}) - \min_{M_{i,\star} \in \calM_i} \sum_{t=H+1}^T l^i_t(M_{i,\star})}_{\text{Online gradient descent with memory regret}}\\ 
&+ \underbrace{\min_{M_{i,\star} \in \calM_i} \sum_{t=H+1}^T l^i_t(M_{i,\star}) 
- \min_{M_{i,\star} \in \calM_i}
    \sum_{t=H+1}^T c^i_t(x_t^{K_i}(M_{i,\star},M_{-i,t}), u^{i, K_i}_t(M_{i,\star},M_{-i,t}))}_{\text{Counterfactual state and action deviation optimality error}}\,.
\end{align}

We conclude the proof of Theorem~\ref{thm:local-gpc-regret-cp+} by collecting the upper bounds of each one of the terms established in sections~\ref{subsec:counterfa-state-action-error} (see \eqref{eq:bound-counterfa-error-final} with the choice $H \geq \frac{\log N^2 \sqrt{T}}{\bar{\gamma}}$) and \ref{subsec:ogd-regret-bound} (see \eqref{eq:regret-ogd-bound-proof} and \eqref{eq:regret-ogd-bound-final}) below. In conclusion, we obtain 
\begin{equation}
\reg^{H+1:T}_i = \tilde{\mathcal{O}}(N \sqrt{T})\,,
\end{equation}
where~$\tilde{O}$ hides polylogarithmic factors in~$N$ and polynomial factors in all other problem parameters but~$N$\,. Note that we pick $H \geq \frac{\log N^2 \sqrt{T}}{\bar{\gamma}} + \frac{\log 2 \bar{\kappa}}{\bar{\gamma}} = \frac{\log 2 \bar{\kappa} N^2 \sqrt{T}}{\bar{\gamma}}$ by combining the two conditions on the horizon length obtained in section~\ref{subsec:counterfa-state-action-error} and in Proposition~\ref{prop:bounds-states-actions}-\ref{lem-item-viii}. 

\subsection{Counterfactual state and action deviation error}
\label{subsec:counterfa-state-action-error}

In this section, we upper bound the first and last error terms in the regret decomposition in \eqref{eq:regret-decomp}, namely the error terms due to the difference between the realized incurred costs and the costs corresponding to the counterfactual states and actions. 

For~$t \geq H+1$, each term in the first error sum term can be upper bounded as follows:
\begin{align}
\label{eq:lip-cost-proof-thm}
&|c^i_t(x_t, u^i_t)-l^i_t(M_{i,t-H-1:t})| \nonumber\\
&= |c^i_t(x_t^{K}(M_{0:t-1}), u_t^{i,K}(M_{0:t})) - c^i_t(y_t^{K}(M_{t-1-H:t-1}), v_t^{i,K}(M_{t-1-H:t}))|\nonumber\\
&\leq G D (\|x_t^{K}(M_{0:t-1}) - y_t^{K}(M_{t-1-H:t-1})\|  + \|u_t^{i,K}(M_{0:t}) - v_t^{i,K}(M_{t-1-H:t})\|)\nonumber\\
&\leq 2 G D^2 (1-\bar{\gamma})^H\,,
\end{align}
where the first inequality stems from using Assumption~\ref{as:convexity}-\ref{as:grad-bound} together with Proposition~\ref{prop:bounds-states-actions} and the second inequality follows from using  Proposition~\ref{prop:bounds-states-actions}-\ref{lem-item-viii}, Eq.~\eqref{eq:uniform-diff-state-bound}. Note that the constant $D$ is defined in \eqref{eq:D-def}. 

Summing up the above inequality for $ H+1 \leq t \leq T$, we obtain 
\begin{equation}
\label{eq:counterfactual-error-bound-reg}
\sum_{t=H+1}^T (c^i_t(x_t, u^i_t)-l^i_t(M_{i,t-H-1:t})) \leq 2 G D^2 (T - H) (1-\bar{\gamma})^H\,.
\end{equation}

The last counterfactual error term in the regret decomposition in \eqref{eq:regret-decomp} can be upper bounded the exact same way as in \eqref{eq:counterfactual-error-bound-reg}. Indeed pick a policy parameterization
\begin{equation}
\tilde{M}_{i,\star} \in \underset{\tilde{M}_{i,\star} \in \calM_i}{\argmin} \sum_{t=H+1}^T c^i_t(x_t^{K_i}(M_{i,\star},M_{-i,t}), u^{i, K_i}_t(M_{i,\star},M_{-i,t}))\,.
\end{equation}
Then we can write 
\begin{align}
&\min_{M_{i,\star} \in \calM_i} \sum_{t=H+1}^T l^i_t(M_{i,\star}) 
- \min_{M_{i,\star} \in \calM_i}
    \sum_{t=H+1}^T c^i_t(x_t^{K_i}(M_{i,\star},M_{-i,t}), u^{i, K_i}_t(M_{i,\star},M_{-i,t})) \nonumber\\
   &= \min_{M_{i,\star} \in \calM_i} \sum_{t=H+1}^T l^i_t(M_{i,\star}) 
   - \sum_{t=H+1}^T c^i_t(x_t^{K_i}(\tilde{M}_{i,\star},M_{-i,t}), u^{i, K_i}_t(\tilde{M}_{i,\star},M_{-i,t})) \nonumber\\
   &\leq \sum_{t=H+1}^T (l^i_t(\tilde{M}_{i,\star}) - c^i_t(x_t^{K_i}(\tilde{M}_{i,\star},M_{-i,t}), u^{i, K_i}_t(M_{i,\star},M_{-i,t})))\,,
\end{align}
and the last sum is of the exact same form as the one we upper bounded in \eqref{eq:counterfactual-error-bound-reg}. Observe that Assumption~\ref{as:convexity}-\ref{as:grad-bound} together with Proposition~\ref{prop:bounds-states-actions} can be used again upon noticing that the results of Proposition~\ref{prop:bounds-states-actions} are also valid when fixing player $i$'s matrix to be~$\tilde{M}_{i,\star} \in \mathcal{M}_i$, it suffices to replace $M_{i,t-1-H:t}$ by the constant matrix $\tilde{M}_{i,\star}$ everywhere in the proof of Proposition~\ref{prop:bounds-states-actions} and using the fact that $\tilde{M}_{i,\star} \in \mathcal{M}_i$, the proof remains unchanged. 

In conclusion of this section, we have shown that 
\begin{multline}
\label{eq:bound-counterfa-error-final}
\sum_{t=H+1}^T (c^i_t(x_t, u^i_t)-l^i_t(M_{i,t-H-1:t}))\\
+ \min_{M_{i,\star} \in \calM_i} \sum_{t=H+1}^T l^i_t(M_{i,\star}) 
- \min_{M_{i,\star} \in \calM_i}
    \sum_{t=H+1}^T c^i_t(x_t^{K_i}(M_{i,\star},M_{-i,t}), u^{i, K_i}_t(M_{i,\star},M_{-i,t}))\\
\leq 4 G D^2 (T - H) (1-\bar{\gamma})^H\,.
\end{multline}

Now, note from the definition of $D$ in \eqref{eq:D-def} that $D = \mathcal{O}(N)\,.$ Therefore, the above error term scales in $T$ and $N$ as $\mathcal{O}(N^2 T (1-\bar{\gamma})^H)\,.$ Choosing $H \geq \frac{\log N^2 \sqrt{T}}{\bar{\gamma}}$ guarantees that the error term is of the order~$\tilde{O}(\sqrt{T}),$ where~$\tilde{O}$ hides polylogarithmic factors in~$N$ and polynomial factors in all other problem parameters but~$N$\,.

\subsection{Online gradient descent with memory regret bound}
\label{subsec:ogd-regret-bound}

Applying Theorem~\ref{thm:ogdm-regret} of Appendix~\ref{sec:oco-memory} in \citet{anava-hazan-mannor15online-learning-memory} gives: 
\begin{equation}
\label{eq:regret-ogd-bound-proof}
\sum_{t=H+1}^T l^i_t(M_{i,t-H-1:t}) - \min_{M_{i,\star} \in \calM_i} \sum_{t=H+1}^T l^i_t(M_{i,\star}) \leq \frac{D_0^2}{\eta} + (G_0^2 + L H^2 G_0) \eta T\,.
\end{equation}
It remains to check assumptions 1 to 3 of Theorem~\ref{thm:ogdm-regret} and specify the values of the diameter bound~$D_0$, the coordinate-wise Lipschitz constant~$L$ and the gradient bound constant~$G_0$.

As for the diameter boundedness, we can set~$D_0 = 4 \sqrt{2} \bar{\kappa}^2/\bar{\gamma}\,.$ This is because for any~$M_1, M_2 \in \mathcal{M}_i$ (for any $i \in [N]$), we have
\begin{equation}
\|M_1 - M_2\| \leq \sqrt{2} \left(\sum_{p=1}^H \|M_1^{[p-1]}\|  + \|M_2^{[p-1]}\| \right) \leq 4 \sqrt{2} \sum_{p=1}^H \bar{\kappa}^2 (1- \bar{\gamma})^p \leq 4 \sqrt{2} \bar{\kappa}^2/ \bar{\gamma}\,.
\end{equation}

Coordinatewise loss lipschitzness and gradient loss boundedness are respectively established in subsections \ref{subsec:coordinate-wise-lip} (Lemma~\ref{lem:coordinate-wise-lip}) and \ref{subsec:grad-bound} (Lemma~\ref{lem:grad-bound}) below.

Now in order to set the stepsize in the regret bound~\eqref{eq:regret-ogd-bound-proof} above, we focus on optimizing the dependence on the time horizon~$T$ as well as the total number~$N$ of agents. Observe now from Lemma~\ref{lem:coordinate-wise-lip} and Lemma~\eqref{lem:grad-bound} together with the definition of $D$ in \eqref{eq:D-def} that  
\begin{equation}
L = \mathcal{O}(D) = \mathcal{O}(N), \quad G_0 = \mathcal{O}(D) = \mathcal{O}(N)\,,
\end{equation}
where the big $\mathcal{O}(\cdot)$ notation hides problem parameters that are independent of~$N$. Hence the regret bound in \eqref{eq:regret-ogd-bound-proof} is of the order 
\begin{equation}
\mathcal{O}\left(\frac{1}{\eta} + N^2 \eta T \right)\,,
\end{equation}
where again the big $\mathcal{O}(\cdot)$ notation hides problem parameters that are independent of~$N$. Therefore we set $\eta = \Theta(1/ (N \sqrt{T}))$ and the final online gradient descent regret bound we obtain scales as 
\begin{equation}
\label{eq:regret-ogd-bound-final}
\mathcal{O}(N \sqrt{T})\,,
\end{equation}
which concludes the proof. Note here that we have optimized the stepsize to obtain the best dependence on both the time horizon~$T$ and notably the number~$N$ of agents. In particular, using the standard optimal upper bound giving the smallest regret bound (without focusing on any parameter in particular) would result in a worse dependence on the number of agents.

\subsubsection{Coordinate-wise loss lipschitzness}
\label{subsec:coordinate-wise-lip}

\begin{lemma}[Coordinate-wise loss lipschitzness]
\label{lem:coordinate-wise-lip}
For any agent~$i \in [N],$ let $(M_{i,t-1-H}, \cdots, M_{i,t-k}, \cdots, M_{i,t})$ and~$(M_{i,t-1-H}, \cdots, \tilde{M}_{i,t-k}, \cdots, M_{i,t})$ be two policy parameter sequences for agent~$i$ differing only in time step~$t-k$ for $k \in {0, \cdots, H}$ with $M_{i,t-k}$ replaced by $\tilde{M}_{i,t-k}$. Suppose that the policy parameters of other agents but~$i$ are given by the same sequence~$M_{-i,t-1-H:t}$ (i.e. the same for both joint policies, the difference is only in player $i$'s policy). Then we have for every $t \geq H+1,$
\begin{multline}
|l_t^i(M_{i,t-1-H}, \cdots, M_{i,t-k}, \cdots, M_{i,t}) - l_t^i(M_{i,t-1-H}, \cdots, \tilde{M}_{i,t-k}, \cdots, M_{i,t})|\\
\leq L \sum_{p=1}^H\|M_{i,t-k}^{[p]} - \tilde{M}_{i,t-k}^{[p]}\|\,, 
\end{multline}
where $L = 2 G D W \bar{\kappa}^2 \underset{j= 1, \cdots, N}{\max} \|B_j\|$ and $G, D$ are respectively defined in Assumption~\ref{as:convexity}-\ref{as:grad-bound} and~\eqref{eq:D-def}.
\end{lemma}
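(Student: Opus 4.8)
The plan is to reduce the loss difference to the differences between the idealized states and idealized actions, and then to exploit the fact that both of these idealized quantities depend \emph{linearly} on the policy parameters, so that perturbing the single block $M_{i,t-k}$ moves only one disturbance-state transfer term.

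\textbf{Step 1 (reduction via the gradient bound).} First I would verify that both evaluation points lie in the region where the gradient bound applies. By Proposition~\ref{prop:bounds-states-actions}-\ref{lem-item-viii}, every idealized state $y_t^{K}$ and idealized action $v_t^{i,K}$ arising from parameter sequences in $\mathcal{M}_i$ has norm at most $D$, so Assumption~\ref{as:convexity}-\ref{as:grad-bound} gives $\|\nabla c_t^i\| \leq GD$ on the relevant set. Writing $l_t^i = c_t^i(y_t^{K}, v_t^{i,K})$ and applying the standard convex-Lipschitz inequality along the segment joining the two evaluation points yields
\begin{equation*}
|l_t^i(\ldots M_{i,t-k}\ldots) - l_t^i(\ldots \tilde{M}_{i,t-k}\ldots)| \leq GD\,(\|\Delta y_t\| + \|\Delta v_t\|),
\end{equation*}
where $\Delta y_t$ and $\Delta v_t$ denote the differences in idealized state and action under the two parameter sequences.

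\textbf{Step 2 (isolating the perturbed transfer term).} Next I would use the explicit expression for the idealized state from \eqref{eq:state-evol-y}, namely $y_t^{K}(M_{t-1-H:t-1}) = \sum_{l=0}^{2H}\bar{\Psi}_{t-1,l}^{H}(M_{t-1-H:t-1})\,w_{t-1-l}$, together with the definition of $\bar{\Psi}$ in \eqref{eq:notations-psi-bar}. Since the other agents' parameters are held fixed, for $k\geq 1$ the block $M_{i,t-k}$ enters $\bar{\Psi}_{t-1,l}^{H}$ only through the single summand indexed by $k''=k-1$, i.e. through $\bar{A}_{K}^{\,k-1}B_i M_{i,t-k}^{[l-k]}\mathbf{1}_{l-k+1\in[1,H]}$. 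Re-indexing by $p=l-k+1$ isolates
\begin{equation*}
\Delta y_t = \sum_{p=1}^{H} \bar{A}_{K}^{\,k-1} B_i\,\big(M_{i,t-k}^{[p-1]} - \tilde{M}_{i,t-k}^{[p-1]}\big)\, w_{t-k-p},
\end{equation*}
and, because $v_t^{i,K} = -K_i y_t^{K} + \sum_{p=1}^H M_{i,t}^{[p-1]}w_{t-p}$, for $k\geq 1$ we get $\Delta v_t = -K_i\,\Delta y_t$, while for $k=0$ we have $\Delta y_t = 0$ and $\Delta v_t = \sum_{p=1}^H (M_{i,t}^{[p-1]}-\tilde{M}_{i,t}^{[p-1]})\,w_{t-p}$.

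\textbf{Step 3 (bounding with strong stability).} I would then apply the strong-stability estimate $\|\bar{A}_{K}^{\,j}\|\leq\bar{\kappa}(1-\bar{\gamma})^{j}\leq\bar{\kappa}$ from \eqref{eq:bound-bar-A-k-strong-stab}, the disturbance bound $\|w_t\|\leq W$ of Assumption~\ref{as:bounded-disturb}, $\|B_i\|\leq\max_j\|B_j\|$, and $\|K_i\|\leq\bar{\kappa}$. The key point is that, since only one time step is perturbed, a single power $\bar{A}_{K}^{\,k-1}$ appears and its geometric factor is harmlessly $\leq 1$, so no $1/\bar{\gamma}$ and no extra factor $H$ are produced. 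This gives $\|\Delta y_t\|\leq \bar{\kappa}W\max_j\|B_j\|\sum_{p=1}^H\|M_{i,t-k}^{[p-1]}-\tilde{M}_{i,t-k}^{[p-1]}\|$ and, for $k\geq 1$, $\|\Delta v_t\|\leq\bar{\kappa}^2 W\max_j\|B_j\|\sum_{p=1}^H\|M_{i,t-k}^{[p-1]}-\tilde{M}_{i,t-k}^{[p-1]}\|$ (with the weaker $k=0$ case dominated by the same bound). Summing and using $\bar{\kappa}+\bar{\kappa}^2\leq 2\bar{\kappa}^2$ (as $\bar{\kappa}\geq 1$), then combining with Step~1, produces exactly $L = 2GDW\bar{\kappa}^2\max_j\|B_j\|$, modulo the trivial relabeling of the block index $[p-1]\mapsto[p]$ used in the statement.

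\textbf{Main obstacle.} The only delicate part is the index bookkeeping in Step~2: matching the time index $t-k$ of the perturbed block against the shifted index in $\bar{\Psi}_{t-1,\cdot}^{H}$ and confirming that exactly one summand survives, which is precisely what ensures the constant $L$ carries neither a $1/\bar{\gamma}$ nor a factor $H$. Everything else is a routine assembly of the norm bounds already established in Proposition~\ref{prop:bounds-states-actions}.
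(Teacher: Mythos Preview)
Your proposal is correct and follows essentially the same route as the paper: reduce to $GD(\|\Delta y_t\|+\|\Delta v_t\|)$ via Assumption~\ref{as:convexity}-\ref{as:grad-bound} and Proposition~\ref{prop:bounds-states-actions}, isolate the single surviving summand of $\bar{\Psi}_{t-1,l}^{H}$ carrying the perturbed block, and bound with $\|\bar{A}_K^j\|\le\bar{\kappa}$ and $\|w\|\le W$. The only point you glide over is the $k=0$ case, where $\|\Delta v_t\|\le W\sum_p\|M_{i,t}^{[p-1]}-\tilde{M}_{i,t}^{[p-1]}\|$ is dominated by $\bar{\kappa}^2 W\max_j\|B_j\|\sum_p\|\cdots\|$ only under the harmless normalization $\bar{\kappa}^2\max_j\|B_j\|\ge 1$, which the paper invokes explicitly; otherwise your argument matches the paper's line for line (your exponent $k-1$ versus the paper's $k$ on $\bar{A}_K$ is just an index shift in the unrolling and is immaterial once you use $(1-\bar{\gamma})^j\le 1$).
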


\begin{proof}
The proof follows a similar approach to that of \citet[Lemma~5.6]{agarwal2019online}. However, we provide a complete proof of this result since our multi-agent setting is different and induces a different state evolution given that all the agents run \ref{eq:dac-i} policies.  

We introduce a few convenient notation for the rest of this proof. Define for every~$t \geq H\,,$
\begin{align}
\label{eq:notation-counterf-states-actions}
y_t^K &:= y_t^K(M_{t-1-H}, \cdots, M_{t-k}, \cdots, M_{t-1})\,, \nonumber\\
\tilde{y}_t^K &:= y_t^K(M_{t-1-H}, \cdots, \tilde{M}_{t-k}, \cdots, M_{t-1})\,,\nonumber\\
v_t^{i,K} &:= v_t^{i,K}(M_{t-1-H:t}) = - K_i y_t^K + \sum_{p=1}^H M_{i,t}^{[p-1]} w_{t-p}\,,\nonumber\\
\tilde{v}_t^{i,K} &:= v_t^{i,K}(M_{t-1-H}, \cdots, \tilde{M}_{t-k}, \cdots, M_t)\nonumber\\
&= - K_i \tilde{y}_t^K + \sum_{p=1}^H (\tilde{M}_{i,t}^{[p-1]} - M_{i,t}^{[p-1]}) w_{t-p} \mathbf{1}_{k=0}  + \sum_{p=1}^H  M_{i,t}^{[p-1]} w_{t-p}\,.
\end{align}
Using this notation, we have 
\begin{align}
\label{eq:interm-coordinate-wise-lip}
&|l_t^i(M_{i,t-1-H}, \cdots, M_{i,t-k}, \cdots, M_{i,t}) - M_{i,t-1-H}, \cdots, \tilde{M}_{i,t-k}, \cdots, M_{i,t}| \nonumber\\
&= |c_t^i(y_t^K,v_t^{i,K})  - c_t^i(\tilde{y}_t^K, \tilde{v}_t^{i,K})| \nonumber\\
&\leq |c_t^i(y_t^K,v_t^{i,K})  - c_t^i(\tilde{y}_t^K, v_t^{i,K})|  + |c_t^i(\tilde{y}_t^K,v_t^{i,K})  - c_t^i(\tilde{y}_t^K, \tilde{v}_t^{i,K})|\nonumber\\ 
&\leq G D (\|y_t^K -  \tilde{y}_t^K\| + \|v_t^{i,K} - \tilde{v}_t^{i,K}\|)\,,
\end{align}
where the last step uses Assumption~\ref{as:convexity}-\ref{as:grad-bound}.

Recall that we can write the counterfactual states~$y_t^K, \tilde{y}_t^K$ using the transition matrix (see \eqref{eq:state-evol-y}):
\begin{align}
y_t^K &:= \sum_{l=0}^{2H} \bar{\Psi}_{t-1,l}^{H}(M_{t-1-H:t-1}) w_{t-1-l}\,,\\
\tilde{y}_t^K &:= \sum_{l=0}^{2H} \bar{\Psi}_{t-1,l}^{H}(M_{t-1-H}, \cdots, \tilde{M}_{t-k}, \cdots, M_{t-1}) w_{t-1-l}\,.
\end{align}
Note for clarification that in the notation above $\tilde{M}_{t-k}$ is identical to $M_{t-k}$ except for its $i$-th matrix element, i.e. $\tilde{M}_{j,t-k} = M_{j,t-k}$ for every $j \neq i$.  
Therefore, using the definition of the state transfer matrix in \eqref{eq:state-evol-under-dac-w-all} the difference of counterfactual states can be expressed as follows:
\begin{equation}
\label{eq:diff-counterfactual-states-expression}
y_t^K - \tilde{y}_t^K = \sum_{l=0}^{2H} \bar{A}_K^k B_i (M_{i,t-k}^{[l-k-1]}- \tilde{M}_{i,t-k}^{[l-k-1]}) \mathbf{1}_{l-k \in [1,H]}  w_{t-l}\,.
\end{equation}
We can now bound the difference of counterfactual states using $(\bar{\kappa}, \bar{\gamma})$-strong stability and boundedness of the disturbance sequence by $W$: 
\begin{align}
\label{eq:bound-diff-counterfactual-states-interm}
\|y_t^K - \tilde{y}_t^K\| \leq W \bar{\kappa} (1-\bar{\gamma})^k \cdot \|B_i\| \sum_{p=1}^H \|M_{i,t-k}^{[p-1]}- \tilde{M}_{i,t-k}^{[p-1]}\|\,, 
\end{align}
where the bound uses a re-indexation of the sum in \eqref{eq:diff-counterfactual-states-expression} with $p=l-k\,.$
As for the difference of counterfactual actions, it stems from \eqref{eq:notation-counterf-states-actions} that: 
\begin{equation}
\tilde{v}_t^{i,K} - v_t^{i,K} = K_i (y_t^K - \tilde{y}_t^K) \mathbf{1}_{k \in [1:H]} + \sum_{p=1}^H (\tilde{M}_{i,t}^{[p-1]} - M_{i,t}^{[p-1]}) w_{t-p} \mathbf{1}_{k=0}\,.
\end{equation}
As a consequence, we have 
\begin{align}
\label{eq:bound-diff-counterfactual-actions-interm}
\|\tilde{v}_t^{i,K} - v_t^{i,K}\|
&\leq \|K_i\| \cdot \|y_t^K - \tilde{y}_t^K\| \mathbf{1}_{k \in [1:H]} + W \sum_{p=1}^H \|\tilde{M}_{i,t}^{[p-1]} - M_{i,t}^{[p-1]}\| \mathbf{1}_{k=0}\\ 
&\leq W \bar{\kappa}^2 \cdot \max_{j=1, \cdots, N}\|B_j\| \sum_{p=1}^H \|M_{i,t-k}^{[p-1]}- \tilde{M}_{i,t-k}^{[p-1]}\|\,,
\end{align}
where the last inequality stems from using the bound \eqref{eq:bound-diff-counterfactual-states-interm} together with the simplifying assumption that $\bar{\kappa}^2 \max_{j=1, \cdots, N}\|B_j\| \geq 1$ (without any loss of generality)\,.

Combining \eqref{eq:interm-coordinate-wise-lip} with the bounds \eqref{eq:bound-diff-counterfactual-states-interm} and \eqref{eq:bound-diff-counterfactual-actions-interm} yields the desired inequality and concludes the proof: 
\begin{align}
|l_t^i(M_{i,t-1-H}, \cdots, M_{i,t-k}, \cdots, M_{i,t}) - M_{i,t-1-H}, \cdots, \tilde{M}_{i,t-k}, \cdots, M_{i,t}| \nonumber\\
\leq 2 GD W \bar{\kappa}^2 \max_{j=1, \cdots, N}\|B_j\| \sum_{p=1}^H\|M_{i,t-k}^{[p]} - \tilde{M}_{i,t-k}^{[p]}\|\,.
\end{align}

\end{proof}

\subsubsection{Gradient loss boundedness}
\label{subsec:grad-bound}

\begin{lemma}
\label{lem:grad-bound}
Let $M = (M_i, M_{-i})$ be s.t. $\|M_i^{[p]}\| \leq \tau (1- \bar{\gamma})^p$ for $p \in \{0, \cdots, H-1\}$ and  for every $i \in [N]\,.$  Then we have for any $i \in [N],$ 
\begin{equation}
\|\nabla_{M_i} l_t^i(M_i)\|_{\text{F}} \leq G D \sqrt{H} d W \left(1 + \frac{2 \bar{\kappa}^2 \max_{j= 1, \cdots, N} \|B_j\|}{\bar{\gamma}}\right)\,, 
\end{equation}
where $G, D$ are respectively defined in Assumption~\ref{as:convexity}-\ref{as:grad-bound} and~\eqref{eq:D-def} whereas $d$ is the dimension of the state vector.
\end{lemma}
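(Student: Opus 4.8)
The plan is to exploit the structure of the idealized loss $\ell_t^i(M_i) = c_t^i(y_t^K(M_i), v_t^{i,K}(M_i))$: it is the composition of the convex cost $c_t^i$ with the idealized state $y_t^K$ and action $v_t^{i,K}$, both of which are \emph{affine} in the parameters $M_i$ (the transfer matrices $\bar{\Psi}$ are linear in the DAC parameters by \eqref{eq:notations-psi-bar}). By the chain rule, for each block $M_i^{[p]}$,
\begin{equation*}
\nabla_{M_i^{[p]}} \ell_t^i = \big(\partial_{M_i^{[p]}} y_t^K\big)^{*}\, \nabla_x c_t^i + \big(\partial_{M_i^{[p]}} v_t^{i,K}\big)^{*}\, \nabla_u c_t^i\,,
\end{equation*}
so the first step is to bound the cost gradients $\|\nabla_x c_t^i\|, \|\nabla_u c_t^i\| \leq G D$ using Assumption~\ref{as:convexity}-\ref{as:grad-bound} together with the uniform state/action bound $D$ supplied by Proposition~\ref{prop:bounds-states-actions}-\ref{lem-item-viii}. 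It then remains to bound the Jacobians of $y_t^K$ and $v_t^{i,K}$ with respect to each block $M_i^{[p]}$.

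For the state Jacobian I would start from the transfer-matrix representation $y_t^K = \sum_{l=0}^{2H} \bar{\Psi}_{t-1,l}^{H}(M_i)\, w_{t-1-l}$ of \eqref{eq:state-evol-y}, and recall from \eqref{eq:notations-psi-bar} that $\bar{\Psi}_{t-1,l}^{H}$ is linear in the $M_i^{[\cdot]}$. Differentiating with respect to $M_i^{[p]}$ isolates exactly the summands $\bar{A}_K^{\,l-p-1} B_i (\cdot)\, w_{t-1-l}$ with $l-p-1 \in [0,H]$; applying the strong-stability bound $\|\bar{A}_K^l\| \le \bar{\kappa}(1-\bar{\gamma})^l$ from \eqref{eq:bound-bar-A-k-strong-stab}, the disturbance bound $\|w_t\| \le W$, and summing the resulting geometric series over $l$ yields $\|\partial_{M_i^{[p]}} y_t^K\| \le \bar{\kappa} W \max_j\|B_j\|/\bar{\gamma}$. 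For the action I would use $v_t^{i,K} = -K_i y_t^K + \sum_{p=1}^H M_i^{[p-1]} w_{t-p}$, which creates two channels: an indirect one through $-K_i\,\partial_{M_i^{[p]}} y_t^K$ carrying an extra factor $\|K_i\| \le \bar{\kappa}$ (hence $\bar{\kappa}^2 W \max_j\|B_j\|/\bar{\gamma}$), and a direct one from the term $M_i^{[p]} w_{t-p-1}$ whose Jacobian has norm at most $\|w_{t-p-1}\| \le W$. Summing the two channels and using $\bar{\kappa} \le \bar{\kappa}^2$ gives the combined bound $G D\, W\,(1 + 2\bar{\kappa}^2 \max_j\|B_j\|/\bar{\gamma})$ per block.

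Finally I would assemble the pieces: combining the per-block Jacobian bounds with the cost-gradient bound $G D$, passing from the entry-wise/directional derivative estimates to the Frobenius norm of the $k_i \times d$ gradient block contributes the state-dimension factor $d$ (the disturbance appears through its individual coordinates), and combining the $H$ blocks via $\|\nabla_{M_i}\ell_t^i\|_{\text{F}}^2 = \sum_{p=0}^{H-1} \|\nabla_{M_i^{[p]}}\ell_t^i\|_{\text{F}}^2$ yields the $\sqrt{H}$ factor, giving the claimed bound. I expect the main obstacle to be bookkeeping rather than any deep estimate: one must carefully track which summands of $\bar{\Psi}$ survive differentiation with respect to a \emph{fixed} block $M_i^{[p]}$, cleanly separate the direct and the state-mediated dependence of the action on $M_i$, and convert consistently between the directional derivative bounds and the Frobenius norm so that the dimensional factors $d$ and $\sqrt{H}$ emerge exactly as stated.
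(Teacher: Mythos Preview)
Your proposal is correct and follows essentially the same approach as the paper, which defers to \citet[Lemma~5.7]{agarwal2019online} and omits the details: apply the chain rule to $\ell_t^i = c_t^i(y_t^K, v_t^{i,K})$, bound the cost gradients by $GD$ via Assumption~\ref{as:convexity}-\ref{as:grad-bound} and Proposition~\ref{prop:bounds-states-actions}-\ref{lem-item-viii}, and bound the per-block Jacobians of the affine maps $M_i^{[p]} \mapsto y_t^K, v_t^{i,K}$ using the transfer-matrix representation, strong stability, and the disturbance bound. Your bookkeeping of the direct vs.\ state-mediated action channels and of the dimensional factors $d$ and $\sqrt{H}$ is exactly what the referenced argument does in the multi-agent setting.
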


\begin{proof}
The proof is similar to that of \citet[Lemma~5.7]{agarwal2019online} and is therefore omitted.
\end{proof}

\section{Proof of Theorem~\ref{thm:local-gpc-regret-cp+-lip}}

Here, we develop the proof of Theorem~\ref{thm:local-gpc-regret-cp+-lip}, restated here:

\thmgloballip*

\begin{proof}
The proof of this refined result follows the same lines as the proof of Theorem~\ref{thm:local-gpc-regret-cp+}. We indicate here the required modifications to establish the result of Theorem~\ref{thm:local-gpc-regret-cp+-lip} using the uniform Lipschitz cost assumption~\ref{as:lip-costs} instead of gradient boundedness in Assumption~\ref{as:convexity}~-\ref{as:grad-bound}. 

Recall the regret decomposition in \eqref{eq:regret-decomp} in Section~\ref{subsec:regret-decomp-proof-thm}. We adapt the bounds in \ref{subsec:counterfa-state-action-error} and \ref{subsec:ogd-regret-bound} to our new assumption. 
\begin{itemize}[
    leftmargin=1em,
    rightmargin=1em,
]
\item \textbf{Counterfactual state and action deviation error.} For this term, it suffices to observe that under the uniform Lipschitz cost assumption~\ref{as:lip-costs}, we can replace GD in \eqref{eq:lip-cost-proof-thm} by the uniform Lipschitz constant~$\bar{L}$ (which is supposed to be independent of~$N$). The rest of the proof is unchanged and the resulting counterfactual  state-action deviation error is of the order: 
\begin{equation}
\label{eq:new-bound-counterfa-error}
\mathcal{O}(2 \bar{L} D (1-\bar{\gamma})^H)\,,
\end{equation}
where we recall that $D$ is defined in \eqref{eq:D-def} and $D = \mathcal{O}(N)\,.$

\item \textbf{Online gradient descent with memory regret bound.} We recall here from \eqref{eq:regret-ogd-bound-proof} that this regret term is bounded by
\begin{equation}
\label{eq:new-regret-bound-error-lip}
\frac{D_0^2}{\eta} + (G_0^2 + L H^2 G_0) \eta T\,.
\end{equation}
It suffices to reevaluate the coordinate-wise Lipschitzness constant~$L$ and the gradient bound~$G_0$ made explicit in Lemma~\ref{lem:coordinate-wise-lip} and Lemma~\ref{lem:grad-bound} respectively. We now make the two following observations regarding these two constants and their dependence on the number~$N$ of agents: 
\begin{enumerate}[
    label={(\roman*)}
]
\item Again using Assumption~\ref{as:lip-costs}, we can replace $GD$ by $\bar{L}$ in \eqref{eq:interm-coordinate-wise-lip} in the proof of Lemma~\ref{lem:coordinate-wise-lip}, the rest of the proof is unchanged. The result is that the coordinate-wise Lipschitz constant~$L$ of Lemma~\ref{lem:coordinate-wise-lip} becomes $L = 2 \bar{L} W \bar{\kappa}^2 \max_{j= 1, \cdots, N} \|B_j\|$ and therefore independent of the number of agents.

\item Similarly, the constant~$GD$ in the gradient bound of Lemma~\ref{lem:grad-bound} can be replaced by~$\bar{L}$ (which is independent of~$N$), resulting in a gradient bound which is independent of the number of agents.  
\end{enumerate}
\end{itemize}

Combining the above insights, it suffices to choose~$H \geq \log(2\bar{\kappa}N\sqrt{T})/\bar{\gamma}$ in \eqref{eq:new-bound-counterfa-error} and~$\eta = \Theta(1/\sqrt{T})$ in \eqref{eq:new-regret-bound-error-lip} to obtain the desired result for $T \geq H+1$:
\begin{equation}
\reg^{H+1:T}_i(\mathcal{A}_i, \mathcal{A}_{-i}, \Pi^{i,\text{DAC}}) 
= \widetilde\calO\Big(\sqrt{T}\Big)\,,
\end{equation}
where~$\tilde{\mathcal{O}}(\cdot)$ hides polynomial factors in $W, \bar{\gamma}^{-1}, \bar{\kappa}, \max_j\|B_j\|, G, d$ and only polylogarithmic factors in~$T$ and~$N$. This concludes the proof.
\end{proof}

\section{Proofs of Regret Lower Bounds}
\label{app:lower-bound}

In this section, we develop the proof of 
Theorem~\ref{thm:reg-lb}, which we restate here:

\thmreglb*

\subsection{Proof of Theorem~\ref{thm:reg-lb}}

Fix agent $i \in [N]$. To prove the theorem,
we specify an LDS and a (randomized) sequence of
cost functions $\{c^i_t\}$, and we will prove that
the lower bound holds in expectation.
By the probabilistic method, this implies
the existence of a deterministic sequence of cost
functions where the lower bound holds
with probability 1.
We begin by specifying the LDS instance and
cost function constructions:

\paragraph{Construction of LDS instance.}
We specify a scalar-valued instance of~\eqref{eq:state-lds},
where all $A, B_j, w_t \in \R$. Specifically, we use 
the following settings which implies
a state evolution of
\begin{equation}
  \begin{cases}
    \; A = 0 \\
    \; B_i = \frac{1}{2} \\
    \; B_j = 0 \;\text{for all $j \neq i \in [N]$} \\
    \; w_t = 0 \;\text{for all $t \in [T]$} \\
    \; x_0 \in (0, 1]
  \end{cases}
  \implies\quad
  x_{t+1} = \frac{1}{2} u^i_t \quad\text{for all $t \ge 0$}.
  \label{eq:lb-state-lds}
\end{equation}
In other words, due to the construction, the state $x_t$
is driven only by the control of the $i$'th agent. 
Observe also that for the scalar LDS $(0, 1/2)$ 
as specified in~\eqref{eq:lb-state-lds}, 
we have by Definition~\ref{def:kappa-gamma-st-stab}
that a linear controller $K \in \R$ is $(\kappa, \gamma)$-strongly stable
when $|K| \le \kappa$ and $|K/2| \le 1-\gamma$, for $\gamma \in (0, 1)$. 

\paragraph{Construction of Agent $i$ cost functions.}
We now construct a hard sequence of randomized
cost functions for agent $i$, which are roughly
inspired by lower bound constructions in
(adversarial) online linear optimization settings
(see e.g., \citet[Section 4]{arora2012multiplicative}). 
Specifically, for all times $t \ge 0$ and $x, u \in \R$,
let $c^i_t$ be given by
\begin{equation}
    c^i_t(x, u)
    \;=\;
    \left\langle 
    \begin{pmatrix} u \\ 1-u \end{pmatrix}, 
    \begin{pmatrix} b_t \\ 1/2 \end{pmatrix}
    \right\rangle
    \;=\;
    u\big(b_t - \tfrac{1}{2}\big) + \tfrac{1}{2} 
    \label{eq:lb-costs}
\end{equation}
for all $x, u \in \R$,
where each $b_t$ is an 
independent $\bern(1/2)$ random variable
(i.e., each  $b_t = 0$ with probability half and $b_t = 1$
with probability half). 

Under the LDS of~\eqref{eq:lb-state-lds}
and cost functions of~\eqref{eq:lb-costs}, 
in show a expected lower bound on 
the regret of agent $i$, we establish bounds 
on (i) the expected cost of agent $i$, and
(ii) the expected counterfactual cost of the 
best fixed linear controller in hindsight.

\paragraph{Expected cost of agent $i$.}
Under the cost functions of~\eqref{eq:lb-costs},
it is straightforward to compute the
total expected cost of agent $i$: 
\begin{proposition}
\label{prop:lb-expected-cost}
    Let $\{u^i_t\}$ be the sequence of controls 
    of agent $i$ using any algorithm and
    with respect to the cost sequence $\{c^i_t\}$
    from~\eqref{eq:lb-costs}. 
    Let $\{x_t\}$ be the resulting state evolution
    as in~\eqref{eq:lb-state-lds}. 
    Then over the randomness of $\{b_t\}$, 
    \begin{equation}
    \E\Big[\sum_{t=0}^T c^i_t(x_t, u^i_t) \Big]
    \;=\;
    \frac{T}{2} \;.
    \label{eq:lb-expected-cost}
    \end{equation} 
\end{proposition}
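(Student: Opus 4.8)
The plan is to exploit the fact that the constructed cost $c^i_t$ does not depend on the state $x$ at all, so the state evolution in~\eqref{eq:lb-state-lds} is irrelevant for this particular computation (it will enter only in the analysis of the comparator term that follows). The entire claim therefore reduces to a statement about the scalar control sequence $\{u^i_t\}$ and the Bernoulli variables $\{b_t\}$, and crucially it should hold for \emph{any} algorithm $\calA_i$.

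The key structural observation I would make precise is a measurability/independence claim dictated by the online protocol: when agent $i$ selects $u^i_t$, it has observed only $x_0,\dots,x_t$ and the costs incurred strictly before round $t$; it has not yet seen $c^i_t$, and hence carries no information about $b_t$. Formally, letting $\mathcal{F}_{t-1}$ denote the $\sigma$-algebra generated by $b_0,\dots,b_{t-1}$ together with any internal randomness of $\calA_i$, the control $u^i_t$ is $\mathcal{F}_{t-1}$-measurable and therefore independent of $b_t$ (which is independent of $\mathcal{F}_{t-1}$ by construction). Note also that the other agents' controls $\{u^{-i}_t\}$ neither influence the state here (since $B_j = 0$ for $j \neq i$) nor enter $c^i_t$, so they cannot help agent $i$ anticipate $b_t$.

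Granting this, I would compute the per-round expected cost directly. Using the closed form $c^i_t(x_t,u^i_t) = u^i_t\big(b_t - \tfrac12\big) + \tfrac12$ from~\eqref{eq:lb-costs}, linearity of expectation, and the independence of $u^i_t$ and $b_t$,
\begin{equation*}
\E\big[c^i_t(x_t,u^i_t)\big] = \E\big[u^i_t\big]\cdot\E\big[b_t - \tfrac12\big] + \tfrac12 = \E\big[u^i_t\big]\cdot 0 + \tfrac12 = \tfrac12,
\end{equation*}
where $\E[b_t - \tfrac12] = 0$ because $b_t \sim \bern(1/2)$. Summing this identity over the horizon and applying linearity of expectation once more pins the total expected cost to $\tfrac12$ times the number of rounds, giving the stated value (up to the indexing convention at the endpoints, which is immaterial for the eventual $\Omega(\sqrt{T})$ bound).

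The only genuine subtlety—and the step I would be most careful to state correctly—is the independence claim in the second paragraph; everything after it is a one-line calculation. The conceptual point worth emphasizing is that this argument makes no use of the specific update rule of $\calA_i$: because the cost is linear in $u$ with a mean-zero coefficient $b_t - \tfrac12$ that is revealed only \emph{after} the control is committed, no algorithm can do better than the unconditional mean $\tfrac12$ in expectation on this instance. This is precisely what forces all the regret to come from the comparator term analyzed next.
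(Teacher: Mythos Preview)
Your proof is correct and follows essentially the same approach as the paper: compute the per-round expected cost using $\E[b_t-\tfrac12]=0$ and sum. If anything, you are more careful than the paper, which simply asserts $\E[c^i_t(x,u)]=\tfrac12$ for fixed $x,u$ and then invokes linearity without explicitly justifying why this extends to the random $u^i_t$; your measurability/independence paragraph fills exactly that gap.
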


\begin{proof}
    For any fixed $t \ge 0$, and any $x, u \in \R$, 
    observe under the randomness of $b_t$ that 
    \begin{equation}
        \E\big[c^i_t(x, u)\big]
        \;=\;
        \E\Big[u(b_t - \tfrac{1}{2}) + \tfrac{1}{2}\Big]
        \;=\;
        \frac{1}{2} \;.
        \label{eq:lb-expected-cost-t}
    \end{equation}
    Then by linearity of expectation we have
    $\E[\sum_{t=1}^T c^i_t(x_t, u^i_t)] = \frac{T}{2}$.
\end{proof}

\paragraph{Expected cost of comparator.}
Let $\calK_i \subseteq \R$ be the set of strongly stable linear
controllers. For a fixed $K \in \calK_i$, let (by slight abuse of
notation) $\widetilde x^K_t$ denote the counterfactual state evolution
on the LDS in~\eqref{eq:lb-state-lds} using the 
fixed linear controller with (counterfactual) control sequence
$\widetilde u^{i, K}_t = K \widetilde x^K_t$ at all times $t \ge 0$. 
Then for each $k \in \calK$, let $\Phi(k)$ be the random
variable 
\begin{equation}
    \Phi(K) 
    \;:=\;
    \sum_{t=1}^T c^i_t(\widetilde x^K_t, \widetilde u^{i, K}_t)
    \;=\;
    \sum_{t=1}^T 
    \Big(
    K \widetilde x^K_t\big(b_t - \tfrac{1}{2}\big) + \tfrac{1}{2}
    \Big)
    \;.
    \label{eq:phi-K-0}
\end{equation}
Using a fixed linear controller $K$, and under the 
assumption that $x_0 \in (0, 1]$ observe from~\eqref{eq:lb-state-lds} 
that the counterfactual state evolution of $\widetilde x^{K}_t$
can be written as 
\begin{equation*}
    \widetilde x^K_t 
    \;=\;
    \tfrac{1}{2} K \widetilde x^K_{t-1} 
    \;=\;
    \big(\tfrac{1}{2} K)^t x_{0} \;.
\end{equation*}
It follows that 
\begin{equation*}
    \Phi(K) 
    \;:=\;
    \sum_{t=1}^T
    \Big(
    \frac{K^{t+1}}{2^t} \cdot x_0 \Big(b_t - \frac{1}{2}\Big) 
    + \frac{1}{2}
    \Big) \;.
\end{equation*}
Letting $\calK_+ = \calK_i \cap [0, 1] \subset \calK_i$, observe that 
(with probability 1)
\begin{equation}
    \min_{K \in \calK_i} \Phi(K) 
    \;\le\;
    \min_{K \in \calK_+} \Phi(K) 
    \;\le\;
    \Phi(0) 
    \;=\;
    \sum_{t=1}^T \frac{1}{2} \;=\;
    \frac{T}{2} \;.
    \label{eq:lb-minphiK-upper}
\end{equation}
Moreover, for $K \in [0, 1]$ and $x_0 \in (0, 1]$, and using 
the fact that $b_t \in \{0, 1\}$ by definition,
observe that we can bound (with probability 1)
\begin{equation}
    \Big|
    \frac{K^{t+1}}{2^t} \cdot x_0 (b_t - \tfrac{1}{2}) 
    + \frac{1}{2}
    \Big|
    \;\le\;
    1
    \label{eq:psi-bounded}
\end{equation}
for all $t \in [T]$.
Finally, for $x_0 \in (0, 1]$, observe that 
the image of $\Phi$ over $\calK_+$ is non-singleton. 

\paragraph{Tail bounds on cost of comparator.}
It remains to derive an upper bound on the 
expected cost of the optimal comparator of the form
$\E[\min_{K \in \calK} \Phi(K)] \le \frac{T}{2} - \Omega(\sqrt{T})$. 
For this, we will establish under the randomness
of $\{b_t\}$ that the random variable
$\min_{K \in \calK_+}\Phi(K)$ is small with sufficiently
large probability. 
Fix $K$ and define
\begin{equation*}
    \psi_t(b_t, K) 
    \;=\;
    \frac{K^{t+1}}{2^t} \cdot x_0 (b_t - \tfrac{1}{2}) 
    + \frac{1}{2} \;.
\end{equation*}
It follows that we can write 
\begin{equation*}
    \Phi(K) 
    \;=\;
    \sum_{t=1}^T \psi_t(b_t, K) \;,
\end{equation*}
which by~\eqref{eq:psi-bounded} means
$\Phi(K)$ is the sum of $T$
independent and bounded random variables. 

We now leverage the following lower bound 
on the tail of a sum of bounded random variables:

\begin{lemma}[\citet{zhang2020non}, Corollary 2]
    \label{lem:tail-lb}
    Let $Z = Z_1 + \dots + Z_T$ such that
    $\E[Z_t] = 0$ and $|Z_t| \le C$ for all $t \in [T]$
    and some absolute constant $C > 0$. 
    Then there exist absolute constants $0 < a < 1$
    and $p > 0$ such that 
    \begin{equation*}
        \Pr\Big(
            Z \le -a \cdot \sqrt{T}
        \Big)
        \;\ge\;
        p \;.
    \end{equation*}
\end{lemma}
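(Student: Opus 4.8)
The statement is a one-sided anti-concentration (reverse tail) bound, so the plan is to prove it by the method of moments combined with a Paley--Zygmund argument applied to the negative part of $Z$. Throughout I would treat the $Z_t$ as independent (as in the application, where $Z_t = \psi_t(b_t,K)-\tfrac12$ is driven by the independent bits $b_t$) and write $V := \operatorname{Var}(Z) = \sum_{t=1}^T \operatorname{Var}(Z_t)$. The bound $|Z_t|\le C$ gives $V \le C^2 T$ automatically, while the non-degeneracy of the summands (i.i.d.\ with positive variance, say) supplies a matching $V \ge c\,T$; this two-sided control $V = \Theta(T)$ is exactly what makes the resulting constants $a,p$ absolute.

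First I would control the fourth moment. Expanding $Z^4=\big(\sum_t Z_t\big)^4$ and using independence together with $\E[Z_t]=0$, all terms containing some $Z_t$ to the first power vanish, leaving only the pair terms and the diagonal terms, so that $\E[Z^4] = \sum_t \E[Z_t^4] + 3\sum_{s\neq t}\E[Z_s^2]\E[Z_t^2] \le \sum_t \E[Z_t^4] + 3V^2$. Since $|Z_t|\le C$ forces $\E[Z_t^4]\le C^2\E[Z_t^2]$, summing yields $\E[Z^4]\le 3V^2 + C^2 V \le \Gamma\,V^2$ for an absolute constant $\Gamma$ (using $V \ge c$). Next I would lower bound the first absolute moment: by H\"older's inequality with exponents $\tfrac32$ and $3$, $\E[Z^2] = \E\big[|Z|^{2/3}|Z|^{4/3}\big] \le (\E|Z|)^{2/3}(\E[Z^4])^{1/3}$, which rearranges to $\E|Z| \ge (\E[Z^2])^{3/2}/\sqrt{\E[Z^4]} \ge \sqrt{V}/\sqrt{\Gamma}$, and hence $\E|Z| = \Omega(\sqrt{T})$ by the lower bound on $V$.

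The last step converts this into the one-sided \emph{lower} tail using $\E[Z]=0$. Writing $Z_- := \max(-Z,0)$, the identity $\E[Z_+]=\E[Z_-]$ forced by $\E[Z]=0$ gives $\E[Z_-]=\tfrac12\E|Z| =: m = \Omega(\sqrt{T})$, while $\E[Z_-^2]=\E[Z^2\mathbf{1}_{Z<0}]\le V$. Applying the Paley--Zygmund inequality to the nonnegative variable $Z_-$ with parameter $\theta=\tfrac12$,
\[
  \Pr\!\big(Z_- > \tfrac{m}{2}\big) \;\ge\; \tfrac14\,\frac{(\E[Z_-])^2}{\E[Z_-^2]} \;\ge\; \tfrac14\,\frac{m^2}{V} \;=\; \Omega(1),
\]
where the final estimate uses $m=\Omega(\sqrt T)$ and $V = O(T)$. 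Since the event $\{Z_- > m/2\}$ is precisely $\{Z < -m/2\}$, choosing $a$ so that $m/2 \ge a\sqrt{T}$ gives $\Pr(Z \le -a\sqrt{T}) \ge p$ for absolute constants $a,p>0$, as claimed.

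The hard part is obtaining the \emph{one-sided} tail rather than the two-sided bound $\Pr(|Z|\ge a\sqrt T)\ge p$ that the moment machinery produces directly: for asymmetric summands one cannot simply split the two tails, and it is the passage from $\E|Z|$ to $\E[Z_-]$ via $\E[Z]=0$ that resolves this cleanly and uniformly in $T$. An alternative route would be a non-asymptotic Berry--Esseen estimate, comparing $\Pr(Z\le -a\sqrt V)$ to the (strictly positive) Gaussian lower-tail probability with the $O(1/\sqrt T)$ Berry--Esseen error absorbed for $T\ge T_0$, and treating the finitely many small-$T$ cases directly; I would nonetheless prefer the moment argument above, as it is elementary and avoids any case split on $T$.
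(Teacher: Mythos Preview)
The paper does not prove this lemma; it is quoted as Corollary~2 of \citet{zhang2020non} and invoked as a black box in the proof of Theorem~\ref{thm:reg-lb}, so there is no in-paper argument to compare against. Your moment-method route---fourth-moment control for independent bounded mean-zero summands, the H\"older step $\E[Z^2]\le(\E|Z|)^{2/3}(\E[Z^4])^{1/3}$ to lower-bound $\E|Z|$, the identity $\E[Z_-]=\tfrac12\E|Z|$ forced by $\E[Z]=0$, and Paley--Zygmund on $Z_-$---is a correct and standard elementary proof of this kind of one-sided anti-concentration bound.

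You are also right that the statement as written is incomplete: without independence and a lower bound on the total variance (e.g.\ take $Z_t\equiv 0$) the conclusion fails, and you add these hypotheses explicitly. One caution on your remark that the variance lower bound is ``as in the application'': in the paper's actual construction the centered summands are $Z_t=\tfrac{K^{t+1}}{2^t}x_0(b_t-\tfrac12)$ with $K\in[0,1]$, so their variances decay geometrically and $\operatorname{Var}(Z)=O(1)$ rather than $\Theta(T)$. That is an issue with how the paper applies the lemma rather than with your proof of it, but it means this particular justification for $V\ge cT$ is not available.
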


By centering $\psi'_t = \psi_t(b_t, K) - \tfrac{1}{2}$, 
we have $\E[\psi'_t] = 0$ and each $|\psi'_t|$ bounded 
(which follows from expression~\eqref{eq:psi-bounded}). 
Then  applying Lemma~\ref{lem:tail-lb} to the sum
$\sum_{t=1}^T \psi'_t$, we conclude that 
there exist absolute constants $a, p > 0$ such that
\begin{equation}
    \Pr \Big(
        \Phi(K) \le 
        \frac{T}{2} - a \cdot \sqrt{T}
    \Big)
    \;\ge\;
    p \;.
\end{equation}
Moreover, as 
$\phi(K) \le \frac{T}{2} - a \cdot \sqrt{T}
\implies 
\min_{K \in \calK_+} \Phi(K) \le  \frac{T}{2} - a \cdot \sqrt{T}$,
we further have
\begin{equation}
  \Pr\Big(
        \min_{K \in \calK_+} \Phi(K) \le 
        \frac{T}{2} - a \cdot \sqrt{T}
    \Big)
    \;\ge\;
    \Pr \Big(
        \Phi(K) \le 
        \frac{T}{2} - a \cdot \sqrt{T}
    \Big)
    \;\ge\;
    p \;.
    \label{eq:min-phiK-tail}
\end{equation}
Finally, since by expression~\eqref{eq:lb-minphiK-upper} 
we have  $\min_{K \in \calK} \Phi(K) \le \frac{T}{2}$ with 
probability 1, it follows that 
\begin{equation}
    \E\Big[
        \min_{K \in \calK} \Phi(K)
    \Big]
    \;\le\;
    \E\Big[
        \min_{K \in \calK_+} \Phi(K)
    \Big]
    \;\le\;
    - p a\sqrt{T} + \frac{T}{2}
    \label{eq:min-phiK-expectation}
\end{equation}
Combining expressions~\eqref{eq:lb-expected-cost}
and~\eqref{eq:min-phiK-expectation}, we conclude that 
over the randomness of $\{b_t\}$
\begin{equation*}
    \E\Big[
        \sum_{t=1}^T c^i_t(x_t, u^i_t)
        \;-\;
        \min_{k \in \calK}
        \Phi(K)
    \Big]
    \;\ge\;    
    \frac{T}{2} - 
    \Big(
    p a \sqrt{T} + \frac{T}{2} 
    \Big)
    \;=\;
    p a \sqrt{T} \;.
\end{equation*}
Thus in expectation over the sequence $\{b_t\}$,
$\reg^i_T$ is at least $\Omega(\sqrt{T})$,
which implies that for some realization 
of $\{b_t\}$, the same lower bound holds.
\hfill $\qed$

\subsection{Lower Bound Against DAC Policies}
\label{app:lower-bound:dac}

In this section, we extend the regret 
lower bound against linear policies from Theorem~\ref{thm:reg-lb}
to also hold for the DAC comparator class. 
Note that as the class of DAC policies 
contains the class of linear policies,
a regret lower bound against linear policies
does not immediately imply a lower bound
against DAC policies. 
However, by slightly modifying the hard LDS construction
from~\eqref{eq:lb-state-lds}, 
and under the assumption that the linear controller
component of the DAC policy is chosen adversarially,
then a similar lower bound can be established
following the proof of Theorem~\ref{thm:reg-lb}. 
Formally:

\begin{theorem}
\label{thm:reg-lb-DAC}
Fix $i \in [N]$, and
let $\Pi^{i, \textnormal{DAC}}$ denote the set of 
DAC policies for agent $i$. 
Then there exists an instance of~\eqref{eq:state-lds}
and cost functions $\{c^i_t\}$ 
such that, for any algorithm $\calA_i$
and control sequence $\{u^{-i}_t\}$, 
and any $T \ge 1$,
when the linear DAC component $K_i$ is chosen
adversarially:
\begin{equation*}
    \reg^i_T(\calA_i, \{u^{-i}_t\}, \Pi_i^{\textnormal{DAC}}) 
    = \Omega\Big(\sqrt{T}\Big) \;.
\end{equation*}
\end{theorem}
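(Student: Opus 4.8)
The plan is to follow the proof of Theorem~\ref{thm:reg-lb} almost verbatim, making one structural change to the hard instance so that the disturbance‑action part of the DAC comparator becomes active and the anti‑concentration argument survives. Concretely, I would keep the scalar construction $A=0$, $B_i=\tfrac12$, $B_j=0$ for $j\neq i$ of~\eqref{eq:lb-state-lds}, together with the same randomized cost sequence $c^i_t(x,u)=u(b_t-\tfrac12)+\tfrac12$, $b_t\sim\bern(1/2)$ i.i.d., from~\eqref{eq:lb-costs}, but replace the vanishing disturbance $w_t=0$ by a fixed nonzero disturbance (e.g.\ $w_t\equiv 1$). This nonzero disturbance plays two roles: it forces the state to be genuinely driven so that the feedback terms $\sum_p M_i^{[p-1]}w_{t-p}$ are nondegenerate (making the DAC comparator a strict extension of the linear one), and it keeps the counterfactual state bounded away from zero rather than decaying geometrically, which is what makes the $\Omega(\sqrt T)$ gain possible.

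First I would record that the expected per‑round cost of agent $i$ is unchanged by this modification. Since $c^i_t$ depends only on the control $u^i_t$, and $u^i_t$ is selected before $b_t$ is revealed (so $u^i_t\perp b_t$), the computation of Proposition~\ref{prop:lb-expected-cost} carries over unchanged and gives $\E[\sum_t c^i_t(x_t,u^i_t)]=T/2$ for \emph{any} algorithm $\calA_i$ and any $\{u^{-i}_t\}$; this step is robust to the disturbances precisely because the cost is state‑independent. The DAC‑specific reduction is the next step: when the linear component $K_i$ is chosen adversarially, the comparator optimizes jointly over $(K_i,M_i)$ with $M_i\in\calM_i$, and since $0\in\calM_i$, the slice $M_i=0$ recovers exactly the linear policy $u^i_t=-K_i x_t$. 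Because the counterfactual trajectory uses the same disturbance sequence as the realized one, this slice reproduces the counterfactual evolution of a strongly stable linear controller, so that
\begin{equation*}
\min_{(K_i,M_i)}\Phi_{\mathrm{DAC}}(K_i,M_i)\;\le\;\min_{K_i}\Phi_{\mathrm{DAC}}(K_i,0)\;=\;\min_{K_i}\Phi_{\mathrm{lin}}(K_i)\,.
\end{equation*}

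It therefore suffices to lower bound the hindsight improvement of the \emph{linear} comparator on the modified instance, exactly as in Theorem~\ref{thm:reg-lb}. For a fixed stable $K$ the counterfactual control $\widetilde u^{i,K}_t$ now converges to a nonzero steady value (bounded away from zero whenever $K$ is bounded away from $0$ and from the stability boundary), so that, up to a bounded transient, $\sum_t \widetilde u^{i,K}_t(b_t-\tfrac12)$ is a sum of independent bounded terms with variance $\Theta(T)$. Centering this sum and applying the tail lower bound of Lemma~\ref{lem:tail-lb} yields $\Phi_{\mathrm{lin}}(K)\le T/2-\Omega(\sqrt T)$ with constant probability; taking expectations and combining with the a.s.\ upper bound $\min\Phi\le T/2$ as in~\eqref{eq:min-phiK-expectation} gives $\E[\min_{(K_i,M_i)}\Phi_{\mathrm{DAC}}]\le T/2-\Omega(\sqrt T)$. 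Subtracting from the agent's cost $T/2$ closes the bound in expectation, and the probabilistic method de‑randomizes the cost sequence exactly as in the proof of Theorem~\ref{thm:reg-lb}.

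The hard part will be the anti‑concentration step for the comparator, which is precisely what forces the nonzero‑disturbance modification: with $w_t=0$ the counterfactual control decays geometrically and $\sum_t \widetilde u^{i,K}_t(b_t-\tfrac12)$ has only $O(1)$ variance, so the $\Omega(\sqrt T)$ hindsight gain collapses. I would therefore spend the most care verifying that the fixed disturbance keeps $\widetilde u^{i,K}_t$ uniformly bounded away from zero over a $\Theta(T)$‑fraction of rounds—controlling the transient length and its dependence on the strong‑stability parameters $\bar\kappa,\bar\gamma$—so that Lemma~\ref{lem:tail-lb} applies with absolute constants. A secondary point to check is that enlarging the comparator to $M_i\neq 0$ cannot make the minimum ill‑defined: strong stability of $K_i$ together with $M_i\in\calM_i$ keeps all counterfactual controls and costs bounded, so the inequality $\min_{\mathrm{DAC}}\le\min_{\mathrm{lin}}$ can only strengthen the lower bound and never sends it to $-\infty$.
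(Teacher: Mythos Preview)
Your construction mirrors the paper's closely (same randomized costs, nonzero disturbance $w_t\equiv 1$, expected agent cost $T/2$, anti-concentration via Lemma~\ref{lem:tail-lb}), but you misread the role of $K_i$. Here ``$K_i$ chosen adversarially'' means the lower-bound constructor fixes a single $K_i$ as part of the hard instance, and $\Pi_i^{\textnormal{DAC}}$ is then the class of DAC policies with that \emph{fixed} $K_i$, varying only $M_i\in\calM_i$. You instead treat the comparator as jointly optimizing over $(K_i,M_i)$; since this is a strictly larger class, your chain $\min_{(K_i,M_i)}\Phi_{\mathrm{DAC}}\le\min_{K_i}\Phi_{\mathrm{lin}}(K_i)$ only bounds the regret against the larger class, and a lower bound there does \emph{not} imply one against the fixed-$K_i$ class the theorem actually concerns (regret against a superset is always at least regret against a subset, not the other way around).

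The fix is minor: declare the adversary's $K_i$ to be the specific nonzero $K$ you already use in the anti-concentration step, and note that $M_i=0\in\calM_i$ places the linear controller $u_t=-Kx_t$ inside $\Pi_i^{\textnormal{DAC}}(K)$, so $\min_{M_i}\Phi_{\mathrm{DAC}}(K,M_i)\le\Phi_{\mathrm{lin}}(K)$ and the rest of your argument goes through. Once repaired this way, your route is genuinely different from the paper's. The paper sets the adversarial choice $K_i=0$ and exploits the $M$-part of the DAC policy instead: restricting to the slice $\calM_+\subset\calM_i$ where every $M_i^{[p]}$ equals a common scalar $M$, and taking $B_i=1$, $w_t=1$, the counterfactual control is exactly $u_t=HM$ for all $t$, with no transient. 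The comparator cost is then $\Phi(M)=\sum_t HM(b_t-\tfrac12)+\tfrac12$, and Lemma~\ref{lem:tail-lb} applies immediately. Your version exploits the $K$-part (with $M_i=0$) and must pay for a geometric transient before the control settles near its steady value; the paper's version exploits the $M$-part (with $K_i=0$) and gets an exactly constant control from the first round, which is precisely why it sidesteps the transient bookkeeping you flag as ``the hard part.''
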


\begin{proof}
    Similar to the proof of Theorem~\ref{thm:reg-lb},
    we specify a scaler-value instance of
    \eqref{eq:state-lds}. 
    Now we use settings with corresponding state 
    evolution as follows:
    \begin{equation}
      \begin{cases}
        \; A = 0 \\
        \; B_i = 1 \\
        \; B_j = 0 \;\text{for all $j \neq i \in [N]$} \\
        \; w_t = 1 \;\text{for all $t \in [T]$} \\
        \; x_0 = 0
      \end{cases}
      \implies\quad
      x_{t+1} = u^i_t + 1 \quad\text{for all $t \ge 0$}.
      \label{eq:lb-DAC-state-lds}
    \end{equation}

    We use the same construction of costs $\{c^i_t\}$ 
    from expression~\eqref{eq:lb-costs}
    in the proof of Theorem~\ref{thm:reg-lb}.
    By Proposition~\ref{prop:lb-expected-cost}, this implies
    \begin{equation*}
        \E\Big[
            \sum_{t=0}^T c^i_t(x_t, u^i_t)
        \Big]
        = \frac{T}{2} \;.
    \end{equation*}

    Next, we control the expected (counterfactual) cost of the 
    optimal comparator policy. For this, let $\calM_{+}$
    denote the subset of DAC parameters in $\calM_i$ such that
    $M_i^{[p]} = M_i^{[h]}$ for all $p, h \in [H]$. 
    In other words, for a DAC policy parameter in $\calM_+$, 
    all $H$ parameter values are equal. We denote such a policy 
    in $\calM_+$ by a scalar $M \in \R$. 
    As clearly $\calM_+ \subset \calM_i$, it follows that
    \begin{equation*}
        \min_{M \in \calM_i}
        \sum_{t=1}^T 
        c^i_t(\widetilde x^M_t, \widetilde u^{i, M}_t) 
        \;\le\;
        \min_{M \in \calM_+}
        \sum_{t=1}^T 
        c^i_t(\widetilde x^M_t, \widetilde u^{i, M}_t) 
    \end{equation*}
    where (by slight abuse of notation) 
    $\widetilde x^M_t$ and $\widetilde u^{i, M}_t$ denote
    counterfactual state and control sequences under 
    a fixed comparator policy parameter $M$. 
    Thus for the purposes of a regret lower bound, 
    it suffices to derive an upper bound on the optimal
    comparator cost with respect to the class $\calM_+$. 

    For this, using similar notation as in the proof of
    Theorem~\ref{thm:reg-lb}, for $M \in \calM_+$, 
    define $\Phi(M)$ as
    \begin{equation*}
        \Phi(M) 
        \;=\;
        \sum_{t=1}^T
         c^i_t(\widetilde x^M_t, \widetilde u^{i, M}_t) \;.
    \end{equation*}
    Under an adversarial choice of linear controller
    $K_i = 0$, and using the LDS settings of~\eqref{eq:lb-DAC-state-lds},
    it follows by definition of DAC policies in $\calM_+$ that 
    \begin{equation}
        \widetilde u^{i,M}_t
        \;=\;
            Kx_{t-1} + \sum_{p=1}^H M^{[p-1]}w_{t-p}
        \;=\;
            HM \;.
        \label{eq:lb-DAC-optimal-control}
    \end{equation}
    Then using the definition of $c^i_t$ from
    expression~\eqref{eq:lb-costs}, we have 
    \begin{equation*}
        \Phi(M) 
        \;=\;
        \sum_{t=1}^T
         c^i_t(\widetilde x^M_t, \widetilde u^{i, M}_t) 
         \;=\;
         \sum_{t=1}^T
         HM(b_t - \tfrac{1}{2}) + \tfrac{1}{2} \;.
    \end{equation*}
    Then clearly $\Phi(0) = \frac{T}{2}$, and thus also
    \begin{equation*}
        \min_{M \in \calM_+} \Phi(M) 
        \;\le\;
        \Phi(0) 
        \;=\;
        \frac{T}{2}
    \end{equation*}
    Now using the fact that, under the randomness
    of $\{b_t\}$, for each $M \in \calM$ 
    $\Phi(M)$ is the sum of $T$, independent
    random variables bounded by $H \ge 1$, we apply
    the tail bound of Lemma~\ref{lem:tail-lb} (as in 
    the proof of Theorem~\ref{thm:reg-lb}) to find
    \begin{equation*}
        \Pr\Big(
            \Phi(M)
            \;\le\; \frac{T}{2} - a \sqrt{T}
        \Big)
        \;\ge\; p
    \end{equation*}
    for absolute constants $a, p > 0$.
    Then following identical calculations as in 
    expressions~\eqref{eq:min-phiK-tail} and \eqref{eq:min-phiK-expectation},
    we conclude that 
    \begin{equation*}
    \E\Big[ 
        \sum_{t=1}^T c^i_t(x_t, u^i_t) 
        - \min_{M \in \calM_+} \Phi(M)
    \Big]
    \;\ge\;
    pa\sqrt{T} \;,
    \end{equation*}
    which by the probabilistic method implies the lower bound
    of the theorem statement. 
\end{proof}

\section{Proof of Theorem~\ref{thm:time-var-potential}}
\label{sec:proof-thm-time-var}

We first recall the theorem:

\thmtimevarpotential*

\noindent\textbf{Outline of the proof.} 
The proof of Theorem~\ref{thm:time-var-potential} can be divided into three main steps that are recorded in the following three propositions: 
\begin{enumerate}[
    leftmargin=2em
]
\item Proposition~\ref{prop:eqgap-to-2ndpath} upperbounds the sum of equilibrium gaps by the sum of policy parameter deviations across time and players. 

\item Proposition~\ref{prop:second-order-path-length} upperbounds the latter policy parameter deviations by the sum of loss deviations along time. 

\item Finally, Proposition~\ref{prop:sum-delta-loss-bound} upperbounds the sum of loss deviations by the initial distance to the infimal cost value, the cost function variability and the sum of disturbance variations.
\end{enumerate}

The proof of Theorem~\ref{thm:time-var-potential} follows from combining Proposition~\ref{prop:eqgap-to-2ndpath} with Propositions~\ref{prop:second-order-path-length} and~\ref{prop:sum-delta-loss-bound} by chaining them. 
The rest of this section~\ref{sec:proof-thm-time-var} is devoted to proving each one of Propositions~\ref{prop:eqgap-to-2ndpath}, \ref{prop:sum-delta-loss-bound} and~\ref{prop:sum-delta-loss-bound} separately. 

\begin{proposition}
\label{prop:eqgap-to-2ndpath}
Let Assumption~\ref{as:convexity} hold. 
Then for every time horizon $T \geq 1$, 
\begin{equation}
\sum_{t=1}^T \Big(\textnormal{EQGAP}^{(t)}(M^{(t)})\Big)^2 \leq C_{\mathcal{M}} \sum_{t=1}^T \sum_{i=1}^N \|M_{i,t+1} - M_{i,t}\|^2\,,
\end{equation}
where $C_{\mathcal{M}} := \sum_{i=1}^N \left( \frac{\text{diam}(\mathcal{M}_i)}{\eta} + G D\right)^2$ and $G, D$ are the constants in Assumption~\ref{as:convexity}\,.
\end{proposition}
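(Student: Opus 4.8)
The plan is to control each player's best-response gap at a fixed time by a single projected-gradient step, and then to convert that step into the parameter displacement $\|M_{i,t+1}-M_{i,t}\|$. First I would fix a time $t$ and a player $i$ and invoke convexity of the idealized loss (Lemma~\ref{lem:convexity-loss-i}): writing $g_{i,t} := \nabla_{M_i}\ell_t(M_{i,t},M_{-i,t})$ for the gradient that drives player $i$'s update in Algorithm~\ref{algo:MAGPC}, the first-order inequality for the convex map $M_i \mapsto \ell_t(M_i,M_{-i,t})$ gives, for every comparator $M_i^\star \in \mathcal{M}_i$, the bound $\ell_t(M_{i,t},M_{-i,t}) - \ell_t(M_i^\star,M_{-i,t}) \le \langle g_{i,t},\, M_{i,t}-M_i^\star\rangle$. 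Maximizing over $M_i^\star \in \mathcal{M}_i$ yields $\text{BR}_i^{(t)}(M_{-i,t}) \le \max_{z \in \mathcal{M}_i} \langle g_{i,t},\, M_{i,t}-z\rangle$.

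Next I would turn this linear quantity into a step-length bound using the projection $M_{i,t+1} = \Pi_{\mathcal{M}_i}[M_{i,t}-\eta g_{i,t}]$. Abbreviating $M=M_{i,t}$ and $M'=M_{i,t+1}$, the variational inequality for projection onto the convex set $\mathcal{M}_i$ states that $\langle M-\eta g_{i,t}-M',\, z-M'\rangle \le 0$ for all $z \in \mathcal{M}_i$. Splitting $\langle g_{i,t},\, M-z\rangle = \langle g_{i,t},\, M-M'\rangle + \langle g_{i,t},\, M'-z\rangle$, substituting the rearranged projection inequality $\eta\langle g_{i,t},\, M'-z\rangle \le \langle M-M',\, M'-z\rangle$, discarding the nonpositive term $-\|M-M'\|^2$, and applying Cauchy--Schwarz to the surviving inner products gives the one-step estimate $\langle g_{i,t},\, M-z\rangle \le \big(\|g_{i,t}\| + \text{diam}(\mathcal{M}_i)/\eta\big)\,\|M_{i,t+1}-M_{i,t}\|$, uniformly in $z \in \mathcal{M}_i$. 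Bounding $\|g_{i,t}\| \le GD$ via Assumption~\ref{as:convexity}-\ref{as:grad-bound} (with the transfer-matrix factors tracked precisely in Lemma~\ref{lem:grad-bound} and absorbed into the constant) then yields $\text{BR}_i^{(t)}(M_{-i,t}) \le \big(GD + \text{diam}(\mathcal{M}_i)/\eta\big)\,\|M_{i,t+1}-M_{i,t}\|$.

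Finally I would assemble the aggregate bound. Let $i^\star$ attain the maximum defining $\text{EQGAP}^{(t)}(M_t)$; squaring the previous display gives $\big(\text{EQGAP}^{(t)}(M_t)\big)^2 \le \big(GD + \text{diam}(\mathcal{M}_{i^\star})/\eta\big)^2\,\|M_{i^\star,t+1}-M_{i^\star,t}\|^2$. Since each coefficient satisfies $\big(GD + \text{diam}(\mathcal{M}_i)/\eta\big)^2 \le C_{\mathcal{M}} = \sum_{j=1}^N \big(GD + \text{diam}(\mathcal{M}_j)/\eta\big)^2$, and the single squared displacement is at most the full block sum $\sum_{i=1}^N \|M_{i,t+1}-M_{i,t}\|^2$, I obtain $\big(\text{EQGAP}^{(t)}(M_t)\big)^2 \le C_{\mathcal{M}}\sum_{i=1}^N\|M_{i,t+1}-M_{i,t}\|^2$; summing over $t=1,\dots,T$ gives the claim.

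I expect the only genuine subtlety to be the alignment between the memory-based loss and its stationary evaluation: the equilibrium gap, the convexity inequality, and the OGD update must all refer to the \emph{unary} loss $M_i \mapsto \ell_t(M_i,M_{-i,t})$ in which player $i$'s parameter is held fixed across the whole memory window (and others are frozen at $M_{-i,t}$), since this is precisely the function whose gradient $g_{i,t}$ appears in line~8 of Algorithm~\ref{algo:MAGPC}. Once this identification is made, the projection variational inequality and the max-to-sum reduction are routine.
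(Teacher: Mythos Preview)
Your proposal is correct and follows essentially the same approach as the paper: the per-player estimate $\text{BR}_i^{(t)}(M_{-i,t}) \le \big(\text{diam}(\mathcal{M}_i)/\eta + GD\big)\|M_{i,t+1}-M_{i,t}\|$ is exactly the content of the paper's Proposition~\ref{prop:approx-nash-gap}, and your derivation via convexity, the projection variational inequality, Cauchy--Schwarz, and the gradient bound mirrors the paper's proof of that proposition step for step. The only cosmetic difference is in the final aggregation: the paper bounds $\max_i \text{BR}_i$ by $\sum_i \text{BR}_i$ and then applies Cauchy--Schwarz to the sum, whereas you pick the maximizing index $i^\star$ and bound each factor separately; both routes are routine once the per-player bound is in hand and yield the same constant $C_{\mathcal{M}}$.
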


\begin{proposition}
\label{prop:second-order-path-length}
Let Assumptions~\ref{as:convexity}, \ref{as:bounded-disturb} and \ref{as:smoothness} hold. Then running Algorithm~\ref{algo:MAGPC} for $T$ steps with step size $\eta = 1/L$ where $L$ is the smoothness constant in Lemma~\ref{lem:smoothness-l_t} yields: 
\begin{equation}
\sum_{t=1}^T \sum_{i=1}^N \|M_{i,t+1} - M_{i,t}\|^2 \leq \eta \sum_{t=1}^{T} l_t(M_t) - l_t(M_{t+1})\,.
\end{equation}
\end{proposition}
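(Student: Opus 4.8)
The plan is to observe that, in the common interest setting, the simultaneous per-agent updates of Algorithm~\ref{algo:MAGPC} amount to a \emph{single} projected gradient descent step on the common potential $\ell_t$, after which the bound is a one-line consequence of the descent lemma. Write $M_t = (M_{1,t},\dots,M_{N,t})$ and endow the joint parameter space with the separable norm $\|M_t\|^2 = \sum_{i=1}^N \|M_{i,t}\|^2$, so that the left-hand side of the claim is exactly $\sum_{t}\|M_{t+1}-M_t\|^2$. Because the constraint set is the product $\mathcal{M} = \mathcal{M}_1\times\cdots\times\mathcal{M}_N$ and the geometry is block-diagonal Frobenius, the Euclidean projection decomposes blockwise; moreover, in the common interest (exact potential) setting the gradient $\nabla\ell_t^i$ that agent $i$ descends on coincides with the $i$-th block of $\nabla\ell_t$. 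Hence the joint iterate obeys $M_{t+1} = \Pi_{\mathcal{M}}[M_t - \eta\,\nabla\ell_t(M_t)]$.

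With this reduction, I would first apply the first-order (obtuse-angle) optimality condition of the projection with the comparison point $M_t \in \mathcal{M}$,
\[
\big\langle M_{t+1} - M_t + \eta\,\nabla\ell_t(M_t),\; M_t - M_{t+1}\big\rangle \ge 0,
\]
which rearranges to $\langle \nabla\ell_t(M_t),\, M_{t+1}-M_t\rangle \le -\tfrac{1}{\eta}\|M_{t+1}-M_t\|^2$. Next, since $\ell_t$ is $L$-smooth jointly in $M$ by Lemma~\ref{lem:smoothness-l_t}, the descent lemma gives
\[
\ell_t(M_{t+1}) \le \ell_t(M_t) + \langle \nabla\ell_t(M_t),\, M_{t+1}-M_t\rangle + \tfrac{L}{2}\|M_{t+1}-M_t\|^2.
\]
Substituting the projection inequality and using $\eta = 1/L$ collapses the right-hand side to $\ell_t(M_t) - \tfrac{L}{2}\|M_{t+1}-M_t\|^2$, i.e.\ $\|M_{t+1}-M_t\|^2 \le 2\eta\,(\ell_t(M_t)-\ell_t(M_{t+1}))$. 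Summing over $t=1,\dots,T$ and recalling $\|M_{t+1}-M_t\|^2 = \sum_i \|M_{i,t+1}-M_{i,t}\|^2$ then yields the stated inequality (the harmless factor $2$ being absorbed into the $\mathcal{O}(\cdot)$ of Theorem~\ref{thm:time-var-potential}).

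The only step needing genuine care is the reduction in the first paragraph: I must verify that the decentralized updates truly realize projected gradient descent on a \emph{single} smooth objective over $\mathcal{M}$. Two ingredients are required — separability of the projection across the product set (immediate here), and the identification $\nabla_{M_i}\ell_t^i = (\nabla\ell_t)_i$, which is exactly what the common-interest assumption supplies via the potential-game structure. This is also the step that would break in a general (non-potential) game, where the concatenated per-agent gradients need not be the gradient of any common function and the energy-type bound above would no longer hold; everything downstream is then the textbook smooth descent argument applied verbatim.
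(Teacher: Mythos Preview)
Your proposal is correct and mirrors the paper's own proof essentially step for step: both reduce the decentralized updates to a single projected gradient step on the joint potential $\ell_t$ over the product set $\mathcal{M}=\prod_i\mathcal{M}_i$ (using the common-interest identification $\nabla_{M_i}\ell_t^i=(\nabla\ell_t)_i$ and blockwise projection), then combine the projection optimality inequality with the $L$-smoothness descent lemma and sum over $t$. You even flag the same factor-of-$2$ slack that appears in the paper's derivation (the proof there also arrives at $2\eta$ rather than the $\eta$ in the proposition statement).
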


\begin{proposition}
\label{prop:sum-delta-loss-bound}
Let Assumptions~\ref{as:bounded-disturb}, \ref{as:strong-stab-global} hold. For every~$T \geq 1,$ 
\begin{equation}
\sum_{t=1}^T \ell_t(M_t) - \ell_t(M_{t+1}) = \mathcal{O}\left(\ell_1(M_1) - c_{\inf}  + \sum_{t=1}^T \Delta_{c_t} + \sum_{t=1}^T \|w_{t+1} - w_{t}\|\right)\,, 
\label{eq:eqgap}
\end{equation}
where $\Delta_{c_t} := \max_{\|x\|, \|u\| \leq D} \{c_{t+1}(x,u) - c_t(x,u)\}$ for every $t$, the~$\mathcal{O}(\cdot)$ notation only hides polynomial dependence in the problem parameters~$N, H, W, \bar{\kappa}, \bar{\gamma}^{-1},\max_i \|B_i\|$ and $D$ depends polynomially on the same constants.
\end{proposition}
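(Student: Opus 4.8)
The plan is to bound the loss-deviation sum by a telescoping argument combined with a careful accounting of how the idealized loss changes between consecutive rounds, since $\ell_t$ depends on $t$ both through the cost $c_t$ and through the disturbance window entering the idealized state and action. First I would insert $\ell_{t+1}(M_{t+1})$ and write
\begin{equation*}
\sum_{t=1}^T \big(\ell_t(M_t) - \ell_t(M_{t+1})\big)
= \big(\ell_1(M_1) - \ell_{T+1}(M_{T+1})\big)
+ \sum_{t=1}^T \big(\ell_{t+1}(M_{t+1}) - \ell_t(M_{t+1})\big)\,,
\end{equation*}
where the first group has been telescoped. Since $\ell_{T+1}(M_{T+1}) = c_{T+1}(\cdot,\cdot) \ge c_{\inf}$ by Assumption~\ref{as:boundedness}, the telescoped part is at most $\ell_1(M_1) - c_{\inf}$, which is exactly the first term on the right-hand side of the claim. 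It then remains to control the ``temporal variation'' sum.

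Second, for each fixed $t$ I would split the temporal-variation term as
\begin{align*}
\ell_{t+1}(M_{t+1}) - \ell_t(M_{t+1})
&= \big(c_{t+1} - c_t\big)\big(y_{t+1}^{K}(M_{t+1}), v_{t+1}^{i,K}(M_{t+1})\big)\\
&\quad + \Big(c_t\big(y_{t+1}^{K}(M_{t+1}), v_{t+1}^{i,K}(M_{t+1})\big) - c_t\big(y_t^{K}(M_{t+1}), v_t^{i,K}(M_{t+1})\big)\Big)\,.
\end{align*}
Because the idealized state and action evaluated at the stationary policy $M_{t+1}$ have norm at most $D$ (Proposition~\ref{prop:bounds-states-actions}-\ref{lem-item-viii}), the first bracket is at most $\Delta_{c_t}$ by definition. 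For the second bracket, I would apply the gradient bound of Assumption~\ref{as:convexity}-\ref{as:grad-bound} along the segment (which stays in the radius-$D$ ball) to obtain the upper bound $GD\big(\|y_{t+1}^{K}(M_{t+1}) - y_t^{K}(M_{t+1})\| + \|v_{t+1}^{i,K}(M_{t+1}) - v_t^{i,K}(M_{t+1})\|\big)$, reducing everything to the one-step change of the idealized state and action at a fixed stationary policy. These norm bounds hold once the memory satisfies the standard condition $H+1 \ge \ln(2\bar{\kappa})/\bar{\gamma}$ guaranteed by the theorem's choice of $H$.

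The main work lies in this last reduction. Using the linear-in-disturbance form of the idealized state (Proposition~\ref{prop:state-evol}, equation~\eqref{eq:state-evol-y}) I would write $y_{t+1}^{K}(M_{t+1}) = \sum_{l=0}^{2H} \bar{\Psi}_l(M_{t+1})\, w_{t-l}$ and $y_t^{K}(M_{t+1}) = \sum_{l=0}^{2H} \bar{\Psi}_l(M_{t+1})\, w_{t-1-l}$, where $\bar{\Psi}_l(M_{t+1})$ is the transfer matrix of \eqref{eq:notations-psi-bar} evaluated at the stationary policy, which, crucially, is independent of the leading time index because all blocks in its window equal $M_{t+1}$; hence the two windows share the same transfer matrices and only the disturbance indices are shifted by one. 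Subtracting yields $y_{t+1}^{K}(M_{t+1}) - y_t^{K}(M_{t+1}) = \sum_{l=0}^{2H} \bar{\Psi}_l(M_{t+1})\,(w_{t-l} - w_{t-1-l})$, and invoking the geometric decay $\|\bar{\Psi}_l(M_{t+1})\| \le \bar{\kappa}(1-\bar{\gamma})^l + H\bar{\kappa}\tau\big(\sum_{i=1}^N\|B_i\|\big)(1-\bar{\gamma})^{l-1}$ from Lemma~\ref{lem:psi-bound} (with $\tau = 2\bar{\kappa}^2$), I would sum over $t$, swap the $t$- and $l$-summations, and reindex each inner sum $\sum_t \|w_{t-l} - w_{t-1-l}\|$ as a shift of the cumulative variation $\sum_t \|w_{t+1} - w_t\|$ (the negative-index disturbances vanish, so the boundary costs only an additive $\mathcal{O}(W)$). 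The geometric sum over $l$ of the transfer-matrix bounds is a finite constant of order $\bar{\kappa}\bar{\gamma}^{-1}\big(1 + H\tau\sum_{i=1}^N\|B_i\|\big) = \mathcal{O}(N)$, so the total state contribution is $\mathcal{O}\big(\sum_t \|w_{t+1} - w_t\|\big)$. The action difference is handled identically after writing $v_t^{i,K}(M_{t+1}) = -K_i y_t^{K}(M_{t+1}) + \sum_{p=1}^H M_{i,t+1}^{[p-1]} w_{t-p}$, so its one-step change is bounded by $\bar{\kappa}\|y_{t+1}^{K}(M_{t+1}) - y_t^{K}(M_{t+1})\|$ plus the explicit sum $\sum_{p=1}^H \|M_{i,t+1}^{[p-1]}\|\,\|w_{t+1-p} - w_{t-p}\|$, both of which collapse under the same geometric-weight reindexing.

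The main obstacle is exactly this step: $\ell_{t+1}$ and $\ell_t$ are evaluated on overlapping but shifted disturbance windows, so their difference cannot be collapsed by a naive Lipschitz-in-policy estimate; one must expose the linear dependence on the raw disturbances, use that the transfer matrices are identical under a stationary policy, and track the geometrically-weighted reindexing carefully enough to recover precisely the cumulative term $\sum_t \|w_{t+1} - w_t\|$ with a constant that is polynomial in $N, H, W, \bar{\kappa}, \bar{\gamma}^{-1}$ and $\max_i\|B_i\|$. Summing the three resulting contributions — the telescoped gap $\ell_1(M_1) - c_{\inf}$, the cost-variation sum $\sum_t \Delta_{c_t}$, and the disturbance-variation sum — then yields the stated bound.
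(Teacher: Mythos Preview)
Your proposal is correct and follows essentially the same route as the paper: the same telescoping step to isolate $\ell_1(M_1)-c_{\inf}$, the same two-term split of $\ell_{t+1}(M_{t+1})-\ell_t(M_{t+1})$ into a cost-variation part (bounded by $\Delta_{c_t}$) and a state/action-shift part, and the same use of the linear-in-disturbance representation together with Lemma~\ref{lem:psi-bound} to collapse the latter into $\sum_t\|w_{t+1}-w_t\|$. The only cosmetic difference is that the paper handles the boundary by splitting off a burn-in window $t\le 2H$ and crudely bounding it by $2HD$, whereas you absorb it via the $w_s=0$ convention for negative indices; both are valid and lead to the same polynomial-in-parameters constant.
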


\subsection{Proof of Proposition~\ref{prop:eqgap-to-2ndpath}}

First, recall the following notations of the best response and equilibrium gap for every $i \in [N], t \geq 1$: 
\begin{align}
\textnormal{BR}_i^{(t)}(M_{-i,t}) &:= \underset{M_i \in \mathcal{M}_i}{\max} \ell_t^i(M_t) - \ell_t^i(M_i, M_{-i,t}) \\
\text{and}\quad
\textnormal{EQGAP}^{(t)}(M_t) &:= \underset{i \in [N]}{\max}\, \text{BR}_i^{(t)}(M_{-i,t})\,.
\end{align}
Observe in particular that $\textnormal{BR}_i^{(t)}(M_{-i,t}) \geq 0$ (use $M_i = M_{i,t})\,.$ 
Using the definition of the equilibrium gap, it immediately follows that 
\begin{equation}
\label{eq:sum-eq-gap-squared-cauchy}
\sum_{t=1}^T \text{EQGAP}^{(t)}(M_t)^2 = \sum_{t=1}^T \left( \underset{i \in [N]}{\max}\, \text{BR}_i^{(t)}(M_{-i,t}) \right)^2 \leq \sum_{t=1}^T \left( \sum_{i=1}^N \text{BR}_i^{(t)}(M_{-i,t})\right)^2\,.
\end{equation}

We now relate the best response quantities to the deviation of DAC policy parameters via the following proposition whose proof is deferred to section~\ref{subsec:proof-prop-approx-nash-gap}. 

\begin{proposition}
\label{prop:approx-nash-gap}
Let Assumption~\ref{as:convexity} hold. Then for every $i \in [N], M_i \in \mathcal{M}_i, t \geq 1$, we have 
\begin{equation}
\ell_t^i(M_i, M_{-i,t}) - \ell_t^i(M_t) \geq - \left(\frac{\text{diam}(\mathcal{M}_i)}{\eta} + GD \right) \|M_{i,t+1}- M_{i,t}\|\,, 
\end{equation}
where $\text{diam}(\mathcal{M}_i) = \max_{M, M' \in \mathcal{M}_i} \|M' - M\|$ and $G, D$ are the constants in Assumption~\ref{as:convexity}\,.
\end{proposition}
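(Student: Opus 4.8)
The plan is to exploit the fact that $M_{i,t+1}$ is obtained from $M_{i,t}$ by a single projected gradient step, so the best-response gap at $M_{i,t}$ is controlled by the size of that step. Write $g_{i,t} := \nabla \ell_t^i(M_{i,t})$ for the gradient of agent $i$'s surrogate loss in its own block, so that line~8 of Algorithm~\ref{algo:MAGPC} reads $M_{i,t+1} = \Pi_{\mathcal{M}_i}[M_{i,t} - \eta g_{i,t}]$. The two ingredients I would combine are (a) convexity of $\ell_t^i$ in the block $M_i$, guaranteed by Lemma~\ref{lem:convexity-loss-i}, and (b) the first-order (variational) characterization of the Euclidean projection onto the convex set $\mathcal{M}_i$.

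First I would lower bound the left-hand side by a single linear term using convexity: for any $M_i \in \mathcal{M}_i$,
\begin{equation*}
\ell_t^i(M_i, M_{-i,t}) - \ell_t^i(M_t) \;\geq\; \langle g_{i,t}, M_i - M_{i,t}\rangle\,,
\end{equation*}
and then split $M_i - M_{i,t} = (M_i - M_{i,t+1}) + (M_{i,t+1} - M_{i,t})$ so that the two resulting inner products can be handled separately. For the first inner product I would invoke the projection inequality: since $M_{i,t+1} = \Pi_{\mathcal{M}_i}[M_{i,t} - \eta g_{i,t}]$, for every $M_i \in \mathcal{M}_i$ we have $\langle M_{i,t} - \eta g_{i,t} - M_{i,t+1},\, M_i - M_{i,t+1}\rangle \leq 0$, which rearranges to $\langle g_{i,t}, M_i - M_{i,t+1}\rangle \geq \tfrac{1}{\eta}\langle M_{i,t} - M_{i,t+1},\, M_i - M_{i,t+1}\rangle$. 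Applying Cauchy--Schwarz and bounding $\|M_i - M_{i,t+1}\| \leq \mathrm{diam}(\mathcal{M}_i)$ (both points lie in $\mathcal{M}_i$) then gives a lower bound of $-\tfrac{\mathrm{diam}(\mathcal{M}_i)}{\eta}\|M_{i,t+1} - M_{i,t}\|$.

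For the second inner product, Cauchy--Schwarz directly yields $\langle g_{i,t}, M_{i,t+1} - M_{i,t}\rangle \geq -\|g_{i,t}\|\,\|M_{i,t+1} - M_{i,t}\|$, and I would bound $\|g_{i,t}\| \leq GD$ using the gradient boundedness of Assumption~\ref{as:convexity}-\ref{as:grad-bound} together with the state and action norm bounds of Proposition~\ref{prop:bounds-states-actions}. Summing the two contributions produces exactly the claimed prefactor $\bigl(\tfrac{\mathrm{diam}(\mathcal{M}_i)}{\eta} + GD\bigr)$, completing the argument.

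The projection and Cauchy--Schwarz algebra is routine; the one place that requires care is the gradient bound $\|g_{i,t}\| \leq GD$, since $g_{i,t}$ is the gradient of the \emph{composed} surrogate $\ell_t^i$ rather than of $c_t^i$ directly. I would control it by differentiating through the linear dependence of the idealized state $y_t^{K_i}$ and action $v_t^{i,K_i}$ on $M_i$ and chaining with Assumption~\ref{as:convexity}-\ref{as:grad-bound}, in the same spirit as the bound established in Lemma~\ref{lem:grad-bound}; this is where the problem constants enter and where the main care is needed to match the stated constant form.
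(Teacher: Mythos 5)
Your proposal is correct and follows essentially the same route as the paper's proof: convexity of $\ell_t^i$ in $M_i$ (Lemma~\ref{lem:convexity-loss-i}), the split $M_i - M_{i,t} = (M_i - M_{i,t+1}) + (M_{i,t+1} - M_{i,t})$, the variational characterization of the projection for the first inner product, and Cauchy--Schwarz plus the gradient bound $\|\nabla_i \ell_t^i(M_t)\| \leq GD$ for the second. If anything, you are slightly more careful than the paper on the one delicate point, namely that the gradient bound applies to the composed surrogate $\ell_t^i$ and must be justified by chaining through the linear dependence of the idealized state and action on $M_i$ (as in Lemma~\ref{lem:grad-bound}), whereas the paper invokes it directly from Assumption~\ref{as:convexity}.
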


Invoking Proposition~\ref{prop:approx-nash-gap} gives the following inequality
\begin{equation}
0 \leq \text{BR}_i^{(t)}(M_{-i,t}) \leq  \left(\frac{\text{diam}(\mathcal{M}_i)}{\eta} + G D \right) \|M_{i,t+1}- M_{i,t}\|\,.
\end{equation}
Summing up this inequality across all the $N$ players yields: 
\begin{equation}
0 \leq \sum_{i=1}^N \text{BR}_i^{(t)}(M_{-i,t}) \leq \sum_{i=1}^N  \left(\frac{\text{diam}(\mathcal{M}_i)}{\eta} + G D \right) \|M_{i,t+1}- M_{i,t}\|\,.
\end{equation}
Using now the Cauchy-Schwarz inequality on the squared sum of best responses gives 
\begin{equation}
\left(\sum_{i=1}^N \text{BR}_i^{(t)}(M_{-i,t}) \right)^2 \leq \sum_{i=1}^N  \left(\frac{\text{diam}(\mathcal{M}_i)}{\eta} + G D \right)^2 \cdot \sum_{i=1}^N \|M_{i,t+1}- M_{i,t}\|^2\,.
\end{equation}
Finally, we obtain the desired inequality by summing up the above inequality over the time steps $t=1, \cdots, T$ and using \eqref{eq:sum-eq-gap-squared-cauchy}, 
\begin{equation}
\sum_{t=1}^T \text{EQGAP}^{(t)}(M_t)^2 \leq C_{\mathcal{M}}  \sum_{t=1}^T \sum_{i=1}^N \|M_{i,t+1}- M_{i,t}\|^2\,, 
\end{equation}
where $C_{\mathcal{M}} = \sum_{i=1}^N  \left(\frac{\text{diam}(\mathcal{M}_i)}{\eta} + G D \right)^2\,.$

\subsection{Proof of Proposition~\ref{prop:second-order-path-length}} 

The proof of Proposition~\ref{prop:second-order-path-length} follows from using the smoothness of the potential function together with the update rule of the multi-agent gradient perturbation controller algorithm.  

\begin{lemma}[\citet{cai-et-al24performative-control}, Lemma B.6]
\label{lem:smoothness-l_t} 
Under Assumptions~\ref{as:bounded-disturb} and \ref{as:smoothness}, the loss function $l_t$ is $L$-smooth where $L$ is a constant depending on $H, W, \zeta, d, \kappa$. 
\end{lemma}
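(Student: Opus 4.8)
The plan is to exploit that, for a fixed stabilizing tuple $K=(K_1,\dots,K_N)$, the idealized state and the joint idealized action are \emph{affine} functions of the joint DAC parameter $M=(M_1,\dots,M_N)$, so that $\ell_t$ is the smooth common cost $c_t$ precomposed with an affine map. Smoothness of $c_t$ (Assumption~\ref{as:smoothness}) then transfers to $\ell_t$ by the chain rule, with the gradient-Lipschitz constant inflated by the squared operator norm of that affine map. Concretely I would take $L=\zeta\,\|\mathcal{J}\|^2$, where $\mathcal{J}$ is the constant Jacobian of the affine map $M\mapsto\big(y_t^{K}(M),v_t^{K}(M)\big)$ with joint idealized action $v_t^{K}(M):=(v_t^{1,K}(M),\dots,v_t^{N,K}(M))$, and the bulk of the work is to bound $\|\mathcal{J}\|$.

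First I would record the affine structure. Specializing Proposition~\ref{prop:state-evol}-\ref{eq:state-evol-cp+all-dac} to the idealized state (zero initial condition, stationary parameter $M_{i,t-k}^{[\cdot]}=M_i^{[\cdot]}$, matching the evaluation $\ell_t(M_t)$ used in Proposition~\ref{prop:second-order-path-length}) gives $y_t^{K}(M)=\sum_{l=0}^{2H}\bar{\Psi}_{t-1,l}^{H}(M)\,w_{t-1-l}$ and $v_t^{i,K}(M)=-K_i\,y_t^{K}(M)+\sum_{p=1}^{H}M_i^{[p-1]}w_{t-p}$. Since the $M$-dependent part of each $\bar{\Psi}_{t-1,l}^{H}$ is linear in $M$ by its definition in~\eqref{eq:notations-psi-bar}, both maps are affine. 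Writing $G(M):=\big(y_t^K(M),v_t^{K}(M)\big)=G_0+\mathcal{J}M$ and applying the chain rule, $\nabla_M\ell_t(M)=\mathcal{J}^\top\nabla c_t(G(M))$, so that
\begin{align*}
\|\nabla_M\ell_t(M)-\nabla_M\ell_t(M')\|
&\le \|\mathcal{J}\|\,\|\nabla c_t(G(M))-\nabla c_t(G(M'))\| \\
&\le \zeta\,\|\mathcal{J}\|\,\|G(M)-G(M')\|
\;\le\; \zeta\,\|\mathcal{J}\|^2\,\|M-M'\|,
\end{align*}
where the middle inequality is Assumption~\ref{as:smoothness} (up to an absolute constant converting its sum-of-norms form to the Euclidean norm). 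This is exactly $L$-smoothness with $L=\zeta\,\|\mathcal{J}\|^2$.

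The remaining and most delicate step---and the one I expect to be the main obstacle---is bounding $\|\mathcal{J}\|$, i.e.\ the operator norm of the linear parts of $y_t^K$ and $v_t^K$ as functions of $M$. This mirrors the transfer-matrix estimate of Lemma~\ref{lem:psi-bound} and the gradient bound of Lemma~\ref{lem:grad-bound}: differentiating $\bar{\Psi}_{t-1,l}^{H}$ in the block $M_i^{[p]}$ yields terms $\bar{A}_K^{k}B_i$, which under Assumption~\ref{as:strong-stab-global} obey $\|\bar{A}_K^k\|\le\bar{\kappa}(1-\bar{\gamma})^k$. Using $\|w_{t-1-l}\|\le W$, summing the geometric series $\sum_k(1-\bar{\gamma})^k\le\bar{\gamma}^{-1}$ over the at-most-$H$ active memory slots, and combining the $N$ blocks via Cauchy--Schwarz (which contributes a factor $\sqrt{\sum_i\|B_i\|^2}\le\sqrt{N}\max_j\|B_j\|$), I would obtain $\|\mathcal{J}\|=\mathcal{O}\big(\bar{\kappa}\,\bar{\gamma}^{-1}WH\sqrt{d}\,\sqrt{N}\max_j\|B_j\|\big)$ and hence an explicit $L$ polynomial in $H,W,\zeta,d,\bar{\kappa},\bar{\gamma}^{-1},\max_j\|B_j\|$ and $N$. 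The hard part is precisely this bookkeeping: carefully tracking how the $N$ agents' parameter blocks enter $\bar{\Psi}$ and combining the strong-stability decay with the disturbance bound so that $\|\mathcal{J}\|$---and therefore $L$---is controlled with the correct polynomial dependence on the problem constants.
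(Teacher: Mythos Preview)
Your approach is correct, and it is essentially the standard argument for this fact: because the idealized state $y_t^K(M)$ and joint idealized action $v_t^K(M)$ are affine in the joint DAC parameter $M$ (as you correctly extract from \eqref{eq:notations-psi-bar} and the definition of $v_t^{i,K}$), the chain rule gives $\nabla_M\ell_t=\mathcal{J}^\top\nabla c_t\circ G$ with a \emph{constant} Jacobian $\mathcal{J}$, and the $\zeta$-smoothness of $c_t$ (Assumption~\ref{as:smoothness}) immediately yields $L=\zeta\|\mathcal{J}\|^2$ up to an absolute constant coming from the norm conversion. Your bookkeeping for $\|\mathcal{J}\|$ via the strong-stability decay $\|\bar{A}_K^k\|\le\bar{\kappa}(1-\bar{\gamma})^k$, the bound $\|w\|\le W$, and the $\sqrt{N}\max_j\|B_j\|$ block combination is exactly the right mechanism, in the spirit of Lemmas~\ref{lem:psi-bound} and~\ref{lem:grad-bound}.

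There is nothing to compare against here: the paper does not give its own proof of this lemma but simply cites \citet{cai-et-al24performative-control}, Lemma~B.6. Your argument is precisely the kind of self-contained derivation that citation stands in for. One minor remark: the lemma as stated in the paper lists the dependence of $L$ as ``$H,W,\zeta,d,\kappa$'' (inherited verbatim from the single-agent reference), whereas in the present multi-agent setting your computation correctly shows that $L$ also picks up polynomial dependence on $N$, $\bar{\gamma}^{-1}$, and $\max_j\|B_j\|$. This is consistent with how the paper uses the lemma downstream---Theorem~\ref{thm:time-var-potential} explicitly allows polynomial dependence on $N$---so your more careful accounting is a feature, not a discrepancy.
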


Using the smoothness of the loss function~$l_t$ (see Lemma~\ref{lem:smoothness-l_t}) which plays the role of a (time-varying) potential function, we have 
\begin{equation}
\label{eq:smoothness-lt-taylor}
\ell_t(M_{t+1}) \leq \ell_t(M_t) + \langle \nabla_{M} \ell_t(M_t), M_{t+1} - M_t \rangle + \frac{L}{2} \|M_{t+1} - M_t\|^2\,.
\end{equation}

Define now the product set $\mathcal{M} := \prod_{i=1}^N \mathcal{M}_i$ which is the space of joint policy parameters. Observe that for any $M = (M_1, \cdots, M_N) \in \mathcal{M},$ we have 
\begin{equation}
\Pi_{\calM}(M) := (\Pi_{\calM_1}(M_1), \cdots, \Pi_{\calM_N}(M_N))\,.
\end{equation}
Given the potential structure of the game, observe in addition that 
\begin{align}
    \nabla_M \ell_t(M_t) &= \left[ \nabla_i \ell_t^i(M^{t}) \right]_{i=1, \cdots, N} \;, \\ 
    \ell_t^i &= \ell_t \;,\\
    \text{and}\quad 
    M_t &= \left[ M_{i,t} \right]_{i = 1, \cdots, N} \;,
\end{align} where we recall that $\nabla_i \ell_t^i$ denotes the gradient of $\ell_t^i$ w.r.t. its variable~$M_i\,.$ 
As a consequence, the update rules of all the players in Algorithm~\ref{algo:MAGPC} can be compactly written as follows: 
\begin{equation}
\label{eq:update-rule-magpc}
M_{t+1} = \Pi_{\calM}(M_t - \eta \nabla_{M} l_t(M_t))\,, 
\end{equation}
where $M_{t+1} = \left[ M_{i,t+1} \right]_{i = 1, \cdots, N}\,.$
Using the characterization of the projection operator, we have: 
\begin{equation}
\forall M \in \mathcal{M}, \quad \langle M - M_{t+1}, M_t - \eta \nabla_{M} l_t(M_t) - M_{t+1} \rangle \leq 0\,.
\end{equation}

Setting $M= M_t$ and rearranging the inequality gives: 
\begin{equation}
\label{eq:proj-ineq-grad-desc}
\langle \nabla_M \ell_t(M_t), M_{t+1} - M_t \rangle \leq - \frac{1}{\eta} \|M_{t+1} - M_t\|^2\,.
\end{equation}
It follows from injecting~\eqref{eq:proj-ineq-grad-desc} into \eqref{eq:smoothness-lt-taylor} that 
\begin{align}
\ell_t(M_{t+1}) 
&\leq  \ell_t(M_t) + \left(\frac{L}{2} - \frac{1}{\eta} \right) \sum_{i=1}^N \|M_{i,t+1} - M_{i,t}\|^2\,.
\end{align}
Setting $\eta = 1/L$, rearranging and summing up the above inequality yields the desired result, namely for all $t \geq 1$, 
\begin{equation}
\sum_{t=1}^T \sum_{i=1}^N \|M_{i,t+1} - M_{i,t}\|^2 \leq 2 \eta \sum_{t=1}^T \ell_t(M_t) - \ell_t(M_{t+1})\,.
\end{equation}

\subsection{Proof of Proposition~\ref{prop:sum-delta-loss-bound}}
 
First, we decompose the sum of difference of losses as follows:
\begin{align}
\label{eq:decomp-sum-t-loss-diff}
\sum_{t=1}^T \ell_t(M_t) - \ell_t(M_{t+1}) 
&=  \sum_{t=1}^T \ell_t(M_t) - \ell_{t+1}(M_{t+1}) + \sum_{t=1}^T \ell_{t+1}(M_{t+1}) - \ell_t(M_{t+1})\nonumber\\
&= \ell_1(M_1) - \ell_{T+1}(M_{t+1})+ \sum_{t=1}^T \ell_{t+1}(M_{t+1}) - \ell_t(M_{t+1})\nonumber\\
&\leq \ell_1(M_1) - c_{\text{inf}} + \sum_{t=1}^T \ell_{t+1}(M_{t+1}) - \ell_t(M_{t+1})\,,
\end{align}
where the second identity follows from simplifying the telescoping sum and the last inequality uses our uniform lower bound assumption on the cost function.

Now we control each term of the last sum above. Recall that for any~$M \in \prod_{i=1}^N \mathcal{M}_i\,,$ 
\begin{equation}
\ell_t(M) := c_t(y_t^{K}(M), v_t^{i,K_i}(M_i))\,,
\end{equation}
where $K := (K_i, K_{-i})$ and $y_t^{K}(M), v_t^{i,K_i}(M_i)$ are the counterfactual state and action induced by the \eqref{eq:dac-i} policy with the matrix~$K$ and the policy parameters~$M$ as previously defined.  

We start with the following decomposition:
\begin{multline}
\ell_{t+1}(M_{t+1}) - \ell_t(M_{t+1}) = c_{t+1}(y_{t+1}^{K}(M_{t+1})),v_{t+1}^{i,K_i}(M_{i,t+1})) - c_t(y_{t+1}^{K}(M_{t+1})),v_{t+1}^{i,K_i}(M_{i,t+1}))\\ 
+ c_t(y_{t+1}^{K}(M_{t+1})),v_{t+1}^{i,K_i}(M_{i,t+1})) - c_t(y_t^{K}(M_{t+1})),v_t^{i,K_i}(M_{i,t+1}))\,.
\end{multline}
For the first term, we have 
\begin{equation}
\label{eq:var-cost-term}
c_{t+1}(y_{t+1}^{K}(M_{t+1})),v_{t+1}^{i,K_i}(M_{i,t+1})) - c_t(y_{t+1}^{K}(M_{t+1})),v_{t+1}^{i,K_i}(M_{i,t+1})) \leq 
\underset{\|x\|, \|u\| \leq D}{\max} c_{t+1}(x,u) - c_t(x,u)\,.
\end{equation}
For the second term, we use Assumption~\ref{as:convexity} to write 
\begin{multline}
\label{eq:dif-cost-lip}
c_t(y_{t+1}^{K}(M_{t+1}),v_{t+1}^{i,K_i}(M_{i,t+1})) - c_t(y_t^{K}(M_{t+1}),v_t^{i,K_i}(M_{i,t+1})) \\
\leq G D \cdot( \|y_{t+1}^{K}(M_{t+1}) - y_t^{K}(M_{t+1})\|  + \|v_{t+1}^{i,K_i}(M_{i,t+1}) - v_t^{i,K_i}(M_{i,t+1})\| )\,.
\end{multline}
Define the following convenient notations for the counterfactual state and control differences for the rest of this proof: 
\begin{align}
\Delta_{t+1}^y &:= y_{t+1}^{K}(M_{t+1}) - y_t^{K}(M_{t+1})\,,\nonumber\\
\Delta_{t+1}^v &:= v_{t+1}^{i,K_i}(M_{i,t+1}) - v_t^{i,K_i}(M_{i,t+1})\,. 
\end{align}
Using these notations together with \eqref{eq:dif-cost-lip} and \eqref{eq:var-cost-term} in \eqref{eq:decomp-sum-t-loss-diff}, it follows that: 
\begin{equation}
\label{eq:sum-diff-losses-interm}
\sum_{t=1}^T \ell_t(M_t) - \ell_t(M_{t+1}) \leq \ell_1(M_1) - c_{\text{inf}} + \sum_{t=1}^T \underset{\|x\|, \|u\| \leq D}{\max} c_{t+1}(x,u) - c_t(x,u) + GD \sum_{t=1}^T  ( \|\Delta_{t+1}^y\| + \|\Delta_{t+1}^v\|)\,.
\end{equation}

It remains to bound $\sum_{t=1}^T \|\Delta_{t+1}^y\| + \|\Delta_{t+1}^v\|$ to conclude the proof of Proposition~\ref{prop:sum-delta-loss-bound}. We upper bound each one of the terms separately starting with the first one ($\sum_{t=1}^T \|\Delta_{t+1}^y\|$) which will be useful for bounding the second one ($\sum_{t=1}^T \|\Delta_{t+1}^v\|$). 

\noindent\textbf{Bound of $\sum_{t=1}^T \|\Delta_{t+1}^y\|$.} We split the sum into two sums by isolating the first burn-in period of time length~$2H + 1$ for $T \geq 2H+1$, 
\begin{equation}
\sum_{t=1}^T \|\Delta_{t+1}^y\| = \sum_{t=1}^{2H} \|\Delta_{t+1}^y\| + \sum_{t= 2H+1}^T \|\Delta_{t+1}^y\|\,.
\end{equation}
The first sum can be bounded as follows using the boundedness of the counterfactual states by~$D$, 
\begin{equation}
\label{eq:burn-in-delta-y}
 \sum_{t=1}^{2H} \|\Delta_{t+1}^y\| \leq 2 H D\,.
\end{equation}
The second sum requires a special treatment using the expression of the evolution of the counterfactual state involving the state transfer matrix which gives: 
\begin{align}
\Delta_{t+1}^y &= \sum_{l=0}^{2H} \bar{\Psi}_{t+1,l}^H(M_{t+1})\, \xi_{t-l}\,,\quad  \xi_{t-l} := w_{t+1-l} - w_{t-l}\,,\\
\bar{\Psi}_{t+1,l}^H(M_{t+1}) &:= \bar{A}_{K}^l \mathbf{1}_{l\leq H} + \sum_{k=0}^{H} \bar{A}_{K}^k \sum_{i=1}^N B_i M_{i,t+1-k}^{[l-k-1]} \mathbf{1}_{l-k \in [1,H]}\,,
\end{align}
where the last transfer matrix was previously introduced in \eqref{eq:notations-psi-bar} and the first identity follows from using Proposition~\ref{prop:state-evol}. 
For $t \geq 2H+1,$ we have 
\begin{align}
\sum_{l=0}^{2H} \bar{\Psi}_{t+1,l}^H(M_{t+1})\, \xi_{t-l} 
&= \sum_{l=0}^{H} \bar{A}_{K}^l \xi_{t-l} + \sum_{l=0}^{2H} \sum_{k=0}^{H} \bar{A}_{K}^k \sum_{i=1}^N B_i M_{i,t+1-k}^{[l-k-1]} \mathbf{1}_{l-k \in [1,H]} \xi_{t-l}\\
&= \sum_{l=0}^{H} \bar{A}_{K}^l \xi_{t-l} + \sum_{l=0}^{2H} \sum_{p=1}^{l} \bar{A}_{K}^{l-p} \sum_{i=1}^N B_i M_{i,t+1-k}^{[p-1]} \xi_{t-l}\,, 
\end{align}
where the last identity follows from a change of index~$p = l-k$ and the fact that $p \in [1:H], k \geq 0\,.$ 
Using now $(\bar{\kappa},\bar{\gamma})$-strong stability together with the bound on matrices~$M_{i,t+1-k}^{[p]}$ specified by the projection sets~$\mathcal{M}_i$ (see Algorithm~\ref{algo:MAGPC}), we obtain 
\begin{align}
\label{eq:bound-sum-delta-y}
\sum_{t= 2H+1}^T \|\Delta_{t+1}^y\| 
&\leq \sum_{t= 2H+1}^T \sum_{l=0}^H \bar{\kappa} (1- \bar{\gamma})^l \|\xi_{t-l}\| + \sum_{t= 2H+1}^T \sum_{l=0}^{2H} \sum_{p=1}^{l} \bar{\kappa} (1-\bar{\gamma})^{l-p} \sum_{i=1}^N \|B_i\| 2 \bar{\kappa}^2 (1-\bar{\gamma})^p \|\xi_{t-l}\|\,\nonumber\\
&\leq \sum_{t= 2H+1}^T \sum_{l=0}^H \bar{\kappa} (1- \bar{\gamma})^l \|\xi_{t-l}\| 
+  \left(2 \bar{\kappa}^3 \sum_{i=1}^N \|B_i\| \right) \sum_{t= 2H+1}^T \sum_{l=0}^{2H} l (1-\bar{\gamma})^{l} \|\xi_{t-l}\|\, \nonumber\\
&\leq \sum_{t= 2H+1}^T \sum_{l=0}^H \bar{\kappa} (1- \bar{\gamma})^l \|\xi_{t-l}\| 
+  \left(2 \bar{\kappa}^3 \sum_{i=1}^N \|B_i\| \right) (2H+1) \sum_{t= 2H+1}^T \sum_{l=0}^{2H} (1-\bar{\gamma})^{l} \|\xi_{t-l}\|\,\nonumber\\
&= \left( \bar{\kappa} + 2 (2H+1) \bar{\kappa}^3 \sum_{i=1}^N \|B_i\| \right) \sum_{t= 2H+1}^T \sum_{l=0}^H (1- \bar{\gamma})^l \|\xi_{t-l}\|\nonumber\\ 
&= \left( \bar{\kappa} + 2 (2H+1) \bar{\kappa}^3 \sum_{i=1}^N \|B_i\| \right) \sum_{s= H+1}^T \sum_{l=0}^H (1- \bar{\gamma})^l \|\xi_{s}\|\nonumber\\ 
&\leq \frac{\bar{\kappa} + 2 (2H+1) \bar{\kappa}^3 \sum_{i=1}^N \|B_i\|}{\bar{\gamma}} \sum_{s= H+1}^T \|\xi_{s}\|\,,
\end{align}
where the last equality follows from re-indexing the sum ($s = t-l$) and using $2H+1 \leq t \leq T$ and $0 \leq l \leq H\,.$ 
In conclusion, we obtain by combining \eqref{eq:bound-sum-delta-y} and \eqref{eq:burn-in-delta-y} that
\begin{equation}
\label{eq:final-bound-sum-delta-y}
\sum_{t=1}^T \|\Delta_{t+1}^y\| \leq 2 HD + \frac{\bar{\kappa} + 2 (2H+1) \bar{\kappa}^3 \sum_{i=1}^N \|B_i\|}{\bar{\gamma}} \sum_{s= H+1}^T \|w_{s+1} - w_s\|\,.
\end{equation}

\noindent\textbf{Bound of $\sum_{t=1}^T \|\Delta_{t+1}^v\|$.} For this term, we use the definition of the counterfactual state to obtain for every $t \geq H$: 
\begin{align}
\|\Delta_{t+1}^v\| 
&= \left\|K_i \Delta_{t+1}^y + \sum_{p=1}^H M_{i,t+1}^{[p-1]} (w_{t+1-p} - w_{t-p}) \right\| \nonumber\\
&\leq \bar{\kappa} \|\Delta_{t+1}^y\| + \sum_{p=1}^H \bar{\kappa} (1-\bar{\gamma})^p \|w_{t+1-p} - w_{t-p}\|\,.
\end{align}
Therefore summing up these inequalities for $2H+1 \leq t \leq T$ yields: 
\begin{align}
\label{eq:sum-delta-v-from-H}
\sum_{t=2H+1}^{T} \|\Delta_{t+1}^v\| &\leq \bar{\kappa} \sum_{t=2H+1}^{T} \|\Delta_{t+1}^y\| + \bar{\kappa} \sum_{t=2H+1}^{T} \sum_{p=1}^H (1-\bar{\gamma})^p \|w_{t+1-p} - w_{t-p}\|\, \nonumber\\ 
&= \bar{\kappa} \sum_{t=2H+1}^{T} \|\Delta_{t+1}^y\| + \bar{\kappa} \sum_{s=H+1}^{T-1} \sum_{p=1}^H (1-\bar{\gamma})^p \|w_{s+1} - w_{s}\|\, \nonumber\\ 
&\leq \bar{\kappa} \sum_{t=2H+1}^{T} \|\Delta_{t+1}^y\| + \frac{\bar{\kappa}}{\bar{\gamma}} \sum_{s=H+1}^{T-1}  \|w_{s+1} - w_{s}\|\,. 
\end{align}
Similarly to \eqref{eq:burn-in-delta-y}, using boundedness of the counterfactual actions, we get
\begin{equation}
\label{eq:burn-in-delta-v}
\sum_{t=1}^{2H} \|\Delta_{t+1}^v\| \leq 2 H D\,.
\end{equation}
Combining \eqref{eq:burn-in-delta-v} with \eqref{eq:sum-delta-v-from-H} and \eqref{eq:bound-sum-delta-y}, we obtain
\begin{equation}
\label{eq:final-bound-sum-delta-v}
\sum_{t=1}^{T} \|\Delta_{t+1}^v\| \leq 2HD + \left( \frac{\bar{\kappa}^2 + 2 (2H+1) \bar{\kappa}^4 \sum_{i=1}^N \|B_i\|}{\bar{\gamma}} +  \frac{\bar{\kappa}}{\bar{\gamma}} \right) \sum_{s=H+1}^{T}  \|w_{s+1} - w_{s}\|\,. 
\end{equation}

Finally to conclude the proof of Proposition~\ref{prop:sum-delta-loss-bound}, we inject \eqref{eq:final-bound-sum-delta-v} and \eqref{eq:final-bound-sum-delta-y} into \eqref{eq:sum-diff-losses-interm} to obtain the desired result: 
\begin{multline}
\sum_{t=1}^T l_t(M_t) - l_t(M_{t+1}) \leq l_1(M_1) - c_{\text{inf}} + \sum_{t=1}^T \underset{\|x\|, \|u\| \leq D}{\max} c_{t+1}(x,u) - c_t(x,u)\\
+ GD \left( 4HD + \frac{\bar{\kappa} + 2\bar{\kappa}^2 + 4 (2H+1) \bar{\kappa}^4 \sum_{i=1}^N \|B_i\|}{\bar{\gamma}} \right) \sum_{s=H+1}^{T}  \|w_{s+1} - w_{s}\|\,.
\end{multline}
This concludes the proof of Proposition~\ref{prop:sum-delta-loss-bound}. We have shown that 
\begin{equation}
\sum_{t=1}^T l_t(M_t) - l_t(M_{t+1}) = \mathcal{O}\left(l_1(M_1) - c_{\inf}  + \sum_{t=1}^T \Delta_{c_t} + \sum_{t=1}^T \|w_{t+1} - w_{t}\|\right)\,, 
\label{eq:eqgap}
\end{equation}
where $\Delta_{c_t} := \max_{\|x\|, \|u\| \leq D} \{c_{t+1}(x,u) - c_t(x,u)\}$ for every $t$ and the~$\mathcal{O}(\cdot)$ notation only hides polynomial dependence in the problem parameters~$N, H, W, \bar{\kappa}, \bar{\gamma}^{-1},\max_i \|B_i\|$ where $D$ also depends polynomially on the same constants.

\subsection{Proof of Proposition~\ref{prop:approx-nash-gap}} 
\label{subsec:proof-prop-approx-nash-gap}

The proof proceeds in several steps as follows: 

\noindent\textbf{(i) Convexity.} 
Using convexity of the loss function $l_t^i$ w.r.t. $M_i$ (see Lemma~\ref{lem:convexity-loss-i}), we have for every player $i \in [N]$ and every time step $t \geq 1$, 
\begin{align}
\label{eq:convexity}
\ell_t^i(M_i, M_{-i,t}) - \ell_t^i(M_t) 
                &\geq \langle \nabla_i \ell_t^i(M_t), M_i - M_{i,t}\rangle \nonumber\\ 
                &= \langle \nabla_i \ell_t^i(M_t), M_i - M_{i,t+1}\rangle + \langle \nabla_i \ell_t^i(M_t),  M_{i,t+1} - M_{i,t}\rangle\,.
\end{align}

\noindent\textbf{(ii) Lower-bound of the first inner product in \eqref{eq:convexity}.} 
Recall now the gradient update rule of Algorithm~\ref{algo:MAGPC}:
\begin{equation}
M_{i,t+1} = \text{Proj}_{\mathcal{M}_i}\left( M_{i,t} - \eta \nabla_i \ell_t^i(M_t) \right)\,.
\end{equation}
Using the characterization of the projection yields: 
\begin{equation}
\forall M_i \in \mathcal{M}_i, \langle M_i - M_{i,t+1}, M_{i,t} - M_{i,t+1} - \eta \nabla_i \ell_t^i(M_t) \rangle \leq 0\,.
\end{equation}
Rearranging this inequality and using the Cauchy-Schwarz inequality, we obtain: 
\begin{align}
\label{eq:ineq2-inner-prod1}
\langle M_i - M_{i,t+1}, \nabla_i \ell_t^i(M_t) \rangle 
&\geq \frac{1}{\eta} \langle M_i - M_{i,t+1}, M_{i,t} - M_{i,t+1}  \rangle \nonumber\\
&\geq - \frac{1}{\eta} \|M_i - M_{i,t+1}\| \cdot \|M_{i,t} - M_{i,t+1}\| \nonumber\\
&\geq - \frac{\text{diam}(\mathcal{M}_i)}{\eta} \|M_{i,t} - M_{i,t+1}\|\,, 
\end{align}
where $\text{diam}(\mathcal{M}_i) := \max_{M, M' \in \mathcal{M}_i} \|M' - M\|\,.$

\noindent\textbf{(iii) Lower-bound of the second inner product in \eqref{eq:convexity}.} 
Using again the Cauchy-Schwarz inequality gives 
\begin{equation}
\langle \nabla_i \ell_t^i(M_t),  M_{i,t+1} - M_{i,t}\rangle \geq - \|\nabla_i \ell_t^i(M_t)\| \cdot \|M_{i,t+1} - M_{i,t}\|\,.
\end{equation}

Then, using the boundedness of the gradients following from Assumption~\ref{as:convexity}, there exists a constant $G D > 0$ (independent of $t$ and $i$) s.t. $\|\nabla_i l_t^i(M_t)\| \leq G D\,.$ Therefore, we obtain 
\begin{equation}
\label{eq:ineq3-inner-prod2}
\langle \nabla_i \ell_t^i(M_t),  M_{i,t+1} - M_{i,t}\rangle \geq - G D \|M_{i,t+1} - M_{i,t}\|\,.
\end{equation}

\noindent\textbf{(iv) Combining all the steps.} Using \eqref{eq:ineq2-inner-prod1} and \eqref{eq:ineq3-inner-prod2} in \eqref{eq:convexity}, we have for all $i \in [N], M_i \in \mathcal{M}_i$, and $t \geq 1$
\begin{equation}
\ell_t^i(M_i, M_{-i,t}) - \ell_t^i(M_t)  \geq - \left( \frac{\text{diam}(\mathcal{M}_i)}{\eta} + G D \right) \|M_{i,t+1} - M_{i,t}\|\,, 
\end{equation}
where $\text{diam}(\mathcal{M}_i) := \max_{M, M' \in \mathcal{M}_i} \|M' - M\|$ and $G, D$ are the constants defined in Assumption~\ref{as:convexity}. This concludes the proof of Proposition~\ref{prop:approx-nash-gap}.

\subsection{Proof of Lemma~\ref{lem:convexity-loss-i}}
\label{apdx:proof-lemma-cvx-loss-i}

Recall that the loss function~$\ell_t^i$ is defined for every~$M_i \in \mathcal{M}_i$ by 
\begin{equation}
\ell_t^i(M_i) = c_t^i(y_t^{i,K_i}(M_i),v_{t}^{i,K_i}(M_i))\,,
\end{equation}
where the counterfactual idealized state~$y_t^{i,K_i}(M_i)$ and action~$v_{t}^{i,K_i}(M_i)$ are defined in section~\ref{subsec:notation-counterfa-state-action}.  

By Assumption~\ref{as:convexity}, the loss function~$c_t^i$ is convex w.r.t. both its variables. 
It suffices to show that~$y_t^{i,K_i}(M_i)$ and~$v_{t}^{i,K_i}(M_i)$ are both affine in $M_i = M_i^{[1:H]}$ to obtain the desired result as the composition of a convex function and an affine function is also convex. This is clearly the case given the state evolution unfolding using the transfer matrix, see section~\ref{subsec:proof-prop-state-evol}, \eqref{eq:notations-psi}-\eqref{eq:notations-psi-bar} for the transfer matrices which are linear in the policy parameter~$M_i$ of agent~$i$ and \eqref{eq:state-evol-y}-\eqref{eq:action-vtiKi} for the unrolled expressions of~$y_t^{i,K_i}(M_i)$ and~$v_{t}^{i,K_i}(M_i)$. Note that this result holds in both cases where other agents but~$i$ use either arbitrary control inputs or DAC policies throughout time.  

\section{Tools from Online Convex Optimization}
\label{sec:oco}

\subsection{Online convex optimization with memory}
\label{sec:oco-memory}

\begin{algorithm}[H]
\begin{algorithmic}[1] 
\STATE {\textbf{Input:}}  step size $\eta$, 
loss functions $\{\ell_t\}_{t=1}^T$. 

\STATE Initialize $x_0, \dots, x_{H-1} \in \calK$
arbitrarily. 
\FOR{$t$ = $H \ldots T$}
        \STATE Play $x_t \in \calK$, suffer loss $\ell_t(x_{t-H}, \dots, x_t)$.
        \STATE Set 
        $x_{t+1} = \Pi_\calK\big( x_t - \eta \nabla \ell_t( x_t, \dots, x_t)\big)$. 
\ENDFOR
 \caption{Online Gradient Descent with Memory}
 \label{algo:ogdm}
\end{algorithmic}
\end{algorithm}

\begin{theorem}[\citet{anava-hazan-mannor15online-learning-memory}]
    \label{thm:ogdm-regret}
    Let $\{\ell_t\}_{t=1}^T$ be a sequence of 
    loss functions where $\ell_t: \calX^{H+1} \to \R$
    for each $t \in [T]$. Moreover, suppose the following hold:
    \begin{enumerate}[leftmargin=2em]
        \item (Coordinate-wise Lipschitzness):
        There exists~$L > 0$ s.t. for any $x_1, \dots, x_H, \tilde x_j \in \calX$,
        \begin{equation*}
        \big|\ell_t(x_1, \dots, x_j, \dots, x_H) 
        - \ell_t(x_1, \dots, \tilde x_j, \dots, x_H)\big| 
        \;\le\; 
        L \|x_j - \tilde x_j\|\,.
        \end{equation*}
        \item (Bounded gradients)
        There exists~$G_0 > 0$ s.t. for all $x \in \calX$ and $t \in [T]$,
        $\|\nabla f_t(x, \dots, x) \| \le G_0$.
        \item (Bounded diameter) 
        There exists~$D_0 > 0$ s.t. for all $x, y \in \calX$, 
        $\|x-y\|\le D_0$. 
    \end{enumerate}
    Then running Algorithm~\ref{algo:ogdm} for $T$
    iterations with any positive stepsize~$\eta$ yields: 
    \begin{equation}
    \sum_{t=H}^T \ell_t(x_{t-H}, \dots, x_t)
    - \min_{x \in \calX} \sum_{t=H}^T \ell_t(x, \dots, x)
    \;\le\;
    \frac{D_0^2}{\eta} + (G_0^2 + L H^2 G_0) \eta T
     \;.
    \end{equation}
    Running Algorithm~\ref{algo:ogdm} for $T$
    iterations with stepsize $\eta := D_0/\sqrt{G_0(G_0+LH^2)T}$ guarantees:
    \begin{equation*}
    \sum_{t=H}^T \ell_t(x_{t-H}, \dots, x_t)
    - \min_{x \in \calX} \sum_{t=H}^T \ell_t(x, \dots, x)
    \;\le\;
    3D_0 \sqrt{G_0(G_0 + LH^2)T}\;.
    \end{equation*}
\end{theorem}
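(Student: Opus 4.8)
The plan is to reduce the online-learning-with-memory regret to the regret of \emph{standard} online gradient descent (OGD) on the ``unary'' (diagonal) losses, paying an additive term that controls how much the memory loss along the played trajectory differs from its unary counterpart. Concretely, I would define $\tilde\ell_t(x):=\ell_t(x,\dots,x)$, the loss that would have been incurred had the iterate been frozen at a single point $x$ across the entire memory window. The first observation is that the update in Algorithm~\ref{algo:ogdm} is \emph{exactly} projected OGD on the sequence $\{\tilde\ell_t\}$, since by the chain rule the diagonal gradient satisfies $\nabla\ell_t(x_t,\dots,x_t)=\nabla\tilde\ell_t(x_t)$. I would then split the quantity to be bounded as
\begin{equation*}
\sum_{t=H}^T \ell_t(x_{t-H},\dots,x_t)-\min_{x\in\calK}\sum_{t=H}^T\tilde\ell_t(x)
= \underbrace{\sum_{t=H}^T\big(\ell_t(x_{t-H},\dots,x_t)-\tilde\ell_t(x_t)\big)}_{(\mathrm{I})}
+ \underbrace{\Big(\sum_{t=H}^T\tilde\ell_t(x_t)-\min_{x\in\calK}\sum_{t=H}^T\tilde\ell_t(x)\Big)}_{(\mathrm{II})},
\end{equation*}
noting that $\min_{x}\sum_t\tilde\ell_t(x)=\min_x\sum_t\ell_t(x,\dots,x)$ is precisely the comparator in the theorem.

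The core step is bounding the memory-deviation term $(\mathrm{I})$. The key fact is that projected OGD iterates move slowly: because the projection $\Pi_\calK$ is nonexpansive and $\|\nabla\tilde\ell_t(x_t)\|\le G_0$ by Assumption~2, we have $\|x_{t+1}-x_t\|=\|\Pi_\calK(x_t-\eta\nabla\tilde\ell_t(x_t))-\Pi_\calK(x_t)\|\le\eta\,G_0$, and hence $\|x_{t-j}-x_t\|\le j\,\eta\,G_0$. Applying coordinate-wise Lipschitzness (Assumption~1) and telescoping one memory slot at a time from $(x_{t-H},\dots,x_t)$ to $(x_t,\dots,x_t)$ gives
\begin{equation*}
\big|\ell_t(x_{t-H},\dots,x_t)-\tilde\ell_t(x_t)\big|
\le L\sum_{j=0}^{H}\|x_{t-j}-x_t\|
\le L\,\eta\,G_0\sum_{j=0}^{H}j
\le L\,H^2\,G_0\,\eta,
\end{equation*}
so that summing yields $(\mathrm{I})\le L\,H^2\,G_0\,\eta\,T$ (the $H^2$ arising from $\sum_{j=0}^H j$).

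For term $(\mathrm{II})$ I would invoke the textbook OGD guarantee: with domain diameter $D_0$ (Assumption~3) and gradients bounded by $G_0$ (Assumption~2), projected OGD with step size $\eta$ obeys $(\mathrm{II})\le \tfrac{D_0^2}{\eta}+\eta\,G_0^2\,T$, absorbing the usual factors of $\tfrac12$ into the constant. Adding the two bounds gives the first claim, $\tfrac{D_0^2}{\eta}+(G_0^2+LH^2G_0)\eta T$. The second claim follows by optimizing over $\eta$: the minimizer of $\tfrac{D_0^2}{\eta}+(G_0^2+LH^2G_0)\eta T$ is $\eta=D_0/\sqrt{G_0(G_0+LH^2)T}$, producing value $2D_0\sqrt{G_0(G_0+LH^2)T}$, comfortably within the stated $3D_0\sqrt{G_0(G_0+LH^2)T}$.

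The only genuinely delicate bookkeeping is in $(\mathrm{I})$. One must correctly identify the update direction as the \emph{full} diagonal gradient $\nabla\tilde\ell_t$ rather than a single partial derivative, so that the slow-movement estimate $\|x_{t+1}-x_t\|\le\eta G_0$ is legitimate, and one must handle the earliest summands (around $t=H$) in which the slow-movement bound does not yet apply because $x_0,\dots,x_{H-1}$ are arbitrary initializations; these contribute at most $O(L H^2 D_0)$ via the trivial diameter bound $\|x_{t-j}-x_t\|\le D_0$, a $T$-independent constant that is absorbed into the stated constants. Everything else is a routine assembly of the standard OGD regret with the memory-deviation estimate.
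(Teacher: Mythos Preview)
The paper does not give its own proof of this theorem: it is stated as an external tool, attributed to \citet{anava-hazan-mannor15online-learning-memory}, and only followed by some remarks about how the constants $D_0,G_0,L$ will be instantiated in the multi-agent setting. Your argument is correct and is precisely the standard proof from that reference: decompose into the OGD regret on the unary losses $\tilde\ell_t(x)=\ell_t(x,\dots,x)$ plus a memory-deviation term, bound the latter using coordinate-wise Lipschitzness together with the slow-movement estimate $\|x_{t+1}-x_t\|\le\eta G_0$, and optimize over $\eta$. There is nothing to compare against in the paper itself.
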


We provide a few remarks regarding this result and its use in our work: 
\begin{itemize}[
    leftmargin=2em,
]
\item This result has been used in single-agent online control. 

\item Note that we are using here the notations~$D_0, G_0$ to avoid confusion with the constant~$D$ defined in~\eqref{eq:D-def} and $G$ as introduced in Assumption~\ref{as:convexity}-\ref{as:grad-bound}.

\item The specification of the constants $G_0, L$ and the stepsize~$\eta$ in our setting will be important to elucidate the dependence of our final regret bound on the number~$N$ of agents involved in our multi-agent setting.
\end{itemize}

\subsection{Time regret decomposition}

\begin{lemma}
\label{lem:time-regret-decomp}
For every agent~$i \in [N],$ every horizon~$H \geq 1$ and every time~$T \geq H$, we have: 
\begin{equation}
\reg^{T}_i(\mathcal{A}_i, \{u_t^{-i}\}, \Pi_i)  
\leq  \reg^{0:H-1}_i(\mathcal{A}_i, \{u_t^{-i}\}, \Pi_i) 
+ \reg^{H:T}_i(\mathcal{A}_i, \{u_t^{-i}\}, \Pi_i)\,,
\end{equation}
where we recall that $\reg^{H:T}_i(\mathcal{A}_i, \{u_t^{-i}\},\Pi_i)$ is defined in \eqref{eq:reg-H-T} and $\{u_t^{-i}\}$ is an arbitrary sequence.  
\end{lemma}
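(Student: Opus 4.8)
The plan is to prove the decomposition by splitting both the algorithm's realized cost and the best comparator's cost over the two disjoint windows $[0,H-1]$ and $[H,T]$, and then recombining. First I would fix the agent $i$, the control sequence $\{u_t^{-i}\}$, and a single disturbance realization $w_{1:T}$. The structural fact I would rely on is that in the definition \eqref{eq:reg-H-T}, every comparator $\pi^i \in \Pi_i$ is executed from time $0$ against the fixed sequences $\{u_t^{-i}\}$ and $w_{1:T}$, so its counterfactual trajectory $(x_t^{\pi^i},u_t^{\pi^i})$ is one and the same sequence regardless of which regret window we consider; only the range of summation changes between $\reg^{0:H-1}_i$, $\reg^{H:T}_i$, and $\reg^{T}_i$ (recall $\reg^{T}_i = \reg^{0:T}_i$, and $T \ge H \ge 1$ guarantees both windows are well defined).

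With this in hand, the algorithm term splits exactly, since $\sum_{t=0}^T c_t^i(x_t,u_t^i) = \sum_{t=0}^{H-1} c_t^i(x_t,u_t^i) + \sum_{t=H}^T c_t^i(x_t,u_t^i)$ holds trivially for the realized state-control pairs. For the comparator term I would write, for each fixed $\pi^i$, the analogous identity $\sum_{t=0}^T c_t^i(x_t^{\pi^i},u_t^{\pi^i}) = \sum_{t=0}^{H-1} c_t^i(x_t^{\pi^i},u_t^{\pi^i}) + \sum_{t=H}^T c_t^i(x_t^{\pi^i},u_t^{\pi^i})$ and then invoke superadditivity of the minimum, $\min_{\pi^i}(f(\pi^i)+g(\pi^i)) \ge \min_{\pi^i} f(\pi^i) + \min_{\pi^i} g(\pi^i)$, to obtain $\min_{\pi^i}\sum_{t=0}^T c_t^i(x_t^{\pi^i},u_t^{\pi^i}) \ge \min_{\pi^i}\sum_{t=0}^{H-1}(\cdots) + \min_{\pi^i}\sum_{t=H}^T(\cdots)$. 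Subtracting this lower bound on the comparator cost from the exactly-split algorithm cost gives, for the fixed $w_{1:T}$, the desired inequality among the three windowed regret quantities.

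Finally, to recover the statement for the regret of \eqref{eq:reg-def}, which maximizes over $\|w_t\|\le W$, I would take $\bar w$ to be a disturbance attaining the maximum for $\reg^{T}_i$, apply the fixed-$w$ inequality at $w=\bar w$, and then bound each of the two resulting bracketed terms by its own maximum over admissible disturbances, i.e. use subadditivity of the maximum, $\max_w(F(w)+G(w)) \le \max_w F(w) + \max_w G(w)$. I expect the only real subtlety, and the single point where the argument could fail, to be the consistency of the counterfactual trajectories across the two windows in the superadditivity-of-min step: it is valid precisely because every comparator is run from time $0$, so the state-control sequence $(x_t^{\pi^i},u_t^{\pi^i})$ appearing in $\reg^{0:H-1}_i$ and in $\reg^{H:T}_i$ is literally identical and only the summation range differs. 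Had the $[H,T]$ comparator instead been restarted from the realized state $x_H$, the two counterfactual sequences would no longer coincide and the clean split of $\min_{\pi^i}$ would be unavailable; so I would be careful to use the deviate-from-$0$ convention of \eqref{eq:reg-H-T} throughout.
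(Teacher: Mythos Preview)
Your proposal is correct and follows essentially the same approach as the paper: split the realized cost exactly over the two windows, apply superadditivity of the minimum to the comparator term, and combine. You are in fact slightly more careful than the paper's own proof, since you explicitly treat the outer $\max_{w}$ via subadditivity of the maximum (the paper works directly with the inner expression and leaves that step implicit), and your observation about needing the ``deviate-from-$0$'' convention for the counterfactual trajectory is exactly the point that makes the $\min$-split valid.
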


\begin{proof}
From the definition of the regret of agent~$i$, we can write 
\begin{align}
\label{eq:reg-T-i-decomp-H}
\reg^{T}_i(\mathcal{A}_i, \{u_t^{-i}\}, \Pi_i) 
&\;=\; \sum_{t=0}^T c_t^i(x_t, u_t^i) - \underset{\pi^i \in \Pi_i}{\min} \sum_{t=0}^T c_t^i(x_t^{\pi^i}, u_t^{\pi^i}) \nonumber\\
    &\;=\; \sum_{t=0}^{H-1} c^i_t(x_t, u^i_t) + \sum_{t=H}^T c^i_t(x_t, u^i_t) 
    - \underset{\pi^i \in \Pi_i}{\min} \sum_{t=0}^T c_t^i(x_t^{\pi^i}, u_t^{\pi^i})\,.
\end{align}
Now observe that 
\begin{equation}
\label{eq:lwb-min-K}
\underset{\pi^i \in \Pi_i}{\min} \sum_{t=0}^T c_t^i(x_t^{\pi^i}, u_t^{\pi^i}) 
\geq \underset{\pi^i \in \Pi_i}{\min} \sum_{t=0}^{H-1} c_t^i(x_t^{\pi^i}, u_t^{\pi^i}) 
+ \underset{\pi^i \in \Pi_i}{\min} \sum_{t=H}^T c_t^i(x_t^{\pi^i}, u_t^{\pi^i})\,.
\end{equation}
The desired result follows from combining \eqref{eq:reg-T-i-decomp-H} and \eqref{eq:lwb-min-K}. 
\end{proof}

\end{document}